\def\sumphantom{\vphantom{\sum_i^1}}
\def\E{\mathbb{E}}
\def\ipm{\textnormal{IPM}}
\def\mipm{\mbox{\emph{IPM}}}
\def\cC{\mathcal{C}}
\def\cD{\mathcal{D}}
\def\cE{\mathcal{E}}
\def\cF{\mathcal{F}}
\def\cG{\mathcal{G}}
\def\cH{\mathcal{H}}
\def\cL{\mathcal{L}}
\def\cO{\mathcal{O}}
\def\cR{\mathcal{R}}
\def\cS{\mathcal{S}}
\def\cT{\mathcal{T}}
\def\cX{\mathcal{X}}
\def\cY{\mathcal{Y}}
\def\cZ{\mathcal{Z}}
\def\bbE{\mathbb{E}}
\def\bbR{\mathbb{R}}
\def\htau{\hat{\tau}}
\def\hp{\hat{p}}
\def\hR{\hat{R}}
\newcommand\indepm{\protect\mathpalette{\protect\independenT}{\perp}}
\def\independenT#1#2{\mathrel{\rlap{$#1#2$}\mkern2mu{#1#2}}}
\DeclareMathOperator*{\indep}{\indepm}
\newtheorem{thmlem}{Lemma}
\newtheorem{thmcol}{Corollary}
\newtheorem{thmthm}{Theorem}
\newtheorem{thmasmp}{Assumption}
\theoremstyle{definition}
\newtheorem{thmrem}{Remark}
\newtheorem{thmdef}{Definition}
\newtheorem*{rep@theorem}{\rep@title}
\newcommand{\newreptheorem}[2]{%
\newenvironment{rep#1}[1]{%
 \def\rep@title{#2 \ref{##1} (Restated)}%
 \begin{rep@theorem}}%
 {\end{rep@theorem}}}
\DeclareMathOperator*{\argmin}{arg\,min}
\newcommand\encircle[1]{%
  \tikz[baseline=(X.base)]
    \node (X) [draw, shape=circle, inner sep=0] {\strut #1};}
\newcommand{\edit}{}
\newcommand{\blockedit}{}
\title{\LARGE Generalization Bounds and Representation Learning for Estimation of Potential Outcomes and Causal Effects}
\date{\vspace{-3em}}
\author[1]{Fredrik D.~Johansson\thanks{Correspondence to: fredrik.johansson@chalmers.se}}
\author[2]{Uri Shalit}
\author[3]{Nathan Kallus}
\author[4]{\\David Sontag}%
\affil[1]{Chalmers University of Technology}%
\affil[2]{Technion, Israel Institute of Technology}%
\affil[3]{Cornell Tech}%
\affil[4]{Massachusetts Institute of Technology}%
\begin{document}

\maketitle

\begin{abstract}%
Practitioners in diverse fields such as healthcare, economics and education are eager to apply machine learning to improve decision making. The cost and impracticality of performing experiments and a recent monumental increase in electronic record keeping has brought attention to the problem of evaluating decisions based on non-experimental observational data. This is the setting of this work. In particular, we study estimation of individual-level \edit{potential outcomes and} causal effects---such as a single patient's response to alternative medication---from recorded contexts, decisions and outcomes. We give generalization bounds on the error in estimated \edit{outcomes} based on \edit{distributional} distance measures between \edit{re-weighted samples of groups receiving different treatments}. We provide conditions under which our bound is tight and show how it relates to results for unsupervised domain adaptation. Led by our theoretical results, we devise \edit{algorithms which learn representations and weighting functions} that minimize our bound by regularizing the representation's induced treatment group distance, and encourage sharing of information between treatment groups. Finally, an experimental evaluation on real and synthetic data shows the value of our proposed representation architecture and regularization scheme.
\end{abstract}

%%%%%%%%%%%%%%%%%%%%%%%%%%%%%%%%%%%%%%%%%%%%%%%%%%%%%%%%%%%%%%%%%%%%%%%%%%%%%%%%%%%%
%
% INTRODUCTION
%
%%%%%%%%%%%%%%%%%%%%%%%%%%%%%%%%%%%%%%%%%%%%%%%%%%%%%%%%%%%%%%%%%%%%%%%%%%%%%%%%%%%%

\section{Introduction}
\label{sec:introduction}
Evaluating intervention decisions is a key question in many diverse fields including medicine, economics, and education. In medicine, an optimal choice of treatment for a patient in the intensive care unit may mean the difference between life and death. In public policy, job reforms have impact on the unemployment rate and the economy of a nation. To evaluate such interventions,  we must study their \emph{causal effect}---the difference in an outcome of interest under alternative choices of intervention. Since only one option may be carried out at a time, any data to support such evaluations only reveals the outcome of the action taken and never the outcome of the action not taken, which remains an unknown \emph{counterfactual}. To estimate causal effects, we must therefore infer what would have happened had we made another decision, \edit{predicting the \emph{potential outcomes} \citep{rubin2005causal} of unexplored interventions}. Furthermore, to decide on personalized interventions, such as tailoring treatments to patients, we must understand individual-level causal effects, conditioned on the available information on an individual recorded prior to intervention.

In this work, we study estimation of individual-level \edit{potential outcomes and} causal effects from non-experimental, \emph{observational} data \edit{under a machine learning, or risk minimization, perspective}. An observational dataset consists of historical records of interventions, the contexts in which they were made, and the observed outcomes. Our running example is that of patients represented by their medical history, the medication they were prescribed and the outcome of treatment, such as mortality. An individual-level effect measures the causal effect of medication choice, conditioned on what is known about the patient. Finally, though we know which interventions took place, the policy by which interventions were chosen in this data is typically unknown to us.

% Difficulties of observational data
Working with observational data is our best bet when experiments such as randomized controlled trials (RCT) are infeasible, impractical or prohibitively expensive. While cheaper and easier to implement, observational studies come with new, fundamental difficulties.
Perhaps the most challenging of these is \emph{confounding}---influence of variables that are causal of both the intervention and the outcome, and may introduce spurious, non-causal correlations between the two~\citep{pearl2009causality}.
For example, richer patients might both have more access to certain medications and have better outcomes regardless of medication, making such medications appear better than they might be. Similarly, job training might only be given to those motivated enough to seek it. Na\"ive estimates of causal effects may therefore be biased by subsuming the effect of confounding variables on the outcome.
Here, we make the common assumption that confounding variables, such as wealth or motivation in the examples above, have been measured and can be adjusted for in our estimation. This, however, introduces another difficulty, which is contending with the systematic differences in such variables between different treatment groups.
Moreover, if these groups only partially overlap in terms of variables causal of the outcome, consistent estimation of causal effects (estimates that converge asymptotically to the true effect) may not always be guaranteed.

% Average treatment effects
Causal estimation from observational data has been studied extensively in the statistics, econometrics, and computer science literature, \edit{often focused} on \emph{average} effects in a population or on simple models of effect heterogeneity. With sights set on personalization based on rich data, more flexible models are required, and machine learning is more often considered for the task. When we can no longer make strong assumptions such as linearity or low dimensionality, new questions arise: How well will our models generalize? \edit{What should be our criteria for fitting them?} How should we regularize them? What assumptions are necessary for good performance guarantees or asymptotic consistency? What can be said when these assumptions are not met? With this paper, we begin to answer these questions.

We study estimation of both (i) potential outcomes under interventions and (ii) conditional average treatment effects (CATE), by risk minimization \edit{with flexible model classes}. We show that these problems involve generalization of predictions under distributional shift and how such predictions can be improved using sample re-weighting. \edit{In doing so, we draw connections to unsupervised domain adaptation and show that one solution to CATE estimation is to solve two (possibly dependent) domain adaptation problems. In contrast to most theoretical results in observational causal estimation, bounds obtained using the risk minimization approach are applicable also in finite samples and for misspecified classes of hypotheses}. We use distributional measures of distance between treatment groups to give upper bounds on the marginal risk of hypotheses in a given class---risk caused by model bias, sub-optimal sample weighting or lack of treatment group overlap. 

%

% Sample weighting
\edit{Sample weighting lies at the heart of many classical approaches to causal estimation, particularly using the \emph{propensity score}---the probability for a subject to receive treatment under the observed policy, given their characteristics. In practice, the propensity score is typically unknown\footnote{This has some notable exceptions such as in advertising, in which an existing policy for serving ads was designed and known to the advertiser~\citep{swaminathan2015counterfactual,lefortier2016large}.} and replacing it with an estimate is prone to introduce bias. However, biased but well-chosen weights may help control variance in small samples~\citep{swaminathan2015counterfactual,hirano2003efficient}. Our bounds make this trade-off explicit through the measure of treatment group distance: the more uniform (and possibly more biased) the weights, the larger the treatment group distance, but the smaller the variance.}

\edit{Our theoretical results are then extended to give generalization bounds for learned invertible representations of the input space, representations in which the distributional distance may be smaller than the original space; see illustration in Figure~\ref{fig:rep}. Combining representation learning with re-weighing allows both for tight bounds when the sample size is large and treatment groups overlap, making the treatment group density ratio an admissible weight, and for a principled bias-variance trade-off for smaller samples. Notably, these guarantees are valid under appropriate assumptions also when treatment and control groups do not fully overlap.} In line with our theoretical results, we devise representation learning algorithms that minimize \edit{the weighted empirical risk of predicted potential outcomes, regularized by the} distributional distance between treatment groups in the representation space. We give conditions under which our algorithms are consistent estimators of the causal effect. Finally, we evaluate our framework on synthetic and real-world benchmark tasks and demonstrate the value of representation learning, sample weighting and our proposed regularization scheme.

Parts of this work have been published in conference proceedings~\citep{johansson2016learning,shalit2016estimating,johansson2019support}. This manuscript extends these works considerably, foremost by developing  generalization bounds and accompanying algorithms that support re-weighting of the relevant risk and distributional distance. This is a significant conceptual change since it allows to give conditions under which our algorithms lead to consistent estimation. In addition, we provide a longer, more self-contained theoretical exposition and compare it to both older results and recent developments.

%
% NOTATION
%
\subsection*{Notation and terminology}
\label{sec:notation}
Random variables are denoted with capital roman letters $A, B, C, \dots$ and observations thereof with a corresponding indexed lower-case letter $a_i, b_i, c_i,\dots$. The empirical density of a draw of $m$ samples from a density $p$ is denoted $\hat{p}^m$. Unless stated otherwise, all random variables are distributed according to a fixed distribution $p(A, B, ...)$. Expectations over a variable $X$ distributed according to $p(X)$ are denoted $\E_X[\cdot]$, and conditional expectations over $X$ given $Y$ distributed according to $p(X\mid Y)$, $\E_{X\mid Y}[\; \cdot\mid Y=t]$. When expectations are defined w.r.t. a density $q$, different from $p$, the notation $\E_{X\sim q}[\cdot]$ is used.

%%%%%%%%%%%%%%%%%%%%%%%%%%%%%%%%%%%%%%%%%%%%%%%%%%%%%%%%%%%%%%%%%%%%%%%%%%%%%%%%%%%%
%
% BACKGROUND
%
%%%%%%%%%%%%%%%%%%%%%%%%%%%%%%%%%%%%%%%%%%%%%%%%%%%%%%%%%%%%%%%%%%%%%%%%%%%%%%%%%%%%

\section{\edit{Potential outcomes and effects of interventions}}
\label{sec:background}
\edit{We introduce the problem of estimating potential outcomes and effects of interventions from observational data.} Throughout the paper, we adopt the running example of estimating the effect of a medical treatment on a patient. This informs our choice of terminology and notation and serves to give intuition for the mathematical quantities involved. However, the applicability of the theory and algorithms described are in no way limited to this domain.

Using the Neyman-Rubin potential outcome framework \citep{imbens2015causal}, we associate each unit (e.g., patient) $i=1,\dots,m$ with the following \edit{partially observed} random variables, illustrated by samples in Figure~\ref{fig:data}.
\begin{figure}
    \centering
    \includegraphics[width=.95\textwidth]{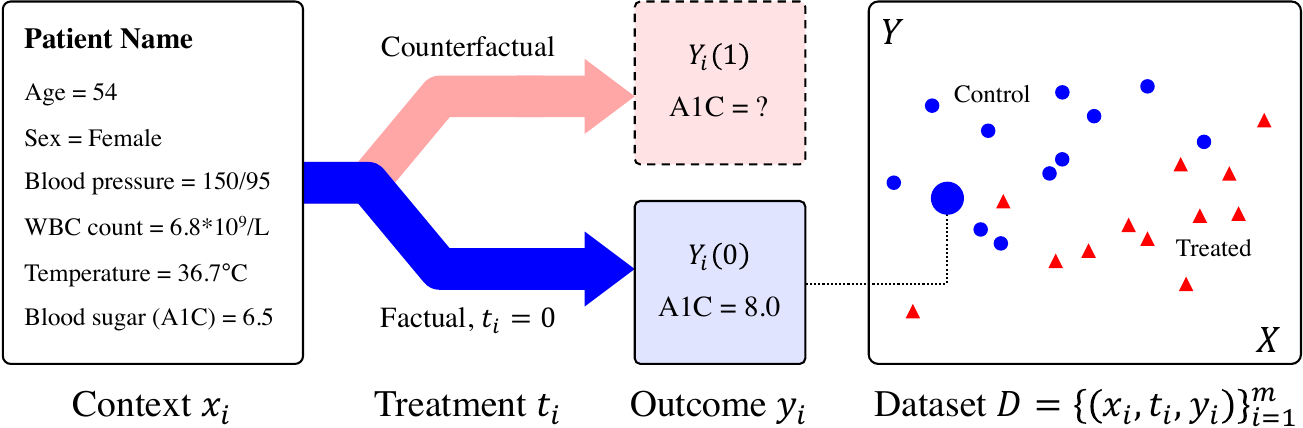}
    \caption{{\blockedit Formation of an observational dataset of patient contexts $X$, treatments $T$ and outcomes $Y$ in an imagined application to diabetes management. Treatments are assigned based on the context and determines which of the two potential outcomes, $Y(0)$ and $Y(1)$, is observed for the patient, here in terms of blood sugar (A1C). The observations are then recorded in the dataset $D$. The multivariate context $X$ is represented in one dimension for illustration purposes. In the figure on the right, red and blue colors denote which treatment the patient received.}}
    \label{fig:data}
\end{figure}
\begin{itemize}
\item A \emph{context} $X_i\in \cX \subseteq \mathbb{R}^d$, defined by all information observed about a patient before the choice of treatment is determined. These covariates may
influence both the treatment choice and the outcome of an experiment. The context is represented as a $d$-dimensional real-valued vector throughout this paper.
\item A \emph{treatment} $T_i \in \cT = \{0, 1\}$, which is an intervention performed in an observed context. Treatments are assumed to be binary variables throughout this paper,\footnote{\edit{We discuss generalizations of our results to non-binary settings in Section~\ref{sec:discussion}.}} where $T_i=1$ is usually referred to as ``treatment'' and $T_i=0$ as ``control.'' 
\item An \emph{outcome} $Y_i\in \cY \subset \bbR$, measuring an aspect of interest of a patient, such as blood pressure or mortality, after the administration of a treatment, represented by a real-valued variable.
\item \emph{\edit{Partially observed} potential outcomes} $Y_i(0), Y_i(1)\in \cY \subseteq \mathbb R$ that correspond to the outcomes that \emph{would} be observed for unit $i$ under treatments $T=0$ and $T=1$, respectively. We assume that $Y_i=Y_i(T_i)$ throughout the paper, capturing both \emph{consistency} and \emph{non-interference}, also known in conjunction as the stable unit treatment value assumption (SUTVA), which is key to the existence of potential outcomes and the relevance of this hypothetical construct to the actual data \citep{rubin2005causal}.
\end{itemize}

{\blockedit%
Access to both potential outcomes $Y_i(0)$ and $Y_i(1)$ would allow us to retroactively determine which treatment $t\in \{0,1\}$ \emph{would have been} preferable for unit $i$, or proactively predict which one \emph{will be}. If the individual treatment effect (ITE), $Y_i(1)-Y_i(0)$ is positive and higher outcomes are preferred, $t=1$ is the better choice for $i$. However, the potential outcome of the treatment not administered, $Y_i(1-T_i)$, is an \emph{unobserved counterfactual}, which is the key impediment to assessing the ITE. That we do not observe what would have happened if we did something differently is often termed \emph{the fundamental problem of causal inference}. Without strong assumptions, the best we can hope for is instead to infer or predict the following quantities~\citep{kunzel2017meta}. 
}
\begin{mdframed}[innerbottommargin=.8em,innertopmargin=0em]%
\begin{thmdef}[Main estimands]
The expected potential outcomes $\mu_t$, for treatments $t \in \{0,1\}$, conditioned on a context $X=x$, are
\begin{equation}\label{eq:pot_def}
\mu_t(x):=\E_{Y(t)\mid X}[Y(t)\mid X=x],\quad \mbox{ for } t \in \{0,1\}~,
\end{equation}
and the conditional average treatment effect (CATE) given a context $x$ is
\begin{equation}\label{eq:cate_def}
\tau(x) := \E_{Y(0),Y(1) \mid X}\left[ Y(1) - Y(0) \mid X = x \right]=\mu_1(x)-\mu_0(x)~.
\end{equation}
\end{thmdef}
\end{mdframed}
The CATE is an object of key interest as it tells us what is the best prediction of the effect on an individual given only their context variables. This has a variety of uses. One important use is the personalization of treatment to make sure that the treatment is effective for the target. We illustrate $\mu_0, \mu_1$ and $\tau$ in Figure~\ref{fig:example} in Section~\ref{sec:theory}.

{\blockedit% 
We consider learning from a simple random sample of size $m$ from a population distributed according to a population distribution $p(X,T,Y)$. The observations of a unit $i$ are denoted $(x_i, t_i, y_i)$. These data are assumed to be sampled i.i.d. from $p$. For this reason, we drop the subscript $i$ when dealing with the distribution of any single such random draw from this population. Note that since counterfactual outcomes are unobserved, our data consists just of $(X_1,T_1,Y_1),\dots, (X_m,T_m,Y_m)$, and $Y(1-T)$ remains unobserved.
Following a long tradition, observed samples from $p(X \mid T=0)$ and $p(X \mid T=1)$ are referred to as the control and treated groups, respectively.  With slight abuse of terminology, we use these labels also in reference to the conditional densities themselves. For convenience, for $t\in \{0,1\}$, we introduce the short-hands $p_t(X) := p(X \mid T=t)$ and $\hp_t(X)$ analogously. We assume that $p(X), p_1(X), p_0(X)$ are all probability measures on $\cX$, absolutely continuous w.r.t. the Lebesgue measure. 
}

\begin{figure}
    \centering
    \includegraphics[width=.95\textwidth]{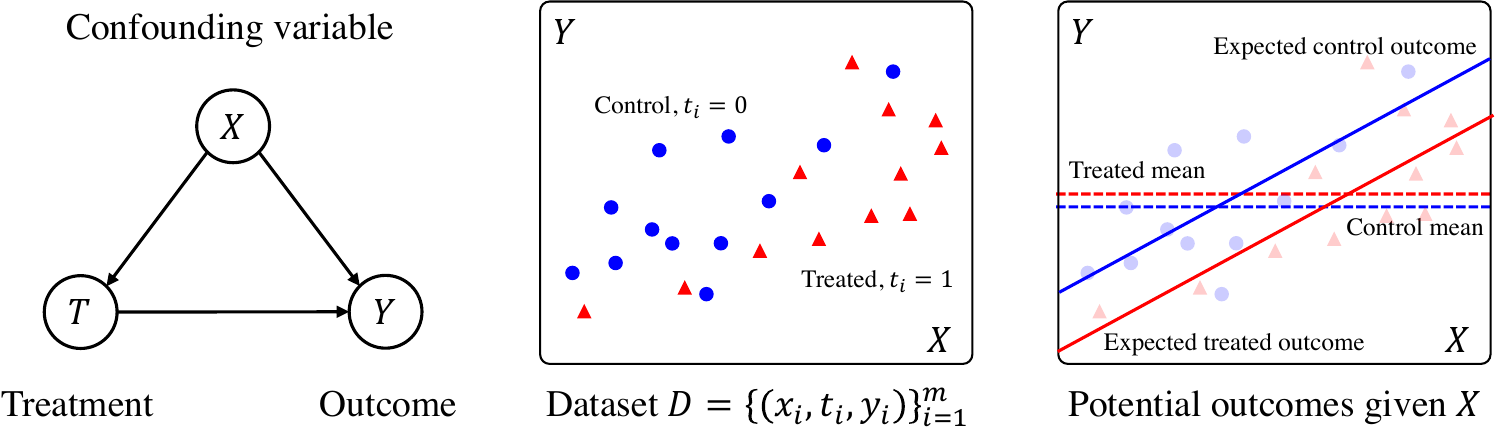}
    \caption{{\blockedit Effects of confounding in observational data. A variable $X$ influences both treatment assignments and outcomes (left) observed in the dataset $D$ (middle). This manifests as a distribution difference in $X$ between control (blue) and tread (red) observations. Despite the causal effect being negative for all $X$ (illustrated by the red solid line being below the blue solid line, right), the average outcome in the control group (dashed, blue) is lower than the average outcome in the treated group (dashed, red).}}
    \label{fig:confounding}
\end{figure}

{\blockedit%
\paragraph{Treatment group imbalance and confounding bias} In practical applications, different treatments are given in different contexts. For example, more expensive drugs may only be available to patients with sufficient insurance coverage. As a result, a central difficulty in working with observational data, unlike in randomized experiments, is that treatment and control groups can not be assumed equivalent. When treatments $T$ are selected based on the observed context $X$, this presents as a distributional difference, $p_0(X) \neq p_1(X)$. We illustrate treatment group imbalance and the effect it may have in Figure~\ref{fig:confounding}. If not adjusted for, this imbalance may result in confounding bias and inflated variance in estimates of potential outcomes and causal effects. Mitigating this bias and inflated variance in risk minimization is one of the main challenges addressed in this work.} 

\paragraph{Aside: Individual treatment effect} The conditional average treatment effect conditioned on everything that is known about a subject captures \edit{a causal effect which is as specific to an individual as the data supports}, rather than a population-level causal effect. It is therefore sometimes called \emph{individual treatment effect} (ITE)~\citep{johansson2016learning,shalit2016estimating}. While this definition aligns with concepts used in machine learning, it overloads existing terminology. In causal inference, the ITE has been previously defined as the difference $Y_i(1)-Y_i(0)$. The distinction between this ITE and CATE is that ITE is unique to an individual and may not be described exactly by any set of features $X$. For this reason, we adopt the more precise label CATE for the feature-conditional treatment effect function $\tau(x)$.

%
% ASSUMPTIONS
%
\subsection{Identifying assumptions}
\label{sec:assumptions}
Expected potential outcomes $\mu_t$ and conditional effects $\tau$ are said to be \emph{identifiable} if they can be uniquely computed from the distribution $p(X,T,Y)$ of the observed data. This is important because that distribution is the most one could ever hope to learn from an i.i.d. sample from $p$, and so anything that cannot be learned from the distribution cannot be learned from i.i.d. samples from it.
Without additional assumptions \edit{on $p$, the functions $\mu_t$ and $\tau$} may not be identifiable. To see this, consider an observational study where  treatment was given only to subjects over the age of 30, and the control group consists only of subjects under the age of 30. \edit{If age has an effect on a potential outcome of interest, there is no guarantee that it can be estimated for all subjects from such data.}

Sufficient conditions for identification have been studied in both very general settings~\citep{pearl2009causality,rubin2005causal} and in special cases that are commonly accepted in real-world applications. In \edit{most} of this work, we adopt and refer to the following assumptions.

\begin{thmasmp}[Ignorability]\label{asmp:ignorability}%
  The potential outcomes $Y(0), Y(1)$ and the treatment $T$ are conditionally independent given $X$,
  $$
  Y(0), Y(1) \indep T \mid X~.
  $$%
\end{thmasmp}%
\edit{Ignorability is often imprecisely (see remark below) called the \emph{no unmeasured confounders} assumption as it holds if all variables which affect both treatment and potential outcomes are included in $X$ \emph{and} if these satisfy additional conditions.}

\begin{thmasmp}[Overlap]\label{asmp:overlap}%
  In any context $x \in \cX$,
  any treatment $t \in \{0,1\}$ has a non-zero probability of being observed in the data
  $$
  \forall x\in \cX, t\in \{0,1\} : p(T=t \mid X=x) > 0~.
  $$
  \edit{Equivalently, the population $p(X)$ is absolutely continuous w.r.t. $p_0(X), p_1(X)$.}
\end{thmasmp}
Overlap is sufficient to ensure that knowledge of the outcomes in one treatment group may be generalized to the opposite group given access to a large enough sample size. Note that overlap only requires that the supports of the treatment groups are equal, not that they have similar densities. The degree to which treatment group densities are equal on this support is sometimes referred to as \emph{balance}.

Under Assumptions~\ref{asmp:ignorability}, \ref{asmp:overlap} and SUTVA ($Y=Y(T)$), expected potential outcomes and  the conditional average treatment effect are identifiable as can be seen by the simple identity
\begin{align*}
\tau(x) & = \E_{Y(0),Y(1) \mid X}\left[ Y(1) - Y(0) \mid X = x \right] \\
& = \E_{Y \mid X,T}\left[ Y \mid X = x, T = 1 \right] - \E_{Y \mid X,T}\left[ Y \mid X = x, T = 0 \right]~.
\end{align*}

\paragraph{Plausibility of the \edit{identifying} assumptions}
Both ignorability and SUTVA are assumptions that are fundamentally untestable given observational data alone~\citep{pearl2009causality}. Despite this, they are often made in practice to justify subsequent analysis, or make clear its potential limitations. A common heuristic motivation for ignorability is related to the richness of the variable $X$. The richer the data, the more likely are they to cover all confounding variables. It should be noted, that even if all confounders are measured, adjusting for some of them may introduce additional estimation bias or variance nonetheless~\citep{ding2017instrumental}. Furthermore, the overlap assumption becomes increasingly difficult to both satisfy and check as the dimensionality of $X$ grows~\citep{d2017overlap}.

%%%%%%%%%%%%%%%%%%%%%%%%%%%%%%%%%%%%%%%%%%%%%%%%%%%%%%%%%%%%%%%%%%%%%%%%%%%%%%%%%%%%
%
% RELATED WORK
%
%%%%%%%%%%%%%%%%%%%%%%%%%%%%%%%%%%%%%%%%%%%%%%%%%%%%%%%%%%%%%%%%%%%%%%%%%%%%%%%%%%%%

\section{Related work}
\label{sec:related}
\edit{Observational causal inference has received considerable renewed attention with the recent growth of research into machine learning and artificial intelligence. We survey recent results and briefly cover relevant fundamental work below.}
%
% CAUSALITY OVERVIEW
%
%\paragraph{Causal inference}
The field may be broadly divided into two categories, dedicated to causal discovery and causal effect estimation, respectively. In the former, the direction and presence of causal relationships between observed variables is unknown, and the task is to infer them from data
\citep{geiger2015causal,spirtes1991algorithm,hoyer2009nonlinear,eberhardt2008causal,hyttinen2014constraint,silva2006learning}. %
In the latter, which is the setting of this work, the structure of causal relationships is assumed to be known: confounders $X$ are causal of treatment $T$ and outcome $Y$; treatment $T$ is causal of $Y$ (unless the effect is 0); any unmeasured variable is causal only of either $X$, $T$ or $Y$. We are primarily interested in estimating the \emph{conditional} average treatment effect of the treatment $T$ on the outcome $Y$ conditioned on the context $X$ \citep{johansson2016learning,shalit2016estimating,athey2016recursive,wager2018estimation,pearl2017detecting,abrevaya2015estimating,bertsimas2017personalized,green2010modeling,alaa2018limits}.

%
% REGRESSION ESTIMATORS
%
\paragraph{Outcome regression estimators}
Under ignorability, CATE is identified by the difference of two \edit{conditional expectations: the expected outcome under treatment conditioned on covariates minus the expected outcome under control conditioned on covariates. In an early influential paper, \citet{hill2011bayesian} demonstrated the power in separately estimating each conditional expectation, utilizing Bayesian Additive Regression Trees~\citep{chipman2010bart} for this task. The approach of fitting treatment-specific outcome regressions was later referred to as a} ``T-learner'' \citep{kunzel2017meta}, where T stands for \emph{two}, see Figure~\ref{fig:tlearner}. Another approach, classically used with linear models, is to fit a single regression model from the covariates $X$ \emph{and} the treatment $T$ to the outcome $Y$. CATE is then estimated by evaluating the difference between the prediction for $X,T=1$ and $X,T=0$. This approach is known as ``S-learner'', where S stands for \emph{single}. \edit{It has been argued that both methods are prone to \emph{compounding bias} when applied in high-dimensional, strongly confounded, small-sample settings which may require substantial regularization~\citep{nie2021quasi, hahn2020bayesian}. }
%
%
% META LEARNERS
%
Rather than regressing on potential outcomes jointly or separately, a variety of work has studied directly estimating their difference, e.g. using trees \citep{athey2016recursive} and forests \citep{wager2018estimation}. Other work has studied meta-learners that combine base learners of the underlying regression functions using methods which are different from simple differencing in efforts to be more sample efficient~\citep{robins2000marginal,kunzel2017meta}. \edit{Double/debiased machine learning~\citep{chernozhukov2017double}, pseudo-outcome regression~\citep{kennedy2017non,kennedy2020optimal} and the R-learner~\citep{nie2021quasi} also attempt to mitigate bias by separately fitting non-parametric nuisance functions and effect estimates based on these. Multiple works have shown that when having a well-specified or non-parametric estimator for each regression, CATE estimation can be asymptotically consistent and/or asymptotically normal with optimal convergence rates, under appropriate assumptions \citep{chernozhukov2017double,belloni2014inference,nie2021quasi}.}

%
% REPRESENTATION LEARNING
%
{\blockedit
\paragraph{Representation learning}
Representation learning has become a cornerstone of machine learning~\citep{bengio2013representation}, and is a natural component of causal effect estimators. In \citet{johansson2016learning,shalit2016estimating}, we used representation learning to share information about the outcome across treated and controls, while regularizing representation distributional distance between the groups. \citet{zhang2020learning,johansson2019support} recognized that such regularization may result in loss of ignorability in the representation and proposed to learn representations in which context information is preserved but where treatment groups overlap. \citet{hassanpour2019learning} identified the potential value in learning representations which disentangle parts of the contexts which are causes of a) treatment assignment, b) potential outcomes or c) both. By fitting a generative adversarial network, \citet{yoon2018ganite} estimate the uncertainty in predicted counterfactuals using their GANITE model---typical regression estimators, including representation learning methods, model only the conditional expectation of the outcomes given the input. \citet{chen2019deep} used multi-task representation learning to estimate treatment effects from electronic healthcare records. In this work, we show that simultaneous representation learning and re-weighting may overcome the issue of destructive representations encountered in early deep learning approaches to CATE estimation. A related idea was used by \citet{kallus2020deepmatch} to balance treatment groups using weights defined in a representation space learned using a deep network adversary.
}

%
% IMPORTANCE SAMPLING
%
{\blockedit 
\paragraph{Sample weighting estimators}
A complement to regression estimation of CATE is \emph{sample weighting}, where the goal is to alleviate systematic differences in baseline covariates across treatment groups. This idea is used in propensity weighting methods~\citep{austin2011introduction,rosenbaum1983central,jung2020estimating} which use the observed treatment policy to re-weight samples for causal effect estimation, and more generally in re-weighted regression, see e.g.~\citep{freedman2008weighting,jung2020learning}. Two challenges with propensity weighting methods are 1) the need to estimate the propensity score when it is unknown, and 2) the high variance introduced when the propensities are small, so that minute estimation errors lead to dividing by near-zeros. Interestingly, solving one of the above problems may mitigate also the other; \citet{hirano2003efficient,abadie2016matching} showed that weighting or matching using the estimated propensity score may be more favorable than using the true propensity score when the increase in bias is smaller than the reduction in variance. We observe a similar phenomenon in our proposed regularization scheme. \citet{gretton2009covariate,kallus2016generalized,kallus2017framework} proposed learning sample weights by minimizing a distributional distance between weighted samples from different treatment groups. These rely on specifying the data representation a priori, without regard for which aspects of the data matter for outcome prediction. We build on the sampling weighting (importance sampling) literature by developing theory and algorithms for weighted risk minimization for potential outcomes and CATE, both in a fixed representation and one learned from data. A similar idea was also used in concurrent work by~\citet{kallus2020deepmatch}. 
}

%
% PROGNOSTIC SCORE (BOTTLENECK??)
%
\paragraph{Learned prognostic scores}
Our work on representation learning has conceptual ties to the idea of the \emph{prognostic score} \citep{hansen2008prognostic}. A prognostic score is any function $\Phi(X)$ of the context $X$ that Markov separates $Y(0)$ and $X$, such that $Y(0) \indep X \mid \Phi(X)$. An extreme example is $\Phi(X) = X$. If $Y(0)$ follows a generalized linear model, then $\Phi(X) = \E\left[Y(0)|X\right]$ is also a prognostic score.
The prognostic score is a form of  dimension reduction which under certain assumptions is sufficient for causal inference. Note that unlike the propensity score, the prognostic score might very well be vector valued. One can view our approach as attempting to find approximate non-linear prognostic functions for both $Y(0)$ and $Y(1)$. We stress the approximate, because in fact we trade off how well our learned representation $\Phi(\cdot)$ is sufficient to explain the potential outcomes with a balancing objective which we show is important for good \emph{finite-sample} estimation of CATE.

%
% ASSUMPTIONS
%
\paragraph{Assumptions on observational data}
Estimation of causal effects from observational data is mostly performed under the assumptions of \emph{ignorability}, treatment group \emph{overlap} and \emph{consistency}, as they are otherwise generally \emph{unidentifiable}. In this work, we are motivated by the ignorability assumption throughout, but give several results that hold in its absence. For work on CATE estimation without ignorability, see e.g.,
\citet{kallus2018confounding,kallus2018removing,kallus2018interval,louizos2017causal,rosenbaum2002overt,Jesson21quant}. %
In contrast, we focus on the case where overlap is only partially satisfied. Lack of overlap is widely acknowledged as a problem~\citep{d2017overlap} but estimation in this setting has received considerably less attention in the literature.

{\blockedit
\paragraph{Learning guarantees}
In this work, we provide generalization bounds for the estimation of potential outcomes as functions of an observed context, based on the distributional difference between treated and controls.  \citet{alaa2018limits} showed that the best achievable minimax rate for non-parametric estimation in this task is independent of treatment group differences (selection bias). However, this main result is limited to the asymptotic regime. In low-sample settings, the error in estimating CATE is still affected by selection bias, and the best estimator may be one that adjusts for it. \edit{\citet{kunzel2017meta,nie2021quasi,kennedy2020optimal} also studied the minimax rate of CATE estimators and proposed methods which combine estimates of nuisance functions which achieve the optimal rate under smoothness conditions.} In contrast, estimates based on parametric or otherwise restricted hypothesis classes may not approach the optimal minimax rate (see paragraph below). Non-parametric estimation was also studied by~\citet{curth2021nonparametric}, in the context of different meta-learners (e.g., T-learner, S-learner). They find, similarly, that the best asymptotic estimators and the best low-sample estimators are not the same in general. \edit{\citet{athey2016recursive} and \citet{wager2018estimation} gave early results on valid confidence intervals for non-parametric estimation of heterogeneous causal effects.}
}

%
% MODEL MISSPECIFICATION
%
\paragraph{Best-in-class estimation}
{\blockedit
In this work, we are primarily interested in guarantees which hold also for misspecified models, functions that may not be able to exactly fit the outcome in terms of high-dimensional baseline covariates and treatment. \emph{Agnostic machine learning} focuses on finding best-in-class models and bounding the generalization error of any model, whether well-specified or not \citep{vapnik2013nature,cortes2010learning}. However, in the causal inference setting, under model misspecification, regression methods may suffer from additional bias when generalizing across populations subject to different treatments. Our work addresses this issue by extending specification-agnostic generalization bounds to the CATE estimation problem. These bounds motivate our algorithms in the same way that standard supervised learning generalization bounds motivate structural risk minimization \citep{vapnikbook}. As we will discuss later, the setting most closest to ours is unsupervised domain adaptation (UDA)~\citep{blitzer2008learning}, in which models must generalize from a source domain to a target domain, having observed labeled samples only from the former. Agnostic machine learning bounds for UDA have a rich literature, spanning results for regression~\citep{cortes2011domain} and classification~\citep{blitzer2008learning}, using the PAC~\citep{cortes2010learning,ben2012hardness,mansour2009domain} and PAC-Bayes~\citep{germain2020pac} frameworks, and in combination with representation learning~\citep{ben2007analysis,long2015learning}. We cannot provide a complete survey here but return briefly to discuss the relation between our work and domain adaptation in Section~\ref{sec:domainadaptation}.
}

%%%%%%%%%%%%%%%%%%%%%%%%%%%%%%%%%%%%%%%%%%%%%%%%%%%%%%%%%%%%%%%%%%%%%%%%%%%%%%%%%%%%
%
% THEORY
%
%%%%%%%%%%%%%%%%%%%%%%%%%%%%%%%%%%%%%%%%%%%%%%%%%%%%%%%%%%%%%%%%%%%%%%%%%%%%%%%%%%%%

%
% GENERALIZATION BOUNDS
%
\section{\edit{Generalization bounds for estimating potential outcomes and CATE}}
\label{sec:theory}
{\blockedit Our goal is to accurately predict potential outcomes $Y(t)$, and the CATE $\tau$} without making parametric assumptions on the functional form of the true potential outcomes. For this reason, we adopt the risk minimization approach to learning and search for best-in-class hypotheses. \edit{This requires generalization from the treated and control groups to the general population. Next, we derive bounds on the risk of such hypotheses in the following steps:}
\begin{enumerate}
  \item We define prediction \emph{risk} with respect to potential outcomes $\mu_0, \mu_1$ and relate it to the expected error in estimates of the CATE, $\tau$
  \item We show how the risk on the observed distribution is a \emph{biased} estimate of the desired \edit{population} risk and give sample re-weighting schemes that removes this bias.
  \item We give bounds on the expected risk under imperfect re-weighting schemes by placing assumptions on the loss with respect to the true outcome.
  \item We derive finite-sample versions of these bounds and combine them to form a single fully observable bound on the risk in estimates of potential outcomes and CATE.
\end{enumerate}
Our main generalization bound does not depend on treatment group overlap (Assumption~\ref{asmp:overlap}). This diverges from most theoretical results for treatment effect estimation and provides intuition for when we can expect extrapolation to succeed approximately. Consistent non-parametric estimation, however, still requires overlap. In Section~\ref{sec:rep_bound}, we extend these results to \edit{estimators which make use of representation learning}.

\subsection{Risk for hypotheses of potential outcomes and CATE}
We study prediction of potential outcomes $Y(t) \in \cY$, for binary treatments $t \in \{0,1\}$. \edit{We assume that outcomes are bounded, $\cY \subset [-\omega, \omega]$ for finite $\omega > 0$}. Predictions are made using hypotheses $f_t \in \cH$, from some hypothesis class \edit{$\cH \subset \{ h : \cX \rightarrow \cY\}$}. These can then be combined to form estimates of CATE,
$$\htau(x) \coloneqq f_1(x) - f_0(x)~.$$ We note that while this is not the only way to estimate $\tau$ (see e.g., \citep{robins2000marginal,kunzel2017meta,nie2021quasi} for alternatives), it does allow us to leverage separate bounds on the risk of hypotheses $f_0, f_1$ with respect to the potential outcomes, to then bound the risk of $\htau$.  We define the risk of hypotheses $f_0, f_1$ below.
%
% DEFINITION
%
\begin{mdframed}[innerbottommargin=.8em,innertopmargin=0em]%
\begin{thmdef}
Let $L : \cY \times \cY \rightarrow \mathbb{R}_+$ be a loss function, such as the squared loss $L(y, y') = (y - y')^2$.
  The expected \emph{pointwise loss} of a hypothesis $f_t$ at a point $x$ is:
  \begin{equation}
    \ell_{f_t}(x) \coloneqq \E_{Y(t)\mid X}[L(Y(t), f_t(x)) \mid X=x]~.
  \end{equation}
  The \emph{marginal risk} of a hypothesis $f_t$ w.r.t. a population $p(X)$ is
  \begin{equation}\label{eq:marg_risk}
    R(f_t) \coloneqq \E_X\left[\ell_{f_t}(X) \right]~,
  \end{equation}
  and the \emph{factual risk} of $f_t$ w.r.t. treatment group $p(X\mid T=t)$ is
  \begin{equation}
    R_t(f_t) \coloneqq \E_{X\mid T}\left[\ell_{f_t}(X) \mid T=t \right]~.
  \end{equation}
  The \emph{counterfactual risk} is $R_{1-t}(f_t) := \E_{X\mid T}\left[\ell_{f_t}(X) \mid T=1-t \right]$. The subscript on the risk $R_t$ indicates the treatment group over which it is evaluated. Note that the potential outcome against which the risk is evaluated is implicit in this notation---we only consider evaluating $f_t$ against $Y(t)$ or $\mu_t$, $\hat{\tau}$ against $\tau$, et cetera.
\end{thmdef}
\end{mdframed}
In most of this work, we restrict our attention to the squared error loss but note that our analysis generalizes to other convex loss functions, such as the mean absolute deviation.

Similar to potential outcomes, we assess the quality of an estimate $\htau$ of $\tau$ based on the expectation of the loss function $L$ over the marginal density of covariates, $p(X)$,
\begin{equation}
  R(\htau) \coloneqq \E_{X}\left[ L(\tau(X), \htau(X)) \right]~.
  \label{eq:taumse}
\end{equation}

The marginal risk $R(\htau)$ is the overall expected error in estimating CATE, taken over the entire population. However, $R(\htau)$ is not readily computable from data because neither $\tau(X)$ nor $p(X)$ are known. Moreover, we cannot make an empirical average estimate of it because, again, neither $\tau(X_i)$ nor $Y_i(1)-Y_i(0)$ are known. Instead, we will bound $R(\htau)$ from above. The main challenge of computing the marginal risk for hypotheses of potential outcomes is to quantify the counterfactual risk, and this is the primary concern of this work.

Unlike $R(f_t)$, $R(\htau)$ does not depend on the noise (conditional variance) in $Y(t)$ \footnote{This is because that $\tau$ is defined in terms of expectations over the potential outcomes.}. We adopt this convention for $R(\htau)$ as it coincides with the Precision in Estimation of Heterogeneous Effects (PEHE)~\citep{hill2011bayesian}. However, similarly to $L(Y(t), f_t)$, $L(\tau, \htau)$, is not observed over $p$, as $Y(1)$ is only observed for the treated group $p_1$, and $Y(0)$ only for the control group $p_0$. We return to this issue later, and begin instead by stating the following result relating the risk of $\htau$ to those of $f_0$ and $f_1$, in the case of $L$ the squared loss.
%
% LEMMA
%
\begin{mdframed}[innerbottommargin=.8em,innertopmargin=0em]%
\begin{thmlem}\label{lem:cate_pot}
Let $L(y, y') = (y-y')^2$ be the squared loss function. For hypotheses $f_0, f_1$ of expected potential outcomes $\mu_0, \mu_1$, with marginal risks $R(f_0), R(f_1)$, and $\htau = f_1 - f_0$, there is a constant $\sigma_Y$ (defined in the proof below), such that
  \begin{equation}%
  R(\htau) \leq 2\left( R(f_0) + R(f_1) \right) - 4\sigma^2_Y~.
  \end{equation}
Similar results hold \edit{also for other losses that satisfy the (relaxed) triangle inequality}, e.g., the absolute loss, $L(y,y') = |y - y'|$, \edit{with corresponding noise decompositions.}
\end{thmlem}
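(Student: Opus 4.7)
The plan is to reduce the risk on $\hat\tau=f_1-f_0$ to the two individual risks $R(f_0),R(f_1)$ via the elementary inequality $(a-b)^2\le 2a^2+2b^2$, paying the usual price of an additive noise term that accounts for the irreducible conditional variance of $Y(t)$ absorbed into $R(f_t)$ but not into $R(\hat\tau)$.

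First I would do a pointwise bias-variance decomposition of $R(f_t)$. For any $x$ and $t\in\{0,1\}$, since $\mu_t(x)=\E[Y(t)\mid X=x]$,
\begin{equation*}
\ell_{f_t}(x)=\E_{Y(t)\mid X}\bigl[(Y(t)-f_t(x))^2\mid X=x\bigr]=(\mu_t(x)-f_t(x))^2+\sigma_t^2(x),
\end{equation*}
where $\sigma_t^2(x)\coloneqq \mathrm{Var}(Y(t)\mid X=x)$. Taking expectations over $X\sim p$ gives $R(f_t)=\E_X[(\mu_t(X)-f_t(X))^2]+\bar\sigma_t^2$ with $\bar\sigma_t^2\coloneqq\E_X[\sigma_t^2(X)]$, so
\begin{equation*}
\E_X[(\mu_t(X)-f_t(X))^2]=R(f_t)-\bar\sigma_t^2.
\end{equation*}

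Next I would expand $R(\hat\tau)$ using $\tau=\mu_1-\mu_0$ and $\hat\tau=f_1-f_0$:
\begin{equation*}
R(\hat\tau)=\E_X\bigl[\bigl((\mu_1(X)-f_1(X))-(\mu_0(X)-f_0(X))\bigr)^2\bigr].
\end{equation*}
Applying the pointwise inequality $(a-b)^2\le 2a^2+2b^2$ inside the expectation and then plugging in the two bias-variance identities yields
\begin{equation*}
R(\hat\tau)\le 2\E_X[(\mu_0(X)-f_0(X))^2]+2\E_X[(\mu_1(X)-f_1(X))^2]=2\bigl(R(f_0)+R(f_1)\bigr)-2\bigl(\bar\sigma_0^2+\bar\sigma_1^2\bigr).
\end{equation*}
Defining $\sigma_Y\coloneqq (\bar\sigma_0^2+\bar\sigma_1^2)/2$ recovers the stated bound.

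No part of this is a real obstacle; the only subtlety worth flagging is that the subtracted constant $4\sigma_Y$ is nonnegative and sharp enough to keep the inequality nontrivial precisely because the two $\bar\sigma_t^2$ terms appear in $R(f_t)$ but not in $R(\hat\tau)$, which is defined against $\tau$ rather than against $Y(1)-Y(0)$. For the ``similar result'' under a metric loss such as $L(y,y')=|y-y'|$, I would replace the quadratic inequality with the triangle inequality, $|\tau-\hat\tau|\le|\mu_1-f_1|+|\mu_0-f_0|$, take expectations, and then bound each term by $R(f_t)$ using $|\mu_t(x)-f_t(x)|=|\E_{Y\mid X}[Y(t)-f_t(x)\mid X=x]|\le\E_{Y\mid X}[|Y(t)-f_t(x)|\mid X=x]$ via Jensen; the analogous ``noise'' correction then becomes $\E[|Y(t)-\mu_t(X)|]$ subtracted (or zero, at the cost of a weaker constant), giving a clean analog without the factor of two blow-up.
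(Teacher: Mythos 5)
Your proof is correct and follows essentially the same route as the paper: the inequality $(a-b)^2\le 2a^2+2b^2$ applied to $\tau-\hat\tau=(\mu_1-f_1)-(\mu_0-f_0)$, combined with the bias--variance decomposition $R(f_t)=\E_X[(\mu_t(X)-f_t(X))^2]+\bar\sigma_t^2$. The only (cosmetic) difference is the choice of constant: you take $\sigma_Y=(\bar\sigma_0^2+\bar\sigma_1^2)/2$ while the paper takes $\sigma_Y=\min(\bar\sigma_0^2,\bar\sigma_1^2)$, and since the average dominates the minimum your version of the bound is in fact slightly tighter.
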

\end{mdframed}
\begin{proof}
Due to the relaxed triangle inequality for squared differences,
\begin{align*}
  \E_X[(\tau(X) - \htau(X))^2] & =
  \E_X[(\mu_1(X) - \mu_0(X) - f_1(X) + f_0(X))^2] \\
  & \leq 2\left(\E_X[(\mu_1(X) - f_1(X))^2] + \E_X[(\mu_0(X) - f_0(X))^2] \right)
\end{align*}
Now, by the standard bias-noise decomposition,
\begin{eqnarray*}
R(f_t) & = &  \E_{X,Y(t)}\left[ ((Y(t) - \mu_t(X))^2 \right] + \E_{X}\left[(\mu_t(X)- f_t(X)))^2 \right]~.
\end{eqnarray*}

Hence, for $t\in \{0,1\}$ with $\sigma^2_{Y(t)} \coloneqq \E_{X,Y(t)}\left[ ((Y(t) - \mu_t(X))^2 \right]$,
$$
\E_X[(\tau(X) - \htau(X))^2] \leq 2\left(R(f_0) - \sigma^2_{Y(0)} + R(f_1) - \sigma^2_{Y(1)} \right)
$$
and with $\sigma^2_Y \coloneqq \max(\sigma^2_{Y(0)}, \sigma^2_{Y(1)})$, we have our result.
\end{proof}

Lemma~\ref{lem:cate_pot} implies that small errors in hypotheses of potential outcomes guarantee small errors in CATE. However, it is worth noting that this decomposition need not lead to the best achievable bound in all cases. Even when $Y(0)$ and $Y(1)$ are complex functions, $\tau(x)$ may be a simple function. In this work, we do not address this in our theoretical treatment but find that sharing parameters in estimation of $Y(0)$ and $Y(1)$ lead to better results empirically. Next, we study $R(f_0)$ and $R(f_1)$ separately, in terms of observable quantities, to later give a self-contained result for $R(\htau)$.

%
% SAMPLE RE-WEIGHTING
%
\subsection{Importance-weighting hypotheses \& propensity scores}\label{subsec:imp_weigh}
We proceed to show how the marginal risk $R$ in potential outcomes and CATE may be computed by re-weighting the factual risk $R_t$. This approach is widely used within machine learning~\citep{shimodaira2000improving,cortes2010learning,rosenbaum1983central}. We note in passing that $R_t$ is not observed directly but can be readily estimated from an empirical sample. We return to this issue in later sections.

Due to the fundamental impossibility of observing counterfactual outcomes, each potential outcome $Y(t)$ is only observed for subjects who were given treatment $T=t$, distributed according to $p(X \mid T=t)$. As a result, unless treatment is assigned randomly (independently of $X$), $R_t(f_t)$ is different from $R(f_t)$ in general. In particular, a minimizer $f^*_t$ of $R_t(f_t)$ can be arbitrarily different from a minimizer of $R(f_t)$, depending on the difference between $p$ and $p_t$. This bias can have large impact on treatment policies derived from $f_0, f_1$ and $\tau$.
We illustrate this problem in Figure~\ref{fig:example}.

\begin{figure}[t!]
  \centering
  \begin{subfigure}{.333\textwidth}
    \centering
    \includegraphics[width=.96\textwidth]{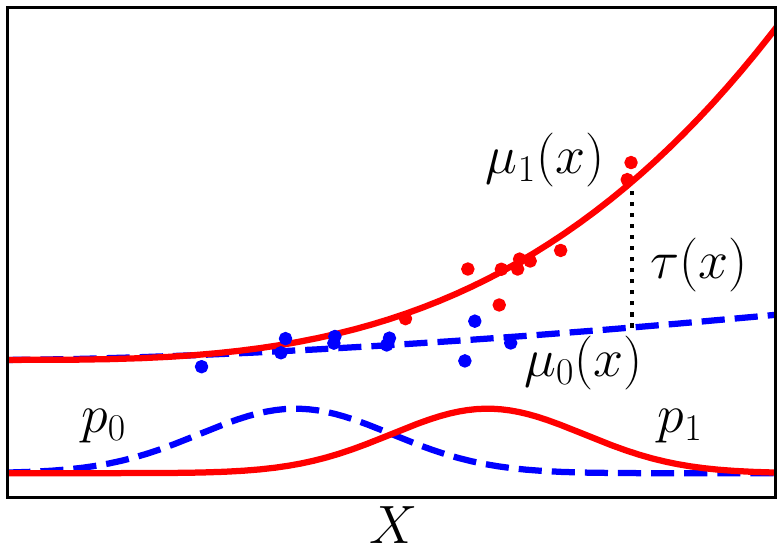}
    \caption{Outcomes and CATE}
  \end{subfigure}%
  \begin{subfigure}{.333\textwidth}
    \centering
    \includegraphics[width=.96\textwidth]{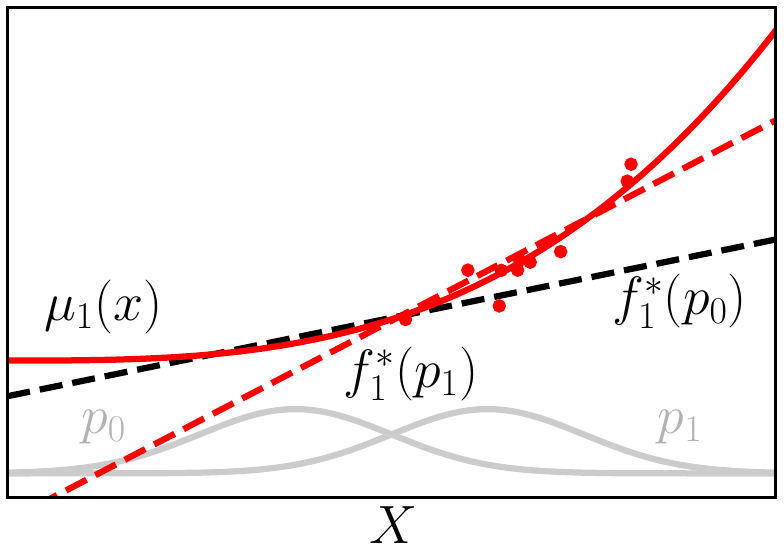}
    \caption{Misspecified models}
  \end{subfigure}%
  \begin{subfigure}{.333\textwidth}
    \centering
    \includegraphics[width=.96\textwidth]{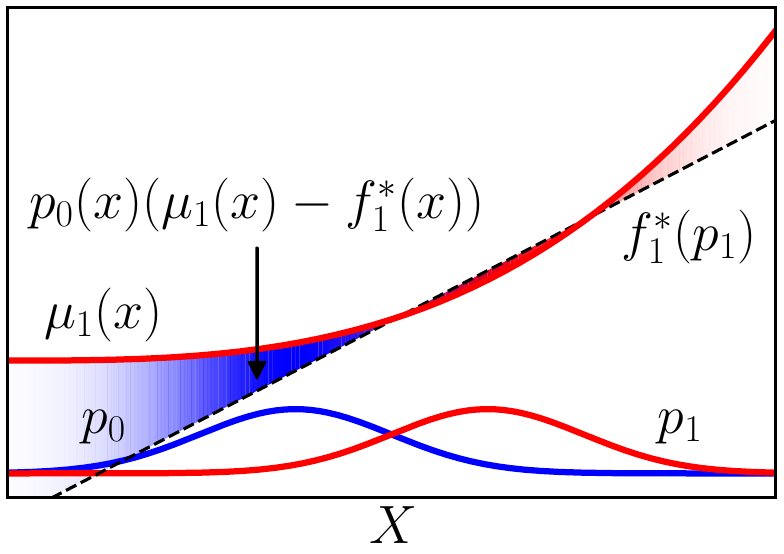}
    \caption{Error and risk}
  \end{subfigure}
  \caption{Illustrative example of bias in regression adjustment of expected potential outcomes $\mu_0(x), \mu_1(x)$ and CATE $\tau(x)$. In (a), we show the two potential outcomes and the two treatment groups $p_0, p_1$ in dashed blue and solid red lines, respectively, as well as samples of each group. In (b), we show the best \emph{linear models} $f_1^*(p_0), f_1^*(p_1)$ of the potential outcome under treatment $\mu_1(x)$ fit to the potential outcome of the control group and treated group respectively. In (c), we illustrate the difference in weighted error (bias) for the model fit to the treated group $f_1^*(p_1)$ evaluated in the control group and treated group.\label{fig:example}}
\end{figure}

To reduce the bias described above, a classical solution is to re-weight the observable risk~\citep{shimodaira2000improving} using a function $w : \cX \rightarrow \mathbb{R}_+$, such that $\E_{X\mid T}[w(X)\mid T=t] = 1$. We define the re-weighted population and treatment group risks as follows, %
\begin{align}\label{eq:risk_weighted}%
R^w(f_t) & := \E_{X}\left[w(X) \ell_{f_t}(X) \right] \\
R_t^w(f_t) & := \E_{X\mid T}\left[w(X) \ell_{f_t}(X)  \mid T=t\right]~, 
\end{align}
where $w$ is chosen to skew the sample to mimic the distribution of the full population $p$. Many common choices of weights are based on the family of \emph{balancing scores}~\citep{rosenbaum1983central}, of which the best known is the \emph{propensity score} $\eta(X)$ with respect to $X$,
\begin{equation}
  \eta(x) := p(T=1 \mid X=x)~.
\end{equation}
We can now state the following result.
%
% LEMMA
%
\begin{mdframed}[innerbottommargin=.8em,innertopmargin=0em]%
\begin{thmlem}\label{lem:impsmp}%
For fixed $t\in \{0,1\}$, under Assumption~\ref{asmp:overlap} (overlap), there exists a weighting function $w : \cX \rightarrow \mathbb{R}_+$, such that  $R_t^w(f_t) = R(f_t)$. In particular, this holds for
\begin{equation}\label{eq:balancing}%
w(x) := \frac{p(T=t)}{(2t-1)\edit{\eta(x)}+1-t}~.
\end{equation}
More generally, it holds for \eqref{eq:balancing} with $\eta(\phi(x))$ for any $\phi$ such that $\ell_f \indep X \mid \phi(X)$. We refer to weights that satisfy \eqref{eq:balancing} as \emph{balancing weights}.
\end{thmlem}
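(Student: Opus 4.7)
The plan is to prove this as a direct application of importance sampling: I will verify that the stated weight is precisely the likelihood ratio $p(x)/p(x\mid T=t)$, which by Bayes' rule equals $p(T=t)/p(T=t\mid X=x)$ and, for binary $T$, reduces to the propensity-score form in \eqref{eq:balancing}. Overlap (Assumption~\ref{asmp:overlap}) is exactly what is needed to make all the ratios well-defined.

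The core computation is a one-line change of measure. For any weighting $w$ with $w(x)\, p(x\mid T=t) = p(x)$ on the support of $p$, unrolling the definition of the weighted factual risk yields
$$
R_t^w(f_t) \;=\; \int w(x)\, \ell_{f_t}(x)\, p(x\mid T=t)\, dx \;=\; \int \ell_{f_t}(x)\, p(x)\, dx \;=\; R(f_t).
$$
Specialized to $\ell_{f_t}\equiv 1$, the same identity gives $\E_{X\mid T}[w(X)\mid T=t] = 1$, confirming that $w$ is a valid re-weighting. Identifying $w(x) = p(T=t)/p(T=t\mid X=x)$ via Bayes' rule and expanding for binary $T$ then gives the closed form of~\eqref{eq:balancing} (equal to $p(T=1)/\eta(x)$ for $t=1$ and $p(T=0)/(1-\eta(x))$ for $t=0$). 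Overlap ensures the denominators are strictly positive almost surely, and hence that $w$ is a.s.\ finite.

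For the generalization to a coarsening $\phi$ satisfying $\ell_{f_t}\indep X\mid \phi(X)$, I would observe that, since $\ell_{f_t}(X)$ is a deterministic function of $X$, this conditional independence forces $\ell_{f_t}(X) = g(\phi(X))$ almost surely for some measurable $g$. Then the same one-line change of measure can be carried out on the pushforward space of $\phi(X)$, using the weight $\tilde w(x) := p(T=t)/p(T=t\mid \phi(X)=\phi(x))$ and the pushforward overlap that follows from Assumption~\ref{asmp:overlap}. Concretely,
$$
\E_{X\mid T}[\tilde w(X)\,\ell_{f_t}(X)\mid T=t] \;=\; \E_{\phi(X)\mid T}[\tilde w(\phi(X))\, g(\phi(X))\mid T=t] \;=\; \E_{\phi(X)}[g(\phi(X))] \;=\; R(f_t),
$$
where the middle equality is the change-of-measure identity applied on the pushforward. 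The only care point throughout is measure-theoretic bookkeeping---ensuring the support conditions hold on the original space, and then on the pushforward space in the generalized version---but I do not expect any serious obstacle beyond invoking Bayes' rule and the tower property carefully.
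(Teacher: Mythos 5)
Your proof is correct and follows essentially the same route as the paper's: the one-line change of measure with $w(x)=p(x)/p_t(x)$, rewritten via Bayes' rule as $p(T=t)/p(T=t\mid X=x)$, with overlap guaranteeing the ratio is well defined, and the $\phi$-generalization handled by pushing the integral forward through $\phi$ (your observation that $\ell_{f_t}(X)=g(\phi(X))$ a.s.\ makes explicit what the paper compresses into ``integration over $\phi$ and a change of variables''). One small note: your expansion to $p(T=1)/\eta(x)$ and $p(T=0)/(1-\eta(x))$ is the intended meaning of \eqref{eq:balancing}, but the displayed denominator $(2t-1)(\eta(x)-1)+1-t$ as printed evaluates to $\eta(x)-1$ for $t=1$ and $2-\eta(x)$ for $t=0$, so the paper's formula appears to contain a typo and should read $(2t-1)\eta(x)+1-t$.
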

\end{mdframed}
\begin{proof} For any weighting function $w$,
$$
R_t^w(f_t) = \int_{x \in \cX} w(x) \ell_{f_t}(x) p_t(x) dx = \int_{x \in \cX} \frac{p_t(x)}{p(x)} w(x) \ell_{f_t}(x) p(x) dx = R^{\frac{p_t}{p}w}(f_t)~.
$$
With $w(x) = p(x)/p_t(x)$, the special case in \eqref{eq:balancing} follows from Bayes theorem and the definition of $\eta(x)$. The second step uses Assumption~\ref{asmp:overlap} to ensure that $p(x)/p_tx(x)$ is defined. The more general statement follows from integration over $\phi$ and a change of variables.
\end{proof}
\begin{thmrem}[Violation of Assumptions~\ref{asmp:ignorability} \& \ref{asmp:overlap}]
If overlap is only partially satisfied, Lemma~\ref{lem:impsmp} may still be applied to the expected risk over the subset of $\cX$ for which overlap holds. More generally, weights may be chosen to emphasize regions where treatment groups are more similar~\citep{li2018balancing}. If ignorability is violated, such as when unobserved confounders exist, a consistent estimator could in theory be obtained by letting the weights $w$ depend also on $Y$. However, such weights are not identified from observed data. Instead, a worst-case bound may be obtained by searching over a family of weighting functions in which these optimal weights are members. This is the topic of sensitivity analysis (see e.g., \citet{rosenbaum2002overt} for a comprehensive overview).
\end{thmrem}

In practice, $\eta(X)$ and balancing weights $w$ are typically unknown and have to be estimated from data. Moreover, even though weights based on $\eta$ are optimal in expectation, they can lead to poor finite-sample behavior~\citep{swaminathan2015counterfactual}. For these reasons, \emph{even if we had knowledge of $\eta(X)$}, we are often interested in weighting functions that do not satisfy Lemma~\ref{lem:impsmp}. Next, we give bounds on the difference between the re-weighted (factual) empirical risk, under  \emph{arbitrary} weightings, and the marginal risk.

%
% COUNTERFACTUAL RISK
%
\subsection{Bounds on the risk of a re-weighted estimator}
\label{sec:weighted_risk}
When overlap is not satisfied everywhere or the chosen weighting function $w$ is not perfectly balancing, the difference between the weighted factual risk $R_t^w(f_t)$ and the marginal risk $R(f_t)$ may be arbitrarily large, without further assumptions on the potential outcomes or the hypothesis class $\cH$. However, in many cases we have reason to make assumptions about the worst-case loss in generalization, as is typical in statistical learning theory. In this section, we give bounds on $R(f_t)$ under such assumptions.

Let $\cL \subset \{\cX \rightarrow \mathbb{R}_{+}\}$ be a space of pointwise loss functions with respect to the covariates $X$ endowed with a norm $\|\cdot \|_\cL$. In this work, we assume that the expected conditional loss $\ell_{f_t}$ for each potential outcome belongs to such a family, i.e., that $\ell_{f_t} \in \cL$. A simple example of such a family is the set of loss functions with bounded maximum value, $\cL_M = \{\ell : \rightarrow \mathbb{R}_{+}\ ;\ \sup_{x\in \cX}\ell(x) \leq M\}$. This assumption is satisfied without loss of generality as long as the outcome $Y$ is bounded. However, it is not very informative and will lead to loose bounds in general. Instead, we may make assumptions about the functional properties of $\ell_{f_t}$. Such assumptions include that $\ell_{f_t}$ is $C$-Lipschitz or belongs to a reproducing-kernel Hilbert space (RKHS). We illustrate the former with an example in Figures~\ref{fig:pred_example}--\ref{fig:lipschitz_example}. Note that these assumptions are not testable without further assumptions such as smoothness. 

\begin{figure}[t!]
  \centering
  \begin{subfigure}{.333\textwidth}
    \centering
    \includegraphics[width=.95\textwidth]{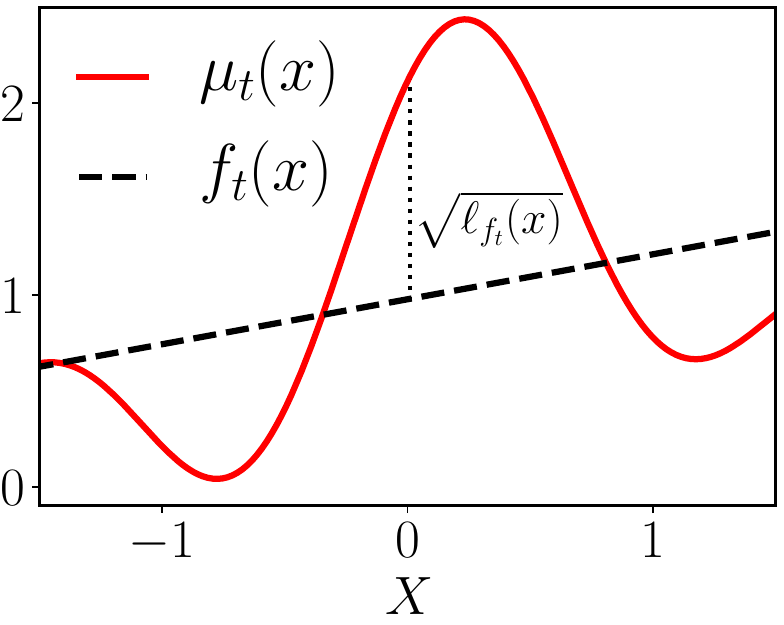}
    \caption{\label{fig:pred_example}Misspecified hypothesis $f_t$}
  \end{subfigure}%
  \begin{subfigure}{.333\textwidth}
    \centering
    \includegraphics[width=.95\textwidth]{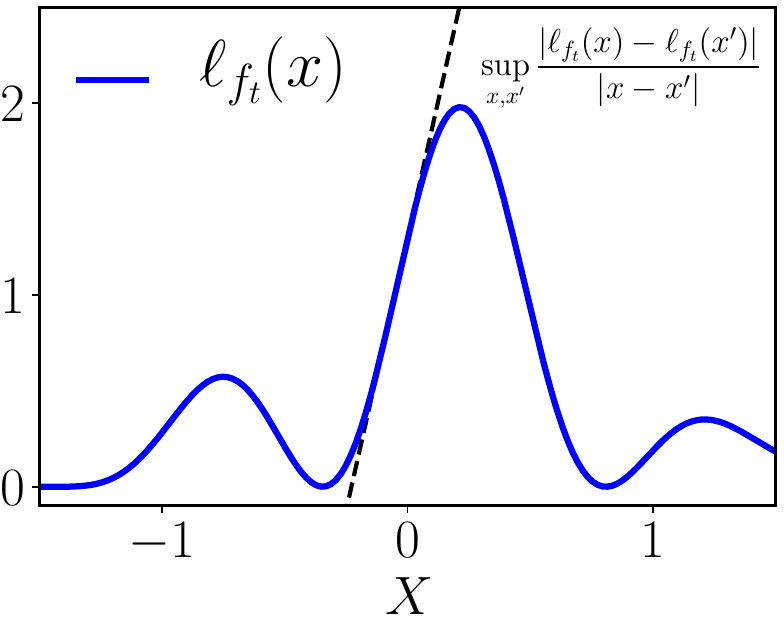}
    \caption{\label{fig:lipschitz_example}Lipschitz loss $\ell_{f_t}$}
  \end{subfigure}%
  \begin{subfigure}{.333\textwidth}
    \centering
    \includegraphics[width=.95\textwidth]{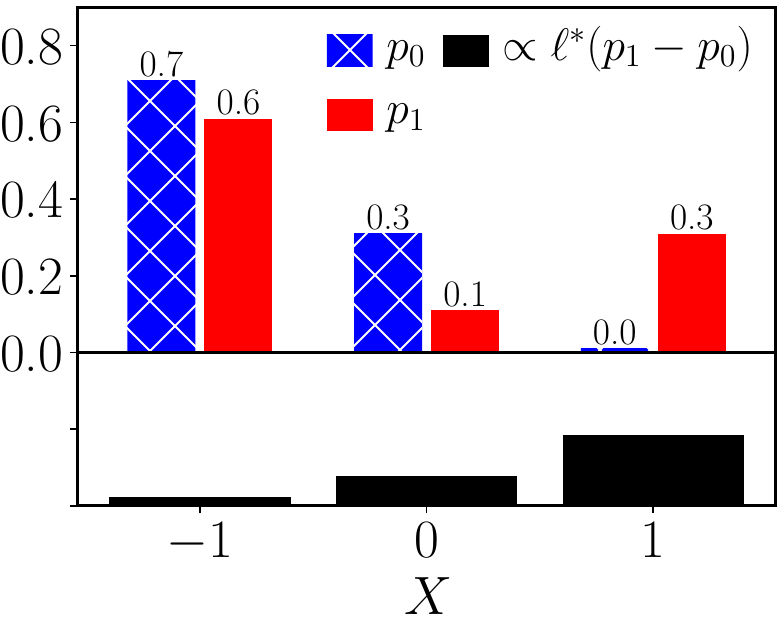}
    \caption{\label{fig:mmd_example2}Bound on loss difference}
  \end{subfigure}
  \caption{Example illustrating assumptions on the pointwise loss $\ell_{f_t}$ in a noiseless setting. In (a) we see the true potential outcome $\mu_t$ and a hypothesis $f_t$. The pointwise loss between them is plotted in (b). In (c), we illustrate the difference between two densities $p_0$ and $p_1$ on $\{-1, 0, 1\}$. The bottom panel shows the worst-case contribution of any loss function in an RBF-kernel RKHS $\cL$ to the \emph{difference} in risk $R_0(f_t) - R_1(f_t)$. The more similar $p_0, p_1$, or the smoother the functions in $\cL$, the smaller the overall contribution. \label{fig:mmd_example}}
\end{figure}

Now, consider the marginal distribution $p$ and a re-weighted treatment group $p^w_t$ on $\cX$, \edit{with $p^w_t(x) = p_t(x)w(x)$ for a weighting function $w$ with mean 1 over $p_t$ as in $\eqref{eq:risk_weighted}$.} Let $\ell \in \cL$ be a pointwise loss on $\cX$. Recall that $R(f_t)$ and $R_t^w(f_t)$ denote the marginal and re-weighted factual risks respectively.  By definition, 
\begin{eqnarray}%
R(\ell) \coloneqq \E_{X \sim p}[\ell] & = & \E_{X\sim p_t^w}[\ell] + \int_{x \in \cX} \ell(x)(p(x) - p^w_t(x))dx \nonumber \\ \label{eq:ipm_pre}
& \leq & \E_{X\sim p_t^w}[\ell] + \sup_{\ell' \in \cL }\left| \int_{x \in \cX} \ell'(x)(p(x) - p^w_t(x))dx \right|~.
\end{eqnarray}%

The second term on the right-hand side in \eqref{eq:ipm_pre} is known as the \emph{integral probability metric} distance (IPM) between $p$ and $p^w_t$ w.r.t. $\cL$, defined as follows \citep{muller1997integral} :
\begin{equation}
\ipm_{\cL}(p,q) := \sup_{\ell \in \cL} \left| \E_{p}[\ell(x)] - \E_{q}[\ell(x)] \right|~.
\end{equation}
Particular choices of $\cL$ make the IPM equivalent to different well-known distances on distributions: With $\cL$ the family of functions in the norm-1 ball in a reproducing kernel Hilbert space (RKHS), $\ipm_{\cL}$ is the Maximum Mean Discrepancy (MMD)~\citep{gretton2012kernel}; When $\cL$ is the family of functions with Lipschitz constant at most 1, we obtain the Wasserstein distance~\citep{villani2008optimal}. As pointed out by e.g., \citet{long2015learning}, the IPM may be viewed as the loss of a treatment group classifier, and adversarial losses may be considered in its place~\citep{ganin2016domain}. In Figure~\ref{fig:mmd_example2}, we illustrate the maximizer $\ell^*$ of the supremum, in terms of its contribution to the expected difference in risk in the MMD case.

Before stating the final form of our bounds on $ R(f_t) - \hat{R}_t^w(f_t)$, we note that for $t \in \{0,1\}$, with $\pi_t = p(T=t)$, we may decompose the population risk $R$ as follows.
\begin{eqnarray}
R(f_t) & = & %\sum_{t' \in \{0,1\}} R_{p_{t'}}(f_t) p(T=t')\\ & = &
\pi_t \underbrace{R_t(f_t)}_{\textnormal{Observable}} + (1-\pi_t)\underbrace{R_{1-t}(f_t)}_{\textnormal{Unobserved}} ~.
\label{eq:RpvsRt}
\end{eqnarray}
The factual risk $R_t(f_t)$ is identifiable under ignorability, as
$$
\ell_{f_t}(X) = \E_{Y(t)\mid X}[L(f_t(X), Y(t)) \mid X]
= \E_{Y\mid X, T}[L(f_t(X), Y) \mid X, T=t]
$$
For this reason, to bound the risk of $f_t$ on the whole population $p$ it is sufficient for us to bound the \emph{counterfactual risk} $R_{1-t}(f_t)$, and estimate $R_t(f_t)$ empirically.
%
% LEMMA
%
\begin{mdframed}[innerbottommargin=.8em,innertopmargin=0em]%
\begin{thmlem}\label{lem:ipm_bound}
{\blockedit%
For a hypothesis $f$ with expected point-wise loss $\ell_{f}(x)$ such that $\ell_{f}/\|\ell_{f}\|_{\cL}  \in \cL$ and any re-weighting $w \in \{w' : w'(x) \geq 0, \E_X[w'(X) \mid T=t] = 1\}$, with $t\in \{0,1\}$,
\begin{align*}
\arraycolsep=1.4pt\def\arraystretch{1.4}
R_{1-t}(f) - & R_t^w(f) \;\leq\;  \|\ell_{f}\|_{\cL}\mipm_{\cL}(p_{1-t}^{\vphantom{w}}, p^w_t)~,
\end{align*}
where $p_t(x) = p(X=x \mid T=t)$. Further, there exist re-weightings $w^*$ such that,
\begin{align}
\arraycolsep=1.4pt\def\arraystretch{1.4}
R_{1-t}(f) - & R_t^{w^*}(f) \;\leq\;  \|\ell_{f}\|_{\cL}\mipm_{\cL}(p_{1-t}^{\vphantom{w^*}}, p^{w^*}_t)
 \;\leq\; \|\ell_{f}\|_{\cL}\mipm_{\cL}(p_{1-t}, p_t)~.
\label{eq:ipm_bound}
\end{align}
}
The first inequality is tight under Assumption~\ref{asmp:overlap} for weights $w(x) = p_1(x) / p_0(x)$. The second is not tight for general $f$ unless $p_0 = p_1$.%

\end{thmlem}
\end{mdframed}
\begin{proof}
  The result follows from the definition of $\ipm$s. \edit{We prove it for $t=0$.}
  \begin{align}
    R_1(f) - R_0^w(f) &=
    \E_{X\mid T}[\ell_{f}(X) \mid T=1] - \E_{X\mid T}[w(X) \ell_{f}(X) \mid T=0] \nonumber \\
    & \leq \left| \E_{X\mid T}[\ell_{f}(X) \mid T=1] - \E_{X\mid T}[w(X) \ell_{f}(X) \mid T=0] \right| \nonumber \\
    & \leq \|\ell_{f}\|_{\cL} \sup_{h\in \cL} \left| \E_{X\mid T}[h(X) \mid T=1] - \E_{X\mid T}[w(X) h(X) \mid T=0] \right| \label{eq:ipmstep}\\
    & = \|\ell_{f}\|_{\cL} \ipm_{\cL}(p_1, p_0^w) \nonumber~.
  \end{align}
  Step \eqref{eq:ipmstep} relies on that $\ell_f/\|\ell_f\|_{\cL}  \in \cL$.
  Further, for importance weights \edit{$w^*(x) = p_1(x)/p_0(x)$}, for any $\ell \in \cL$, under Assumption~\ref{asmp:overlap} (overlap),
  \begin{align*}
    & \E_{X\mid T}[\ell(X) \mid T=1] - \E_{X\mid T}[w^*(x) \ell(x) \mid T=0] \\
    & = \E_{X\mid T}[\ell(X) \mid T=1] - \E_{X\mid T}\left[\frac{p_1(x)}{p_0(x)} \ell(x) \mid T=0\right]  = 0
  \end{align*}
  and the first inequality in \eqref{eq:ipm_bound} is tight, as {\blockedit $\ipm_{\cL}(p_1^{\vphantom{w^*}}, p^{w^*}_0) = 0$}. Given that $\ipm \geq 0$ in general, the second inequality holds in this case as well. If overlap does not hold, the ratio $\frac{p_1(x)}{p_0(x)}$ is not defined on the support of $p_1(x)$. The result for $t=1$ follows analogously.
\end{proof}

%
% COROLLARY
%
\begin{thmcol}\label{col:ipm_bound}
Under the conditions of Lemma~\ref{lem:ipm_bound}, with $\tilde{w}(x) := \pi_t + (1-\pi_t)w(x)$,
\begin{equation}\label{eq:ipm_bound2}
R(f_t) \leq R_t^{\tilde{w}}(f_t) + (1-\pi_t)\|\ell_{f_t}\|_{\cL}\ipm_{\cL}(p_{1-t}^{\vphantom{w_t}}, p^w_t)
\end{equation}
\end{thmcol}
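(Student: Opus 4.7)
The plan is straightforward: chain together the population decomposition in \eqref{eq:RpvsRt} with Lemma~\ref{lem:ipm_bound} applied to the counterfactual risk, and then recognize that the resulting convex combination of $R_t$ and $R_t^w$ is exactly $R_t^{\tilde{w}}$ for the stated choice of $\tilde{w}$.

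Concretely, I would first invoke the identity from \eqref{eq:RpvsRt},
\begin{equation*}
R(f_t) \;=\; \pi_t R_t(f_t) + (1-\pi_t) R_{1-t}(f_t),
\end{equation*}
which just reflects the law of total expectation on $p(X)$ conditioning on $T$. Only the second term is unobserved (the counterfactual risk), so the remaining work is to bound $R_{1-t}(f_t)$ by an observable quantity.

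Next, I would apply Lemma~\ref{lem:ipm_bound} (in its $t \mapsto 1-t$ form, which the lemma explicitly notes holds analogously) to obtain
\begin{equation*}
R_{1-t}(f_t) \;\leq\; R_t^{w}(f_t) + \|\ell_{f_t}\|_{\cL}\, \ipm_{\cL}(p_{1-t}, p_t^w),
\end{equation*}
under the assumed condition $\ell_{f_t}/\|\ell_{f_t}\|_\cL \in \cL$. Substituting this bound into the decomposition yields
\begin{equation*}
R(f_t) \;\leq\; \pi_t R_t(f_t) + (1-\pi_t) R_t^{w}(f_t) + (1-\pi_t)\|\ell_{f_t}\|_{\cL}\,\ipm_{\cL}(p_{1-t}, p_t^w).
\end{equation*}

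Finally, by linearity of expectation,
\begin{equation*}
R_t^{\tilde{w}}(f_t) = \E_{X\mid T}\bigl[(\pi_t + (1-\pi_t) w(X))\,\ell_{f_t}(X)\mid T=t\bigr] = \pi_t R_t(f_t) + (1-\pi_t) R_t^{w}(f_t),
\end{equation*}
which matches the first two terms above and gives the claim. There is no real obstacle here: the only thing to check is that the weighting $\tilde{w}$ in the statement is compatible with the normalization convention $\E_{X\mid T}[\tilde{w}(X)\mid T=t]=1$ used earlier (this holds since both $\pi_t\cdot 1$ and $(1-\pi_t)\E[w(X)\mid T=t]=(1-\pi_t)$ add up correctly when $w$ is itself a valid re-weighting), and that Lemma~\ref{lem:ipm_bound} applies symmetrically in $t$, which is already noted in its statement.
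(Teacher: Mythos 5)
Your proposal is correct and follows exactly the route the paper intends: the paper's proof of this corollary is simply ``follows immediately from Lemma~\ref{lem:ipm_bound},'' and your argument---decomposing $R(f_t)$ via \eqref{eq:RpvsRt}, bounding the counterfactual term with Lemma~\ref{lem:ipm_bound}, and recombining $\pi_t R_t + (1-\pi_t)R_t^{w}$ into $R_t^{\tilde w}$ by linearity---is precisely the omitted expansion. Your check that $\tilde w$ remains a valid re-weighting is a nice touch the paper leaves implicit.
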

\begin{proof}
The result follows immediately from Lemma~\ref{lem:ipm_bound}.
\end{proof}

\begin{thmrem}[Necessity of assumptions]
Lemma~\ref{lem:ipm_bound} and Corollary~\ref{col:ipm_bound} do not strictly speaking depend on Assumption~\ref{asmp:ignorability} (ignorability) due to the definitions of $R(f_t)$ and $\ell_{f_t}$ being made w.r.t. the potential outcomes $Y(t)$. However, to estimate the right-hand side of \eqref{eq:ipm_bound2} from observational data, ignorability is required.  Moreover, neither result depend on Assumption~\ref{asmp:overlap} (overlap) as long as $w(x)$ is defined everywhere on $p_t(x)$. For particular losses, we can avoid making assumptions about $\ell_{f_t}$, by making assumptions on $f_t$ and the hypothesis class $\cH$ instead. This approach was taken by \citet{ben2007analysis}, who used the so-called ``triangle inequality for loss functions'' to give bounds on the risk in unsupervised domain adaptation under assumptions on $\cH$. However, this leads to the rather unattractive property that the resulting bounds are not tight even in the special case that $p_0 = p_1$.
\end{thmrem}

%
% FINITE-SAMPLE BOUNDS
%
\subsection{Bounds based on finite samples}
Adopting results from statistical learning theory, we bound the difference between empirical estimates of $R_t^{w}(f_t)$ and $\ipm_{\cL}(p_{1-t}, p_t^{w})$ and their expected counterparts. These results are then combined to form a bound on $R(f_t)$.

%\subsubsection{Estimating the re-weighted risk}
The re-weighted risk $R_t^{w}$ may be estimated, for a fixed weighting function $w$ by the standard Monte-Carlo method. Consider a sample $\cD = \{(x_1, t_1, y_1), ..., (x_n, t_n, y_n)\} \sim p^n(X, T, Y)$, with $n_t = \sum_{i=1}^n \mathds{1}[t_i = t]$ and define the empirical weighted factual risk,
$$
\hR_t^{w}(f_t) := \frac{1}{n_t}\sum_{i : t_i = t} w(x_i) L(f_t(x_i), y_i)~.
$$
We aim to bound the difference
$$
\Delta(f_t) := R_t^{w}(f_t) - \hR_t^{w}(f_t)~.
$$
To achieve this, we use a result from the literature which builds on the concept of pseudo-dimension $\text{Pdim}(\cH)$ of a function class $\cH$, {\blockedit %
originally due to \citet{pollard1984convergence}. 
\begin{thmdef}[\citet{anthony2009neural}, Def. 11.1--11.2] %
Let $\cH$ be a set of functions mapping from a domain $\cX$ to $\bbR$ and suppose that $\cS = \{x_1, x_2, \ldots, x_m\} \subseteq \cX$. Then, $\cS$ is pseudo-shattered by $\cH$ if there are real numbers $y_1, y_2, \ldots, y_m$ such that for each $b \in \{0,1\}^m$, there is a function $h_b \in \cH$ with $\mbox{sgn}(h_b(x_i) - y_i) = b_i)$ for $1\leq i \leq m$. $\cH$ has pseudo-dimension $d$ if $d$ is the maximum cardinality of a subset $\cS$ of $\cX$ that is pseudo-shattered by $\cH$. If no such maximum exists, we say that $\cH$ has infinite pseudo-dimension. The pseudo-dimension of $\cH$ is denoted $\text{Pdim}(\cH)$. %
\end{thmdef}%
We have modified the result below to account for the variance in the outcome $Y$.
}
%
% LEMMA
%
\begin{mdframed}[innerbottommargin=.8em,innertopmargin=0em]%
\begin{thmlem}[\citet{cortes2010learning}, \edit{Theorem~4}]\label{lem:cortes_bound}
Let $\ell_h = \E_{Y\mid X}[L(h(x), Y)\mid X=x]$ be the expectation of the squared loss $L(y,y') = (y-y)^2$ of a hypothesis $h \in \cH \subseteq \{h' : \cX \rightarrow \mathbb{R}\}$, let $d = \text{\emph{Pdim}}(\{\ell_h : h\in \cH\})$ and let $\sigma_Y^2 = \E_{X,Y}[L(Y, \E_{Y\mid X}[Y\mid X])]$. Then, for a weighting function $w(x)$ such that $\E_X[w(X)] = 1$, with probability at least $1-\delta$ over a sample $((x_1, y_1), ..., (x_n, y_n))$, with empirical distribution $\hat{p}$,
\begin{align*}
R^w(h) \leq \hat{R}^w(h) + V_{p,\hat{p}}[w(x)l_h(x)]\frac{\cC^{\cH}_n}{n^{3/8}} + \sigma_Y^2
\;\;\mbox{ where }\;\;
\cC^{\cH}_n = 2^{5/4} \left(d \log \frac{2ne}{d} + \log \frac{4}{\delta}\right)^{3/8}
\end{align*}
and $
V_{p,\hat{p}}[w(x)l_h(x)] = \max\left(\sqrt{\E_X[w^2(X)\ell_h^2(X)]}, \sqrt{\E_{X\sim \hat{p}}[w^2(X)\ell_h^2(X)]}\, \right)
$.
\end{thmlem}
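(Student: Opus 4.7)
The statement is Theorem 2 of \citet{cortes2010learning} specialized to weighted squared loss. The plan is to follow their argument and assemble three pieces on the right-hand side: the empirical risk $\hR^w(h)$, the complexity contribution $V_{p,\hat p}[w(x)\ell(x)]\cC^{\cH}_n / n^{3/8}$ that quantifies the uniform deviation over $h \in \cH$, and the irreducible noise correction $\sigma_Y^2$. I would treat these in the order they arise in the derivation and only at the end glue them together.

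The core ingredient is a Bennett/Bernstein-style concentration inequality applied to the (potentially unbounded) variable $w(X)L(h(X), Y)$ for fixed $h$. Rather than bounding it by its supremum, \citet{cortes2010learning} control the deviation by its second moment, which is what produces the factor $\sqrt{\E[w^2 \ell_h^2]}$ in the rate. Since the population second moment is not observable, a symmetrization-style step replaces it by the max $V_{p,\hat p}[w(x)\ell(x)] = \max(\sqrt{\E[w^2 \ell_h^2]}, \sqrt{\E_{X\sim\hat p}[w^2 \ell_h^2]})$, which costs only a constant factor. The additive noise term $\sigma_Y^2$ then enters through the squared-loss bias--variance identity $\E_Y[L(Y, h(x)) \mid X=x] = (\mu(x) - h(x))^2 + \mathrm{Var}(Y\mid X=x)$ with $\mu(x) = \E[Y\mid X=x]$, which relates $\ell_h$ to a pointwise bias up to an additive contribution whose expectation is $\sigma_Y^2$; inserting this identity on each side of the concentration bound accounts for the $+\sigma_Y^2$ slack on the right.

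To lift the pointwise inequality to a uniform one over $\cH$, I would apply Haussler's covering-number inequality to the loss class $\{\ell_h : h \in \cH\}$, giving covering numbers of order $(2ne/(d\epsilon))^d$ controlled by the pseudo-dimension $d$. A union bound at scale $\epsilon$, combined with tuning $\epsilon$ so as to balance the Bennett deviation of order $V/\sqrt{n}$ against the covering entropy of order $d\log(2ne/d)$, produces the nonstandard $n^{3/8}$ rate and the explicit constant $2^{5/4}(d\log(2ne/d)+\log(4/\delta))^{3/8}$. The main obstacle, and the bulk of the technical content in \citet{cortes2010learning}, is the precise constant tracking through this Bennett/entropy optimization; since the lemma is quoted verbatim and the pseudo-dimension hypothesis is placed on $\{\ell_h : h\in\cH\}$ exactly as required, the proof here reduces to verifying these hypotheses and invoking their Theorem 2.
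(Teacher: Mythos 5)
The paper offers no proof of this lemma at all---it is imported verbatim as Theorem 2 of \citet{cortes2010learning}---and your proposal likewise reduces to verifying the hypotheses (pseudo-dimension of the loss class, normalized weights) and invoking that theorem, so you are taking essentially the same route. Your sketch of the internal argument of the cited result (second-moment-controlled relative deviations, the max of population and empirical second moments, Haussler covering numbers tuned to yield the $n^{3/8}$ rate, and the bias--variance identity producing the $\sigma_Y^2$ slack) is consistent with the source and requires no correction here.
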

\end{mdframed}
Lemma~\ref{lem:cortes_bound} applies to any valid weighting function $w$, not only importance weights or weights based on balancing scores. Used in conjunction with Corollary~\ref{col:ipm_bound}, Lemma~\ref{lem:cortes_bound} allows us to separate the bias (the IPM-term) and variance (see above) introduced by $w$.

%\subsubsection{Estimating the IPM}
The efficiency with which a sample may be used to estimate $\ipm_\cL$ depends on the chosen function family $\cL$. In particular, the sample complexity of learning the Wasserstein distance between two densities on $\cX$ scales as $\mathcal{O}(d)$ with the dimension $d$ of $\cX$, whereas the kernel-based MMD has $\mathcal{O}(1)$ dependence. Below, we state a result bounding the sample complexity for the MMD with universal kernels.
%
% LEMMA
%
\begin{mdframed}[innerbottommargin=.8em,innertopmargin=0em]%
\begin{thmlem}[\citet{sriperumbudur2009integral}]
  Let $\cX$ be a measurable space. Suppose $k$ is a universal, measurable kernel such that $\sup_{x\in \cX} k(x,x) \leq C \leq \infty$ and $\cL$ the reproducing kernel Hilbert space induced by $k$, with $\nu := \sup_{x\in \cX,f\in \cL}f(x) < \infty$. Then, with $\hat{p}, \hat{q}$ the empirical distributions of $p, q$ from $m$ and $n$ samples, and with probability at least $1-\delta$,
  \begin{align*}
  \left|\mipm_{\cL}(p,q) - \mipm_{\cL}(\hat{p}, \hat{q}) \right|
  \leq \sqrt{18 \nu^2 \log \frac 4 \delta C} \left(\frac{1}{\sqrt m} + \frac{1}{\sqrt n} \right)~.
  \end{align*}
  \label{lem:mmd_approx}
\end{thmlem}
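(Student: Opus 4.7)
The plan is to reduce the two-sample deviation bound to two one-sample deviation bounds using the triangle inequality for the MMD, and then control each one-sample deviation by a concentration argument combined with a Rademacher complexity bound on the expectation. Concretely, since $k$ is universal and measurable, $\mipm_{\cH}$ coincides with the norm of the difference of kernel mean embeddings $\mu_p - \mu_q$ in $\cH$, so it obeys the triangle inequality on probability measures. Applying it twice gives the reverse triangle inequality
\begin{equation*}
\left|\mipm_{\cH}(p,q) - \mipm_{\cH}(\hat p, \hat q)\right| \;\leq\; \mipm_{\cH}(p,\hat p) + \mipm_{\cH}(q,\hat q),
\end{equation*}
so it suffices to control $\mipm_{\cH}(p,\hat p)$ (with $m$ samples) and $\mipm_{\cH}(q,\hat q)$ (with $n$ samples) separately and take a union bound at level $\delta/2$ each.

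Next, I would bound the one-sample quantity $\Phi_m(X_1,\dots,X_m) := \mipm_{\cH}(p,\hat p_m) = \sup_{f\in\cH,\|f\|\le 1}|\E_p f - \frac{1}{m}\sum_i f(X_i)|$ in two stages. First, a bounded-differences argument: replacing $X_i$ by $X_i'$ changes $\Phi_m$ by at most $\sup_{f\in\cH,\|f\|\le 1}|f(X_i)-f(X_i')|/m \leq 2\nu/m$, so McDiarmid's inequality gives
\begin{equation*}
\Phi_m \;\leq\; \E\,\Phi_m + \nu\sqrt{\tfrac{2\log(4/\delta)}{m}}
\end{equation*}
with probability at least $1-\delta/2$. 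Second, a standard symmetrization gives $\E\,\Phi_m \leq 2 R_m(\cH_1)$, where $\cH_1$ is the unit ball of $\cH$; using the reproducing property and Jensen's inequality,
\begin{equation*}
R_m(\cH_1) \;=\; \tfrac{1}{m}\,\E\,\Bigl\|\textstyle\sum_i \sigma_i k(X_i,\cdot)\Bigr\|_{\cH}
\;\leq\; \tfrac{1}{m}\sqrt{\textstyle\sum_i \E\,k(X_i,X_i)} \;\leq\; \sqrt{C/m}.
\end{equation*}
Combining yields $\Phi_m \leq 2\sqrt{C/m} + \nu\sqrt{2\log(4/\delta)/m}$ with probability $1-\delta/2$; repeating for the $q$-sample and applying the union bound delivers a bound of the form $a_\delta(1/\sqrt m + 1/\sqrt n)$, which one then loosens by collecting $C$, $\nu^2$, and $\log(4/\delta)$ into the single prefactor $\sqrt{18\nu^2 C \log(4/\delta)}$ of the stated lemma.

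The routine pieces are the triangle inequality, McDiarmid, and the RKHS Rademacher computation. The main obstacle is purely bookkeeping: one has to verify that the McDiarmid increment really is bounded by $2\nu/m$ under the given definition of $\nu$ (using that $|f(x)|\le \nu$ for all $f$ in the unit ball, which in turn relies on $\sup_x k(x,x)\le C<\infty$ via $|f(x)|=|\langle f,k(x,\cdot)\rangle|\le \sqrt{C}$), and then to inflate the two separate deviation terms into a single prefactor that absorbs both $\nu$ and $C$ so as to match the constants $18$ and $\log(4/\delta)$ in the claimed form. Once those constants are chosen generously, the two-term sum collapses into the advertised expression.
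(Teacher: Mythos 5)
The paper does not prove this lemma; it is imported verbatim (up to notation) from \citet{sriperumbudur2009integral}, so there is no in-paper argument to compare against. Your proposal is essentially the proof given in that reference: triangle inequality for the MMD, McDiarmid with increments $2\nu/m$, symmetrization to the Rademacher complexity of the RKHS unit ball, and the bound $R_m \leq \sqrt{C/m}$, followed by a union bound over the two samples. The constant bookkeeping does close: taking the tight values $\nu = \sup_x\sqrt{k(x,x)}$ and $C=\nu^2$, your per-sample bound $2\sqrt{C/m}+\nu\sqrt{2\log(4/\delta)/m}$ is dominated by $\nu\sqrt{18\log(4/\delta)/m}$ because $2+\sqrt{2L}\leq\sqrt{18L}$ for $L\geq\log 4$; note, though, that the extra factor of $C$ under the radical in the lemma as stated here appears to be a transcription artifact of the source (it makes the right-hand side scale quadratically in the kernel scale while the MMD scales linearly), and your bound only dominates that literal expression when $C\geq 1$.
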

\end{mdframed}
%In Lemma~\ref{lem:mmd_approx}, $C$ and $\nu$ represent the complexityof the function class $\cH$  which may be controlled by regularization.
The Gaussian RBF kernel $k(x, x') = e^{-\|x-x'\|^2_2 / (2\sigma^2)}$, with bandwidth $\sigma > 0$, is an important class of universal kernels to which Lemma~\ref{lem:mmd_approx} applies.

%\subsubsection{A finite-sample bound on counterfactual risk}
With Lemmas~\ref{lem:ipm_bound}--\ref{lem:mmd_approx} in place, we can now state our main result.
%
%
% THEOREM
%
\begin{mdframed}[innerbottommargin=.8em,innertopmargin=0em]%
\begin{thmthm}
  Assume that ignorability (Assumption~\ref{asmp:ignorability}) holds w.r.t. $X$. Given is a sample $(x_1 , t_1, y_1)$, $...$, $(x_n, t_n, y_n) \overset{i.i.d.}{\sim} p(X, T, Y)$ with empirical measure $\hat{p}^n$ and $n_t := \sum_{i=1}^n\mathds{1}[t_i = t]$ for $t\in \{0,1\}$. Let $f_t(x) \in \cH$ be a hypothesis of $\E_{Y(t)\mid X}[Y(t)\mid X=x]$ and $\ell_{f_t}(x) := \E_{Y\mid X}[L(f_t(x), Y(t)) \mid X=x]$ where $L(y,y') = (y-y')^2$. Assume that there exists a constant $B > 0$ such that, $\ell_{f_t}(x)/B \in \cL$, where $\cL$ is a reproducing kernel Hilbert space of a kernel, $k$ such that $k(x, x) < \infty$. Finally, let $w_t : \cX \rightarrow \mathbb{R}_+$ be a valid re-weighting of $p_{t}$, $\E[w(X)\mid T=t] = 1$, and let $\tilde{w}(x) = \pi_t + (1-\pi_t)w(x)$, where $\pi_t = p(T=t)$. With probability at least $1-2\delta$,
  \begin{eqnarray*}
  R^{\vphantom{w_t}}(f_t) & \leq & \hat{R}^{\tilde{w}}_{t}(f_t) + B (1-\pi_t)\ipm_{\cL}(\hat{p}_{t}^{w_t}, \hat{p}^{\vphantom{w_t}}_{1-t}) \\
   & + & V_{p_t}(\tilde{w}, \ell_{f_t})\frac{\cC_{n_t,\delta}^{\cH}}{n_t^{3/8}} + \cD^{\cL}_{n_0,n_1,\delta}\left(\frac{1}{\sqrt{n_0}} + \frac{1}{\sqrt{n_1}}\right)
   + \sigma^2_{Y(t)}
 \end{eqnarray*}
  where $\cC_{n_t,\delta}^{\cH}$ is a function of the pseudo-dimension of $\cH$, {\blockedit $\cD^{\cL}_{n_0,n_1,\delta}$ is a function of the kernel norm of $\cL$ (see Lemma~\ref{lem:mmd_approx})}, both only with logarithmic dependence on $n$, $\sigma^2_{Y(t)}$ is the expected variance in $Y(t)$, and
  $
  V_{p_t}(w_t, \ell_{f_t}) = \max\left(\sqrt{\E_{p_t}[\tilde{w_t}^2\ell_{f_t}^2]}, \sqrt{\E_{\hat{p_t}}[\tilde{w_t}^2\ell_{f_t}^2]}\right)
  $.
  A similar bound exists where $\cL$ is the family of functions with Lipschitz constant at most 1 and $\mipm_{\cL}$ the Wasserstein distance, but with worse sample complexity. 
\label{thm:main}
\end{thmthm}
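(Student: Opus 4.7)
The plan is to combine Corollary~\ref{col:ipm_bound} with Lemmas~\ref{lem:cortes_bound} and \ref{lem:mmd_approx} via two concentration arguments and a union bound. The target inequality has four ingredients on the right-hand side that correspond naturally to the four quantities one must handle: an empirical re-weighted risk, an empirical IPM, a concentration slack from estimating the weighted risk, and a concentration slack from estimating the IPM.

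First I would instantiate Corollary~\ref{col:ipm_bound} at the population level. The assumption $\ell_{f_t}/B \in \cL$ means $\|\ell_{f_t}\|_\cL \leq B$, and $\tilde w$ is a valid re-weighting of $p_t$ because $\E[\tilde w \mid T=t] = \pi_t + (1-\pi_t)\E[w\mid T=t] = 1$. Hence
$$
R(f_t) \;\leq\; R_t^{\tilde{w}}(f_t) \;+\; B(1-\pi_t)\,\ipm_{\cL}(p_{1-t}, p_t^{w}).
$$

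Next I would replace each population quantity on the right by its empirical counterpart plus a concentration term. For the risk, applying Lemma~\ref{lem:cortes_bound} with the weighting $\tilde w$ and the RKHS-bounded loss $\ell_{f_t}$ gives, with probability at least $1-\delta$,
$$
R_t^{\tilde{w}}(f_t) \;\leq\; \hat R_t^{\tilde{w}}(f_t) \;+\; V_{p_t}(\tilde w, \ell_{f_t})\,\frac{\cC^{\cH}_{n_t,\delta}}{n_t^{3/8}} \;+\; \sigma_{Y(t)}^{2}.
$$
For the IPM, Lemma~\ref{lem:mmd_approx} (applied in its weighted-sample form; the McDiarmid argument underlying it still gives bounded differences when $w$ is uniformly bounded) yields, with probability at least $1-\delta$,
$$
\ipm_{\cL}(p_{1-t}, p_t^{w}) \;\leq\; \ipm_{\cL}(\hat p_{1-t}, \hat p_t^{w}) \;+\; \cD^{\cL}_{n_0,n_1,\delta}\!\left(\tfrac{1}{\sqrt{n_0}} + \tfrac{1}{\sqrt{n_1}}\right),
$$
where $\cD^{\cL}_{n_0,n_1,\delta}$ absorbs the $\sqrt{18\nu^2 C \log(4/\delta)}$ factor of Lemma~\ref{lem:mmd_approx} together with the constant slack introduced by the re-weighting.

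Finally I would take a union bound over these two high-probability events (cost $2\delta$) and chain the three inequalities, multiplying the IPM concentration by the factor $B(1-\pi_t)$ already present from Corollary~\ref{col:ipm_bound}. This produces exactly the stated bound, with all constants $\cC^{\cH}$ and $\cD^{\cL}$ depending only logarithmically on $n_t, \delta$ and on the pseudo-dimension of $\cH$ and the kernel norm of $\cL$, respectively. The Wasserstein version follows by replacing Lemma~\ref{lem:mmd_approx} with a Wasserstein-distance concentration inequality whose sample complexity carries an additional dimension factor. The main subtlety of the argument is justifying the weighted-empirical extension of Lemma~\ref{lem:mmd_approx}: this requires that $w$ be uniformly bounded (guaranteed by Assumption~\ref{asmp:overlap} together with any bound on $\pi_t$ from $0$ and $1$), so that bounded-difference concentration still applies to the supremum defining the IPM.
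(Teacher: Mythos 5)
Your proposal is correct and follows essentially the same route as the paper, whose proof is simply the chaining of Lemma~\ref{lem:ipm_bound} (via Corollary~\ref{col:ipm_bound}) with Lemmas~\ref{lem:cortes_bound} and~\ref{lem:mmd_approx} plus a union bound over the two $\delta$-events. You are in fact more explicit than the paper on the one real subtlety---that Lemma~\ref{lem:mmd_approx} is stated for unweighted empirical measures, so its application to $\hat{p}_t^{w}$ needs a bounded-weights (bounded-differences) extension that the paper leaves implicit; note only that plain overlap ($p(T=t\mid X=x)>0$) does not by itself give a uniform bound on $w$, so that boundedness is an additional, if mild, requirement on the chosen weighting.
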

\end{mdframed}
\begin{proof}
  The result follows from application Lemmas~\ref{lem:cortes_bound}--\ref{lem:mmd_approx} to Lemma~\ref{lem:ipm_bound} and is given in larger generality in Theorem~\ref{thm:main_rep}. The $n^{3/8}$ rate may be improved at the cost of a more complicated expression, see discussion following Theorem 3 in~\citep{cortes2010learning}.
\end{proof}

{\blockedit 
\begin{thmrem}%
The spaces of hypotheses $\cH$ and the space of loss functions $\cL$ appear in Theorem~\ref{thm:main} through two different measures of complexity, the pseudo-dimension of $\cH$ and the kernel norm of losses in $\cL$. It is worth pointing out that $\cL$, which covers the loss functions of the hypotheses in $\cH$, may be larger than $\cH$, while the bound remains informative. For example, in experiments, we let $\cH$ be a family of fixed-size deep neural networks and $\cL$ the functions of an RKHS given by an RBF kernel of a given bandwidth. The pseudo-dimension of such an $\cH$, and therefore $\cC^\cH$, is bounded. The second complexity term, $\cD^\cL$, is bounded too---irrespective of whether the pseudo dimension of $\cL$ is bounded.
\end{thmrem}
}

We can also immediately state the following corollary.

\begin{thmcol}\label{col:main}
Assume that the conditions of Theorem~\ref{thm:main} hold. Let $f(x, t) := f_t(x)$ and let $\hR^w_p(f) := \sum_{i=1}^n w(x_i, t_i) L(f(x_i, t_i), y_i)/n$ represent the weighted empirical factual risk. Then, with $\tilde{w}(x, t) \coloneqq w_t(x) / \pi_t$, $n_{\textnormal{min}} = \min(n_0, n_1)$ and $\sigma^2 = \max(\sigma^2_{Y(0)},\sigma^2_{Y(1)})$ there is a term $K_{\cL, \cH, w, \delta, n_0, n_1}$ with at most logarithmic dependence on $n_0, n_1$,  such that
  $$
  \frac{R(\htau)}{2} \leq  \hat{R}^{\tilde{w}_t}_{p}(f)
   + B \left[ \pi_0\ipm_{\cL}(\hat{p}_0^{\vphantom{w_t}}, \hat{p}_1^{w_1}) + \pi_1 \ipm_{\cL}(\hat{p}_1^{\vphantom{w_t}}, \hat{p}_0^{w_0})\right]
   + \frac{K_{\cL, \cH, w, \delta, n_0, n_1}}{n_{\textnormal{min}}^{3/8}}
   + 2\sigma^2~.
 $$
 A tighter result may be obtained by decomposing the constant $K$. 
\end{thmcol}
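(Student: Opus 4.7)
The plan is straightforward: reduce the CATE risk to the two potential-outcome risks via Lemma~\ref{lem:cate_pot}, apply Theorem~\ref{thm:main} once per treatment arm, and then bookkeep the resulting terms. Concretely, Lemma~\ref{lem:cate_pot} with the squared loss gives $R(\htau) \leq 2(R(f_0)+R(f_1)) - 4\sigma_Y$ where $\sigma_Y = \min(\sigma^2_{Y(0)},\sigma^2_{Y(1)})$; dividing by two yields
\[
\tfrac{1}{2}R(\htau) \;\leq\; R(f_0) + R(f_1) - 2\sigma_Y.
\]

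Next I would apply Theorem~\ref{thm:main} to $R(f_0)$ (with weighting $w_0$) and to $R(f_1)$ (with weighting $w_1$), each at failure probability $\delta$, and take a union bound so that both inequalities hold simultaneously with probability at least $1-2\delta$. For each $t$, the theorem produces a weighted empirical factual risk $\hR^{\tilde w_t}_{p_t}(f_t)$, an IPM term $B(1-\pi_t)\,\ipm_{\cL}(\hat p_{1-t},\hat p_t^{w_t})$, the finite-sample slack $V_{p_t}(\tilde w_t,\ell_{f_t})\,\cC^{\cH}_{n_t,\delta}/n_t^{3/8} + \cD^{\cL}_{n_0,n_1,\delta}(1/\sqrt{n_0}+1/\sqrt{n_1})$, and the noise term $\sigma^2_{Y(t)}$. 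Summing over $t$, the IPM coefficients become $1-\pi_0=\pi_1$ and $1-\pi_1=\pi_0$, reproducing exactly the bracketed IPM expression in the corollary.

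The step requiring the most care is identifying the sum $\hR^{\tilde w_0}_{p_0}(f_0) + \hR^{\tilde w_1}_{p_1}(f_1)$ of per-group weighted empirical risks with a single joint weighted risk $\hR^{\tilde w_t}_p(f)$ over the whole sample. This is done by splitting the joint average $\tfrac{1}{n}\sum_{i=1}^n \tilde w_t(x_i,t_i) L(f_{t_i}(x_i),y_i)$ according to the value of $t_i$ and matching prefactors: the joint weight $\tilde w_t(x,t) = w_t(x)/\pi_t$ declared in the corollary reconstructs each per-group risk up to a factor $n_t/(n\pi_t)$, whose deviation from $1$ is $O_p(n^{-1/2})$ by a Chernoff bound and is absorbed into the constant $K$. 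The same bookkeeping reconciles Theorem~\ref{thm:main}'s additive convention $\tilde w(x) = \pi_t+(1-\pi_t)w_t(x)$ with the multiplicative convention $\tilde w_t(x,t) = w_t(x)/\pi_t$ used in the corollary.

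Finally, since $n_0,n_1 \geq n_{\min}$ and $n_{\min}^{-1/2} \leq n_{\min}^{-3/8}$, all finite-sample terms collapse into a single expression $K_{\cL,\cH,w,\delta,n_0,n_1}/n_{\min}^{3/8}$ with only logarithmic dependence on $n_0,n_1$ through $\cC^{\cH}$, $\cD^{\cL}$, and $V_{p_t}$. The noise residual satisfies $\sigma^2_{Y(0)} + \sigma^2_{Y(1)} - 2\sigma_Y \leq 2\max(\sigma^2_{Y(0)},\sigma^2_{Y(1)}) = 2\sigma^2$. The principal obstacle throughout is the notational reconciliation between the two weight-function conventions described above; once that is settled, the argument is essentially additive aggregation of two instances of Theorem~\ref{thm:main}.
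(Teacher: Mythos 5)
Your proposal is correct and follows exactly the route the paper intends: the paper gives no explicit proof of Corollary~\ref{col:main}, stating only that it is immediate from Lemma~\ref{lem:cate_pot} and Theorem~\ref{thm:main}, which is precisely your decomposition (halve the Lemma~\ref{lem:cate_pot} bound, apply Theorem~\ref{thm:main} once per arm with a union bound, and aggregate the IPM, finite-sample, and noise terms). Your explicit reconciliation of the two weight conventions and the $n_t/(n\pi_t)$ prefactor is in fact more careful than the paper's own treatment, which glosses over this bookkeeping entirely.
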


Theorem~\ref{thm:main} and Corollary~\ref{col:main} hint at several interesting dependencies between generalization error, treatment group imbalance, sample re-weighting schemes and the choice of hypothesis class. We comment on these below.

\paragraph{Bounds with overlap and known propensity scores.} If overlap is satisfied and propensity scores known, applying importance weights $w_t(x) = p(T=t\mid X=x)/p(T=1-t\mid X=x)$ in Theorem~\ref{thm:main} leads to a tight bound in the limit of infinite samples (IPM and variance terms approach zero, re-weighted risk approaches desired population risk). A special case of this is the randomized controlled trial (RCT), in which $T$ has no dependence on $X$. In this setting, the IPM-terms depend only on the finite-sample differences between treatment groups---which may still be useful to characterize. It has been shown that under overlap, in the asymptotic limit, the best achievable sample complexity is unrelated to the imbalance of $p_0$ and $p_1$~\citep{alaa2018limits}. However, this setting is not our main concern as we are specification agnostic and focus on the finite-sample case.

{\blockedit 
\paragraph{Bounds without overlap.} Theorem~\ref{thm:main} \emph{does not} rely on treatment group overlap. Instead, it relies on an assumption that the true loss (w.r.t. features $X$ and potential outcome $Y(t)$) is a function in the given family $\cL$. The bound requires that the weights used for re-weighting, $w_t(x)$, are defined everywhere on $p_t(x)$ for $t\in {0,1}$. Importantly, if for some $x \in \cX$, $p_{1-t}(x)=0$ and $p_{t}(x)>0$, the density ratio $p_{t}(x)/p_{1-t}(x)$ is not defined and is not admissible as weighting function. Hence, the re-weighted densities cannot be equal in such regions and the IPM will be non-zero in this case. As a result, the bound cannot be tight in the general case without overlap, but may be informative if the lack of overlap is small. We return to the question of overlap in the next section, following Theorem~\ref{thm:main_rep}.
}

\paragraph{Bias and variance.} The term $V_{p_t}(\tilde{w}, \ell_{f_t})$ in Theorem~\ref{thm:main} shows that a less uniform re-weighting $w$ leads to larger variance (dependence on $n$). However, if $p_0$ and $p_1$ are very different, a non-uniform (balancing) $w$ is required to ensure unbiasedness, e.g., by making $p_t^{w_t} = p_{1-t}$. This indicates that $w$ introduces a bias-variance trade-off on top of the one typical for supervised learning. In particular, even if the true treatment propensity $\eta$ is known, a biased weighting scheme may lead to a smaller bound on the population risk when $p_0$ and $p_1$ are far apart.

\paragraph{Imbalance in non-confounders.} The size of the bounds in Theorem~\ref{thm:main} and Corollary~\ref{col:main} clearly depends on the quality of the hypothesis $f$ and the choice of re-weighting $w$. In addition, the IPM terms depend heavily on the input space $\cX$. In particular, if variables included in $\cX$ are predictive of $T$ but not predictive of $Y$, e.g., if they are instrumental variables~\citep{ding2017instrumental}, they will contribute to the IPM term but not to the expected risk, loosening the bound needlessly. If we can learn to ignore such information, we may obtain a tighter bound. To this end, in the next section, we derive bounds for representations $\Phi(X)$ of the original feature space.

\paragraph{Generalization under policy and domain shift.}
Predicting the conditional treatment effect for an individual may be viewed as predicting the effect of a change in treatment policy from one alternative to another. This notion may be generalized further by considering the estimation of treatment effects for change in policy on a population that differs from the one learned from. Specifically, this would involve a change not only in $p(T\mid X)$ but in  $p(X)$ as well. We studied this extended problem in~\citet{johansson2018learning}, and referred to changes in both policy $p(T\mid X)$ and domain $p(X)$ as a change in \emph{design}. We do not cover this setting in detail here.

%%%%%%%%%%%%%%%%%%%%%%%%%%%%%%%%%%%%%%%%%%%%%%%%%%%%%%%%%%%%%%%%%%%%%%%%%%%%%%%%%%%%
%
% REPRESENTATION LEARNING
%
%%%%%%%%%%%%%%%%%%%%%%%%%%%%%%%%%%%%%%%%%%%%%%%%%%%%%%%%%%%%%%%%%%%%%%%%%%%%%%%%%%%%

%
% REPRESENTATION LEARNING
%
\section{Generalization bounds for representation learning}
\label{sec:rep_bound}
When the input space $\cX$ increases in dimension, treatment groups $p_t(X)$ tend to grow increasingly different~\citep{d2017overlap} and, in general, this will lead to a looser bound in Theorem~\ref{thm:main}. To some extent, this can be mitigated by appropriately chosen weights $w$, but the additional finite-sample variance introduced by highly non-uniform weights may prevent tightening of the bound. In this section, we introduce another tool for minimizing bounds on the marginal risk in hypotheses that act on learned (potentially lower-dimensional) representations of the covariates $X$, see Figure~\ref{fig:rep} for an illustration. This allows hypotheses to focus their modeling power on particular aspects of the covariate space, ignoring others.

\begin{figure}
    \centering
    \includegraphics[width=.98\textwidth]{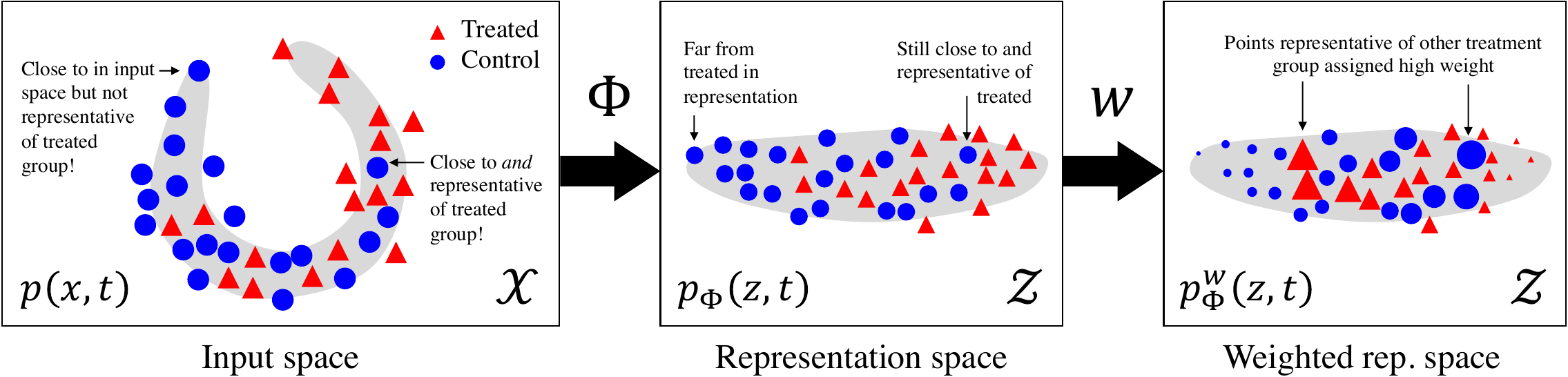}
    \caption{{\blockedit Treatment group imbalance in the observed context $X$ (left). Treated and controls have different distributions. Mapping contexts $x \in \cX$ to a representation space $\cZ$ using an embedding function $\Phi$ (middle) can allow for more efficient comparison of treatment groups and re-weighting (right) of observations to achieve a more balanced sample. In Section~\ref{sec:rep_bound}, we give generalization bounds for learning such representations and in Section~\ref{sec:experiments}, we use neural networks to fit them.}}
    \label{fig:rep}
\end{figure}

In many applications, the input distribution $p(X)$ is believed to be a low-dimensional manifold embedded in a high-dimensional space $\cX$, for example, the space of portraits embedded in the pixels of a photograph. In such settings, the best hypotheses are often simple functions of low-dimensional representations $\Phi(X)$ of the input~\citep{bengio2013representation}. The most famous examples of this are image and speech recognition for which representation learning using convolutional and recurrent neural networks advanced each field tremendously in only a few years~\citep{lecun2015deep}.

Let $\cE \subset \{\cX \rightarrow \cZ\}$ denote a family of representation functions (embeddings) of the input space $\cX$ into a space $\cZ \subset \bbR^v$ and let $\Phi \in \cE$ denote such an embedding function. \edit{For all theoretical results, we will assume that $\Phi$ is twice-differentiable and invertible.} Further, let $\cG \subseteq \{h : \cZ \rightarrow \cY\}$ denote a set of hypotheses $h(\Phi)$ operating on the representation $\Phi$ and let $\cH$ be the space of all such compositions, $\cH = \{f = h \circ \Phi : h \in \cG, \Phi \in \cE\}$.  Generalizing our discussion up to this point, we consider learning of $\Phi(X)$ from data with the goal of minimizing the marginal risk of hypotheses $h_t \circ \Phi$ for treatments $t \in \{0,1\}$.

For CATE to be identifiable from observations of $p(\Phi(X), T, Y)$, we need precisely the same requirements on $\Phi$ as previously on $X$, \emph{ignorability} and \emph{overlap},
\begin{equation}\label{eq:asmp_rep}
\forall t\in \{0,1\} : \underbrace{Y(t) \indep T \mid \Phi(X)}_{\textnormal{Ignorability}} \;\; \textnormal{ and } \;\; \forall z \in \cZ : \underbrace{p(T=t \mid \Phi(X) = z) > 0}_{\textnormal{Overlap}}~.
\end{equation}

Verifying the assumptions in \eqref{eq:asmp_rep} for a given $\Phi$, based on observational data alone, is impossible, just as for $X$. To address this, we consider learning twice-differentiable, invertible representations $\Phi : \cX \rightarrow \cZ$ where $\Psi : \cZ \rightarrow \cX$ is the \emph{inverse} representation, such that $\Psi(\Phi(x)) = x$ for all $x$. For treatment groups $t \in \{0,1\}$, we let $p_{\Phi,t}(z)$ be the distribution induced by $\Phi$ over $\cZ$, with $p^w_{\Phi,t}(z) := p_{\Phi,t}(z)w(\Psi(z))$ its re-weighted form and $\hat{p}^w_{\Phi,t}$ its re-weighted empirical form, following our previous notation. If $\Phi$ is invertible, ignorability and overlap in $X$ implies ignorability and overlap in $\Phi(X)$, as $p(\Phi(X)=z) = p(X=\Psi(z))$.

Building on Section~\ref{sec:theory}, we can now relate the expected marginal risk $R(h_t \circ \Phi)$ to the (expected) re-weighted factual risk $R^w(h_t \circ \Phi)$.
%
% LEMMA
%

\begin{mdframed}[innerbottommargin=.8em,innertopmargin=0em]%
\begin{thmlem}\label{lem:exp_bound_rep}
  Suppose that $\Phi$ is a twice-differentiable, invertible representation, that $h_t(\Phi) \in \cG$ is a hypothesis, and that $f_t(x) = h_t \in \cH$ for $t\in \{0,1\}$. Let $\ell_{\Phi,h_t}(z) := \E_Y[L(h_t(z), Y(t))\mid X=\Psi(z)]$ be the expected pointwise loss given a representation $z$, where $L(y,y') = (y-y')^2$. {\blockedit Let $C_\Phi$ be a constant such that $\ell_{\Phi,h_t}\cdot |J_{\Psi}|/C_\Phi \in \cL$ with $\cL \subseteq \{\ell : \cZ \rightarrow \mathbb{R}_+\}$  where $J_{\Psi}(z)$ is the  Jacobian determinant of the representation inverse $\Psi=\Phi^{-1}$}. Then, with $\pi_t = p(T=t)$ and $w$ a valid re-weighting of $p_{\Phi,t}$, with $\tilde{w}$ defined as in Theorem~\ref{thm:main}, $\tilde{w}(x) = \pi_t + (1-\pi_t)w(x)$, s
  \begin{align}\label{eq:thm_main_exp_rep}
  R(f_t) & \leq \pi_t R^{\tilde{w}}_t(f_t) + (1 - \pi_t)\; C_\Phi \cdot \text{\emph{IPM}}_{\cL}(p_{\Phi,1-t}^{\vphantom{w}}, p_{\Phi,t}^{w})~.
  \end{align}
\end{thmlem}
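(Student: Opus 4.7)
The plan is to obtain this representation-space bound as a direct corollary of Lemma~\ref{lem:ipm_bound}, but applied on the representation space $\cZ$ rather than the input space $\cX$. The starting point is the same treatment-group decomposition used to derive Corollary~\ref{col:ipm_bound},
\begin{equation*}
R(f_t) \;=\; \pi_t R_t(f_t) \;+\; (1-\pi_t) R_{1-t}(f_t),
\end{equation*}
which isolates the counterfactual risk $R_{1-t}(f_t)$ as the only quantity that still requires further bounding; the factual term can be absorbed together with a weighted counterpart on the $t$ side into the single expression $\pi_t R_t^w(f_t)$ as in the statement.

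The central step is to rewrite the risks as expectations on $\cZ$ so that Lemma~\ref{lem:ipm_bound} applies with $\cL$ a class of functions on the representation space. Since $\Phi$ is twice-differentiable and invertible with inverse $\Psi$, the standard change-of-variables formula gives the induced density $p_{\Phi,t}(z) = p_t(\Psi(z))\,|J_\Psi(z)|$, and the definitions yield $\ell_{f_t}(x) = \ell_{\Phi,h_t}(\Phi(x))$, so that
\begin{equation*}
R_{1-t}(f_t) \;=\; \E_{Z\sim p_{\Phi,1-t}}\!\left[\ell_{\Phi,h_t}(Z)\right]
\quad\text{and}\quad
R_t^w(f_t) \;=\; \E_{Z\sim p_{\Phi,t}^w}\!\left[\ell_{\Phi,h_t}(Z)\right].
\end{equation*}
Applying Lemma~\ref{lem:ipm_bound} on $\cZ$ with loss $\ell_{\Phi,h_t}$ and these induced densities gives
\begin{equation*}
R_{1-t}(f_t) \;\leq\; R_t^w(f_t) \;+\; \|\ell_{\Phi,h_t}\|_\cL \cdot \ipm_\cL\!\left(p_{\Phi,1-t},\, p_{\Phi,t}^w\right).
\end{equation*}
Substituting this into the decomposition yields the claim, provided $\|\ell_{\Phi,h_t}\|_\cL \leq C_\Phi$.

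The only part that deserves comment is why the constant $C_\Phi$ factorizes as $A_\Phi B_\Phi$, with $A_\Phi$ a bound on the Jacobian of $\Psi$. This is where the geometry of the representation enters: even if the intrinsic scale of the loss on $\cX$ is controlled by $B_\Phi$, the composed loss $\ell_{\Phi,h_t} = \ell_{f_t}\circ\Psi$ on $\cZ$ can be inflated in the $\cL$-norm by however much $\Psi$ locally stretches distances, which is precisely what $A_\Phi$ bounds. In the statement this is handled cleanly by the assumption $\ell_{\Phi,h_t}/C_\Phi \in \cL$, so the proof reduces to an algebraic substitution. I expect the genuine work — which does not need to appear in this lemma but will be needed whenever the bound is instantiated — to lie in verifying this assumption for specific choices of $\cL$ (e.g.\ an RKHS with a universal kernel, or the unit ball of Lipschitz functions), where the Jacobian bound must be translated into a concrete norm inequality via the chain rule and the smoothness of $k$ or of $h_t$.
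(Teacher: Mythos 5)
Your proposal follows the same route as the paper's proof: decompose $R(f_t)$ into factual and counterfactual parts, change variables to $\cZ$, and invoke the machinery of Lemma~\ref{lem:ipm_bound} on the induced densities. There is, however, one place where your execution is genuinely cleaner. You absorb the Jacobian into the induced density, $p_{\Phi,t}(z)=p_t(\Psi(z))\,|J_\Psi(z)|$, so the change of variables is exact and the only hypothesis actually consumed is $\|\ell_{\Phi,h_t}\|_{\cL}\le C_\Phi$; the paper instead keeps $|J_\Psi(z)|$ explicit and replaces it by the upper bound $A_\Phi$ inside an integral whose factor $p_{1-t}-p_t^{w}$ changes sign, a step that is not valid pointwise as written. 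Your version makes transparent that $A_\Phi$ plays no logical role in the lemma once $\ell_{\Phi,h_t}/C_\Phi\in\cL$ is assumed, and that the factorization $C_\Phi=A_\Phi B_\Phi$ only matters when one tries to \emph{verify} that assumption for a concrete $\cL$ --- exactly the point you make at the end. The one step you wave through is the ``absorption'' of $\pi_t R_t(f_t)+(1-\pi_t)R_t^{w}(f_t)$ into $\pi_t R_t^{w}(f_t)$: this is not an identity for general $w$, and what your derivation actually establishes is $R(f_t)\le \pi_t R_t(f_t)+(1-\pi_t)R_t^{w}(f_t)+(1-\pi_t)\,C_\Phi\,\ipm_{\cL}(p_{\Phi,1-t},p_{\Phi,t}^{w})$, i.e., the bound with the mixed weighting $\tilde w=\pi_t+(1-\pi_t)w$ of Corollary~\ref{col:ipm_bound} rather than the display in the statement. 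The paper's own proof opens with the same non-identity, so this is best read as an imprecision inherited from the statement rather than a gap specific to your argument --- but you should write down the bound your steps actually prove rather than asserting the absorption.
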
%
\end{mdframed}
\begin{proof}
By \eqref{eq:RpvsRt}, with $\ell_{f_t}(x) := \E_Y[L(f_t(x), Y(t)) \mid X=x]$, we have that
\begin{eqnarray*}
R(f_t) & = & \pi_t R_t^{\tilde{w}}(f_t)
 +  (1-\pi_t)\int_{x \in \cX}  \ell_{f_t}(x) \left(p_{1-t}(x) - p_t^{w}(x) \right)dx~.
\end{eqnarray*}
Then, by the standard change of variables, assuming that $\Phi$ is invertible, we have
{\blockedit 
\begin{eqnarray*}
\int_{x \in \cX} \ell_{f_t}(x) \left(p_{1-t}(x) - p_t^{w}(x) \right)dx
& = & \int_{z \in \cZ}  \left(p_{\Phi,1-t}(z) - p_{\Phi,t}^{w}(z) \right) \ell_{f_t}(\Psi(z))|J_{\Psi}(z)| dz \\
& = & \int_{z \in \cZ}  \left(p_{\Phi,1-t}(z) - p_{\Phi,t}^{w}(z) \right) \ell_{\Phi,h_t}(z)|J_{\Psi}(z)| dz \\
& \leq & C_\Phi \cdot \sup_{\ell \in \cL} \int_{z \in \cZ} \ell(z) \left(p_{\Phi,1-t}(z) - p_{\Phi,t}^{w}(z) \right) dz~.  \\
& = & C_\Phi \cdot \text{IPM}_{\cL}(p_{\Phi,1-t}^{\vphantom{w}}, p_{\Phi,t}^{w})~.
\end{eqnarray*}}%
Here, we have used the fact that, \edit{for} invertible $\Phi$, $p_{\Phi}(Z = \Phi(x)) = p(X=x)$.
\end{proof}

\paragraph{The scale of $\Phi$ and the factor $C_\Phi$.}
Comparing Lemma~\ref{lem:exp_bound_rep} (bound in representation) to Corollary~\ref{col:ipm_bound} (original space), we notice two immediate differences: the additional factor $C_\Phi$ and the change from measuring distributional distance in $X$ to doing so in $Z$, via $\Phi$. The most illustrative example for why $C_\Phi$ is necessary is when $\Phi$ simply reduces the scale of $X$, i.e., when $\Phi(x) = x/a$ for $a>1$. IPMs often vary with the scale of the space in which they are applied and we could reduce the right-hand side of the bound simply by scaling down $X$ were it not for $C_\Phi$, which counteracts this reduction.

\paragraph{The influence of $\Phi$ on the IPM}
Measuring distributional distance in $\Phi$ with a fixed IPM family $\cL$ means that we may emphasize or de-emphasize part of the covariate space, even when $\Phi$ is invertible. For example, if $\Phi$ is a linear function that scales down a component $X_d$ of $X$ substantially, and $\cL$ is a family of linear functions with bounded norm, the influence of distributional differences in $X(d)$ on the IPM is reduced.

With Lemma~\ref{lem:exp_bound_rep} in place, we can now state the result for the finite-sample case by following the same steps as in Section~\ref{sec:theory}.

\begin{mdframed}[innerbottommargin=.8em,innertopmargin=0em]%
\begin{thmthm}\label{thm:main_rep}
  Given is a sample $(x_1, t_1, y_1), ..., (x_n, t_n, y_n) \overset{i.i.d.}{\sim} p(X, T, Y)$ with empirical measure $\hat{p}$. Assume that ignorability (Assumption~\ref{asmp:ignorability}) holds w.r.t. $X$. Suppose that $\Phi$ is a twice-differentiable, invertible representation, that $h_t(\Phi)$ is a hypothesis on $\cZ$, and $f_t = h_t(\Phi(x)) \in \cH$. Let $\ell_{\Phi, h_t}(z) := \E_Y[L(h_t(z), Y(t)) \mid X=\Psi(z)]$ where $L(y,y') = (y-y')^2$. Further, let $A_\Phi$ be a constant such that
  $\forall z\in \cZ : A_\Phi \geq |J_{\Psi}(z)|$, where $J_{\Psi}(z)$ is the Jacobian of the representation inverse $\Psi$, and assume that there exists a constant $B_\Phi > 0$ such that, with $C_\Phi := A_\Phi B_\Phi$,  $\ell_{\Phi,h_t}/C_\Phi \in \cL$, where $\cL$ is a reproducing kernel Hilbert space of a kernel, $k$ such that $k(x, x) < \infty$. Finally, let $w$ be a valid re-weighting of $p_{\Phi, t}$. Then, with probability at least $1-2\delta$,
  {\blockedit
  \begin{align}
  R_{1-t}^{\vphantom{w}}(f_t) & \leq \hat{R}^{w}_t(f_t) + C_\Phi\cdot \text{\emph{IPM}}_{\cL}(\hat{p}_{\Phi, 1-t}, \hat{p}_{\Phi, t}^{w}) \nonumber \\
   & + V_{p_t}(w, \ell_{f_t})\frac{\cC_{n_t,\delta}^{\cH}}{n_t^{3/8}} + \cD^{\Phi,\cL}_{n_0,n_1,\delta}\left(\frac{1}{\sqrt{n_0}} + \frac{1}{\sqrt{n_1}}\right)
   + \sigma^2_{Y(t)}%
   \label{eq:thm_rep_bound}
 \end{align}
 }
  where $\cC_{n,\delta}^{\cH}$ is a function of the pseudo-dimension of $\cH$, {\blockedit $\cD^{\cL}_{n_0,n_1,\delta}$ is a function of the kernel norm of $\cL$ (see Lemma~\ref{lem:mmd_approx})}, both only with logarithmic dependence on $n$ and $m$, $\sigma^2_{Y(t)}$ is the expected variance in $Y(t)$, and
  $
  V_p(w, \ell_{f}) = \max\left(\sqrt{\bbE_{p}[w^2\ell_{f}^2]}, \sqrt{\bbE_{\hat{p}}[w^2\ell_{f}^2]}\right)
  $.
  A similar bound exists where $\cL$ is the family of functions Lipschitz constant at most 1 and $\mipm_{\cL}$ the Wasserstein distance, but with worse sample complexity.
\end{thmthm}
\end{mdframed}
\begin{proof}
  The result follows from application Lemmas~\ref{lem:cortes_bound}--\ref{lem:mmd_approx} to Lemma~\ref{lem:exp_bound_rep}. The $n^{3/8}$ rate may be improved at the cost of a more complicated expression, see discussion following Theorem 3 in~\citep{cortes2010learning}.
\end{proof}

\paragraph{Overlap, ignorability and invertibility.} Theorem~\ref{thm:main_rep} holds both with and without treatment group overlap in $X$. It is important to note, however, that when we change the covariate space from $X$ to $\Phi$, the assumption that $\ell_{\Phi,h_t}/C_\Phi \in \cL$ is not guaranteed, even for large $C_\Phi$, since information on which $\ell$ depends may have been (approximately) removed. In the context of risk minimization, information is only excluded from $\Phi$ if it is not predictive of the outcome $Y$, in which case it is independent also of $\ell$. Thus, under the additional assumption of overlap, the assumption that $\ell_{\Phi,h_t}/C_\Phi \in \cL$ is verifiable in the limit of infinite data. In~\citet{johansson2019support}, we expand on the effects of non-invertibility on identifiability of the marginal risk in much greater detail. In particular, we show that for non-invertible $\Phi$, without overlap, the marginal risk may be bounded under the assumption that information removed in $\Phi$ is as important to the risk of the factual outcome as to that of the counterfactual. This assumption, however, is also unverifiable in general.

%
% RELATION TO DOMAIN ADAPTATION
%
\subsection{Relation to unsupervised domain adaptation}
\label{sec:domainadaptation}
Connections between the problem of estimating causal effects and learning under distributional shift have been pointed out in several contexts~\citep{tian2001causal,zhang2013domain}. In particular, \citet{johansson2016learning} showed that estimating counterfactual outcomes under ignorability is mathematically equivalent to unsupervised domain adaptation between domains $D\in \{0,1\}$ under covariate shift. We make this connection precise below.
\renewcommand\arraystretch{1.2}
\begin{table}[ht!]
  \centering
  %\caption{\label{tbl:da} Counterfactual estimation and unsupervised domain adaptation.}
  \begin{tabular}{llll}
    \toprule
    {\bf Task} & {\bf Data} & {\bf Goal} & {\bf Assumption} \\
    \midrule
    \multirow{2}{*}{Causal estimation} &
    Factual & Counterfactual & Ignorability\\
    & $(x, t, y) \sim p(X, T, Y)$ & $p(Y(1-T) \mid X, T)$ & $Y(t) \indep T \mid X$ \\
    \midrule
    \multirow{2}{*}{Domain adaptation} &
    Source domain & Target label & Covariate shift\\
    & $(x, y) \sim p(X, Y \mid D=0)$ & $p(Y \mid X, D=1)$ & $Y \indep D \mid X$ \\
    \bottomrule
  \end{tabular}
\end{table}
\renewcommand\arraystretch{1}

The bounds we present in this work are related to a series of work on generalization theory for unsupervised domain adaptation~\citep{ben2007analysis,mansour2009domain,long2015learning}, but differ in significant ways. Superficially, the bounds given in these papers have a similar form using the sum of observed risk in the source domain and distributional distance w.r.t. a function class $\cH$ to bound the risk in the target domain:
$$
R_{D=1}(f) \leq R_{D=0}(f) + d_\cH(p(X\mid D=1), p(X\mid D=0)) + \lambda_\cH.
$$
Similarly, these bounds do not rely on overlap but cannot guarantee consistent estimation in the general case. In fact, because they do not allow for re-weighting of domains, even when source and target domains completely overlap, these bounds are often unnecessarily loose. Furthermore, while they are used to motivate representation learning algorithms, these bounds do not apply to learned representations without modification~\citep{johansson2019support}. In this work, we overcome this issue by requiring that representations $\Phi$ are invertible.

%%%%%%%%%%%%%%%%%%%%%%%%%%%%%%%%%%%%%%%%%%%%%%%%%%%%%%%%%%%%%%%%%%%%%%%%%%%%%%%%%%%%
%
% ESTIMATION
%
%%%%%%%%%%%%%%%%%%%%%%%%%%%%%%%%%%%%%%%%%%%%%%%%%%%%%%%%%%%%%%%%%%%%%%%%%%%%%%%%%%%%

\section{Estimation}
\label{sec:estimation}
We turn now to deriving practical algorithms inspired by our theoretical results. First, we give learning objectives for estimating potential outcomes and CATE grounded in the theoretical results of Sections~\ref{sec:theory}--\ref{sec:rep_bound}, {\blockedit and show that they lead to asymptotically consistent estimation.  Then, we describe possible parameterizations of representation functions $\Phi$, discuss possible drawbacks of estimating  potential outcomes separately, and introduce a neural network architecture and objective for shared representation learning between treatment groups. While these parameterizations are not always guaranteed to minimize our bounds, they benefit from the insights gathered from them, as shown in empirical results.
%and subtract these to obtain an estimate of the treatment effect---is that the two estimators share no information and may sustain compounding error if the biases of the estimators are opposing.
%To this end, the \emph{X-learner}~\citep{kunzel2017meta} and \emph{R-learner}~\citep{nie2021quasi} have been proposed.
}

%
% OBJECTIVE
%
\subsection{A first learning objective and asymptotic consistency}
\label{sec:objective}
\label{sec:learningweights}
\label{sec:consistency}

Let $D = \{(x_1, t_1, y_1), ..., (x_n, t_n, y_n)\}$ be a set of samples drawn i.i.d. from $p(X, T, Y)$ and let each sample $i$ be endowed with weights $w_i = w(x_i, t_i)$ for some function $w : \cX \times \{0,1\} \rightarrow \mathbb{R}_+$. Let $n_t = \sum_{i=1}^n \mathds{1}[t_i=t]$ be the number of samples with treatment assignment $t$. Further, let $\lambda, \alpha > 0$ be hyperparameters controlling the strength of the regularization of functional complexity, as measured by $\mathcal{R}$, and distributional distance $\ipm_{\cL}$ respectively. Recall that $\hat{p}^w_{\Phi,t}$ is the re-weighted factual distribution of representations $\Phi$ under $p_t$. %
{\blockedit Now, we consider using compositions $f_t = (h_t \circ \Phi) \in \cH$ of hypotheses $h_t \in \cG \subseteq \{\cZ \rightarrow \cY\}$ and representations $\Phi \in \cE \subseteq \{\cX \rightarrow \cZ\}$ to estimate a single expected potential outcome $\E[Y(t) \mid X]$. Then, directly motivated by Theorems~\ref{thm:main}--\ref{thm:main_rep}, we propose to minimize the following learning objective, with hyperparameters $\beta = (\lambda, \alpha)$, 
\begin{align}\label{eq:emp_loss}
\cO_t(h_t, \Phi; \beta) = \underbrace{\sum_{i:t_i=t} \frac{\tilde{w}_i}{n_t} L(h_t(\Phi(x_i)), y_i)}_{\text{Weighted factual risk}}
\;\; + \underbrace{\frac{\lambda}{\sqrt{n_t}} \cR(h_t) \sumphantom}_{\text{Regularization}}
 + \underbrace{\alpha\pi_{1-t}\ \ipm_{\cL}(\hat{p}_{\Phi, t}^{w}, \hat{p}_{\Phi, 1-t}^{\vphantom{w}}) \sumphantom}_{\text{Distributional \edit{regularization}}}
\end{align}
where $\tilde{w}_i = \pi_t + \pi_{1-t} w_i$ and $\pi_t = p(T=t)$.

Under Assumptions \ref{asmp:ignorability} (ignorability) and \ref{asmp:overlap} (overlap), for balancing weights $w_i = p(T=t_i)/p(T=t_i \mid X=x_i)$, objective \eqref{eq:emp_loss} reduces to inverse propensity-weighted regression in the limit of infinite samples~\citep{freedman2008weighting}. In the finite-sample regime, the IPM does not vanish even if $p_0 = p_1$, because of sample variance. 

As pointed out previously, the results of Theorems~\ref{thm:main}--\ref{thm:main_rep} remain upper bounds on the CATE risk for  weights other than the inverse propensity score. In fact, as shown by \citet{hirano2003efficient}, weighting using the true propensity score does not always lead to the most efficient estimation. Thus, in addition to learning representations and hypotheses, we may consider learning the sample weights $w$ jointly, controlling the variance introduced by non-uniform weights by regularizing the norm of $w$~\citep{johansson2018learning}. With $\beta=(\alpha, \lambda_h, \lambda_w)$ a set of hyperparameters, we consider the following objective,
{ \everymath={\displaystyle}
\begin{align}\label{eq:emp_loss_w}
\begin{array}{ll}
\cO_t(h_t, \Phi, w^t; \beta) & = \sum_{i:t_i=t} \frac{\tilde{w}^t_i}{n_t} L(h_t(\Phi(x_i)), y_i) +  \frac{\lambda_h}{ \sqrt{n_t}}\cR(h_t) %\underbrace{}_{\cO^h(h_t, \Phi, w^t; D, \alpha, \lambda_h)} 
\\
 & + \vphantom{\sum_i^n} \alpha\pi_{1-t}\ \ipm_{\cL}(\hat{p}_{\Phi, t}^{w^t}, \hat{p}_{\Phi, 1-t}^{\vphantom{w^t}}) +  \frac{\lambda_{w}}{ n_t}\|w^t\|_2 %\underbrace{}_{\cO^w( \Phi, w^t; D, \alpha, \lambda_w)}
 \end{array}%
\end{align}%
}%
with $\tilde{w}^t_i = \pi_t + \pi_{1-t} w^t_i$ and subject to the constraint $\sum_i w^t_i=n_t$.
}

A theoretical advantage of using objective~\eqref{eq:emp_loss_w} is that it allows for an explicit tradeoff between bias and variance induced by the sample weights. \edit{In principle, the weighting function can be implemented using one free parameter per sample point. However, our formulation allows the weights to be defined in terms of the learned representation $\Phi$ rather than free parameters or as a function of $X$}. This imposes further restrictions on the variance in and induced by $w$. We proceed to give conditions under which minimization of $\eqref{eq:emp_loss}$ leads to consistent estimation of conditional expected potential outcomes.

%
%
% THEOREM
%
\begin{mdframed}[innerbottommargin=.8em,innertopmargin=0em]%
\begin{thmthm}\label{thm:asymptotics}
Suppose $\cH$ is a reproducing kernel Hilbert space (RKHS) given
by a bounded kernel $k$, such that for all $h_t\in \cG, \Phi\in \cE$, $h_t \circ \Phi \in \cH$.
Suppose weak overlap holds in that 
$$\forall t \in \{0,1\} : \E_X[(p_t(X)/p_{1-t}(X))^2] < \infty~.
$$
Let $\cO_t$ be the objective defined in \eqref{eq:emp_loss_w}, \edit{with the IPM-space $\cL$ also an RKHS with bounded kernel $k_\cL$}, and $n_t = \sum_{i=1}^n\mathds{1}[t_i = t]$ for $t\in \{0,1\}$. Then, 
$$
\min_{h_t,\Phi,w^t}\cO_t(h_t, \Phi, w^t; \beta) \leq \min_{f_t \in \cH}R(f_t) + O_p(1/\sqrt{n_0}+1/\sqrt{n_1}) ~,
$$
\edit{where $O_p$ denotes stochastic boundedness.}
Thus, under the assumptions of Thm.~\ref{thm:main_rep}, for sufficiently large $\alpha$ and $\lambda_w$, with $\hat{f}^n$ the minimizer of \eqref{eq:emp_loss_w} for $n$ samples,
$$
R(\hat f_t^n)\leq \min_{f_t \in \cH}R(f_t) + O_p(1/n_0^{3/8}+1/n_1^{3/8}).
$$
In words, the minimizers of $\eqref{eq:emp_loss_w}$ converge to the representation and hypothesis that minimize the counterfactual risk, in the limit of infinite samples.
\end{thmthm}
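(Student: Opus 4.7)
I would establish the two displayed inequalities in sequence.

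For the first (feasibility) inequality, I would upper-bound $\min_{h,\Phi,w}\cO$ by exhibiting a concrete feasible triple $(h^*,\Phi^*,w^*)$. Take $\Phi^*$ to be the identity map (invertible, with $|J_{\Psi^*}|\equiv 1$ and hence $C_{\Phi^*}=1$), let $h^*=f^*\in\argmin_{f\in\cF}R(f)$, and set $w^*(x,t)=\pi_t/p(T=t\mid X=x)$ to be the inverse-propensity balancing weights of Lemma~\ref{lem:impsmp}. By that lemma, the population weighted factual risk equals $R(f^*)$ and the two re-weighted population treatment-group distributions coincide, so $\ipm_\cL(p_0^{w^*},p_1^{w^*})=0$. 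I would then check that each term of $\cO(h^*,\Phi^*,w^*;\beta)$ concentrates to its population analogue at rate $O_p(1/\sqrt{n_0}+1/\sqrt{n_1})$: the weighted empirical risk converges to $R(f^*)$ by an LLN whose finite-variance hypothesis follows from weak overlap together with boundedness of $f^*$ in the bounded-kernel RKHS; the regularization $\lambda_h\cR(f^*)/\sqrt n$ is deterministically $O(1/\sqrt n)$; the empirical IPM between the two re-weighted empirical measures converges to its zero population limit via a weighted extension of Lemma~\ref{lem:mmd_approx}; and $\lambda_w\|w^*\|_2/n$ is $O_p(1/\sqrt n)$ because $\bbE[(w^*)^2]<\infty$ under weak overlap.

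For the second (consistency) inequality, let $(\hat h^n,\hat\Phi^n,\hat w^n)$ be the minimizer of $\cO$ and set $\hat f^n=\hat h^n\circ\hat\Phi^n$. Invoking Theorem~\ref{thm:main_rep} separately for $t=0$ and $t=1$ and taking the $\pi_t$-weighted combination, with ignorability identifying the factual risk, yields $R(\hat f^n)\leq \cO(\hat h^n,\hat\Phi^n,\hat w^n;\beta)+O_p(1/n_0^{3/8}+1/n_1^{3/8})+2\sigma^2$, \emph{provided} $\alpha$ is at least as large as a uniform bound on $C_\Phi$ over $\cE$, and $\lambda_w$ is large enough that $\lambda_w\|w\|_2/n$ dominates the variance term $V_{p_t}(w,\ell_{f_t})\cC_{n_t,\delta}^\cH/n_t^{3/8}$ appearing in Theorem~\ref{thm:main_rep}. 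By optimality of $\hat f^n$ and the first step, $\cO(\hat h^n,\hat\Phi^n,\hat w^n;\beta)\leq\cO(h^*,\Phi^*,w^*;\beta)\leq\min_{f\in\cF}R(f)+O_p(1/\sqrt{n_0}+1/\sqrt{n_1})$, and the slower $1/n_t^{3/8}$ residual from Theorem~\ref{thm:main_rep} then dominates and gives the claimed rate.

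\textbf{Main obstacle.} The crux is uniformity: Theorem~\ref{thm:main_rep} applies with high probability at any \emph{fixed} $(h,\Phi,w)$, whereas we must apply it at the data-dependent minimizer. This calls for either a uniform-convergence argument over $\cE\times\cG$, or a careful reuse of the observation that the capacity factors $\cC^\cH_{n,\delta}$ and $\cD^\cH_{n_0,n_1,\delta}$ depend only on the fixed function classes. A related subtlety is ensuring $\alpha\geq C_{\hat\Phi^n}$ uniformly, which rests on a uniform bound on the Jacobian $|J_\Psi|$ over the representation class $\cE$. Finally, extending the MMD concentration of Lemma~\ref{lem:mmd_approx} to importance-weighted empirical measures with potentially unbounded weights is precisely where the weak-overlap second-moment assumption does the essential work, by bounding the variance of the weighted kernel mean embedding.
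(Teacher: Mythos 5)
Your skeleton matches the paper's: exhibit a feasible triple built from the risk minimizer and balancing weights, show each of the four terms of $\cO$ is within $O_p(1/\sqrt{n_0}+1/\sqrt{n_1})$ of its population value (which is $\min_{f\in\cF}R(f)$ for the risk term and zero for the others), and then chain with Theorem~\ref{thm:main_rep} to get the consistency statement at the slower $n_t^{-3/8}$ rate. Two points, however, deserve attention.

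First, the step you defer --- concentration of the \emph{weighted} empirical IPM with potentially unbounded importance weights --- is not a loose end in the paper but the entire technical content of its proof. There is no ``weighted extension of Lemma~\ref{lem:mmd_approx}'' to cite; the paper proves the needed bound from scratch by a ghost-sample symmetrization: it introduces iid replicates of the target-distribution draws, applies Jensen to center each summand, symmetrizes with Rademacher variables, and then uses the Hilbert-space identity $\sup_{\|h\|\le 1}\langle A,h\rangle^2=\|A\|^2$ to reduce $\bbE[\ipm^2]$ to $\tfrac{4}{n^2}\sum_i\bbE[\|\zeta_i\|^2]$, where $\|\zeta_i\|^2$ is controlled by the kernel bound times $\bbE[(w_i^*)^2]$, i.e.\ exactly by weak overlap. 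Markov's inequality then gives the $O_p(1/\sqrt{n})$ rate. You have correctly located where weak overlap must enter, but a complete proof has to carry out this (or an equivalent) argument; without it the first displayed inequality is unproven.

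Second, your feasible point is not quite right. Taking $\Phi^*$ to be the identity and $h^*=f^*$ requires the identity to lie in $\cE$ and $f^*$ to lie in $\cG$, neither of which is assumed. The paper instead decomposes the minimizer as $f^*=h^*\circ\Phi^*$ with $h^*\in\cG$, $\Phi^*\in\cE$, and --- importantly --- defines the weights as the density ratio \emph{in the representation space induced by $\Phi^*$}, $w^*(x,t)=p_{\pi,\Phi}(\Phi^*(x),t)/p_{\mu,\Phi}(\Phi^*(x),t)$, since it is the IPM between re-weighted representation distributions that must vanish in population, not the IPM in $\cX$. Your concern about uniformity over the data-dependent minimizer is fair, but note that the paper's argument does not need uniform control for the first inequality (the feasible point is fixed in advance); the second inequality is obtained by invoking Theorem~\ref{thm:main_rep} with $\alpha$ and $\lambda_w$ large enough to dominate $C_\Phi$ and the variance term, as you describe.
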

\end{mdframed}
Thorem~\ref{thm:asymptotics} is proven in Appendix~\ref{app:asymptotics}. The result may be generalized further to the case of estimating the value of an abritrary treatment policy under a shift in marginal distribution $p(X)$. See the Appendix and \citet{johansson2018learning} for results in this setting.

{
\blockedit
\paragraph{Relating bound and learning objective.}
Learning objectives \eqref{eq:emp_loss}--\eqref{eq:emp_loss_w} are direct results of Theorem~\ref{thm:main_rep}. However, unlike the factor $C_\Phi$ in \eqref{eq:thm_rep_bound}, the hyperparameter $\alpha$, which takes its place in the objectives, is independent of the representation $\Phi$. As remarked previously, when $\Phi$ shrinks the representation space, $C_\Phi$ must grow for Theorem~\ref{thm:main_rep} to be valid. Hence, any fixed $\alpha$ might be either too small during the learning of $\Phi$, implying that the learning objective is not a bound on the counterfactual risk, or too large, over-penalizing distributional distance (see Figure~\ref{fig:ihdp_vs_alpha}). We view the role of $\alpha$ similar to that of the regularization parameters of LASSO and ridge regression---not for directly evaluating a generalization bound, but for controlling the tradeoff between bias and variance~\citep{friedman2001elements}. In empirical results, we find that a fixed value of $\alpha$ can help learning an estimator with lower counterfactual risk. 
}

%
% MODELS
%
\subsection{\edit{Practical} representation learning \edit{and learning shared representations}}
\label{sec:models}

{\blockedit 
A well-selected or learned representation $\Phi$ can lead to estimates of potential outcomes and CATE with improved counterfactual risk. The two most prominent approaches to learning representations in the literature are i) deep neural networks~\citep{bengio2013representation} and ii) variable selection~\citep{schneeweiss2009high}. As non-trivial examples of the latter cannot satisfy the assumption of invertibility of $\Phi$ even approximately, we restrict our attention to parameterizations of $\Phi$ as neural networks. Neural networks have been made invertible by design in the context of normalizing flows~\citep{kobyzev2020normalizing} and have been observed to be approximately invertible spontaneously~\citep{gilbert2017towards}. In experiments, we do not enforce or guarantee invertibility of our neural networks but nevertheless benefit from the insights in our theoretical results in regularizing treatment group distance in representations.%
}

\begin{figure}
  \centering
  \begin{subfigure}{.41\textwidth}
    \centering
    \includegraphics[height=3.8cm]{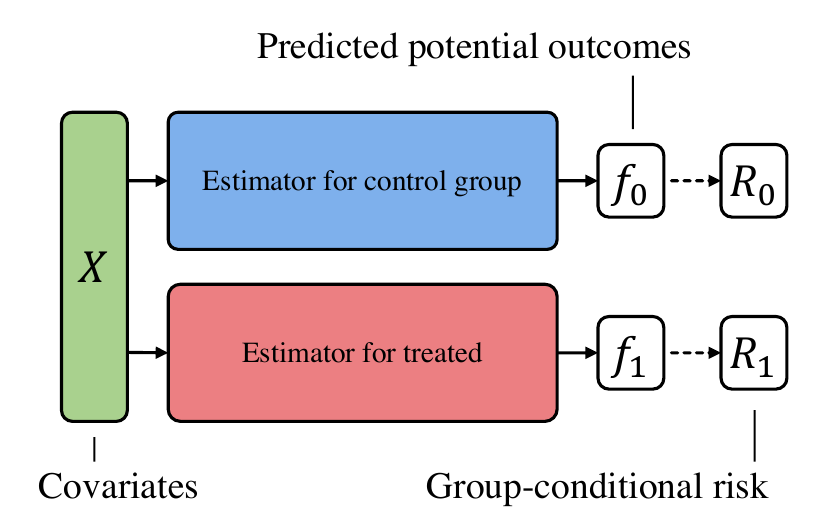}
    \caption{\label{fig:tlearner}T-learner}
  \end{subfigure}
  \hfill%
  \begin{subfigure}{.58\textwidth}
    \centering
    \includegraphics[height=3.8cm]{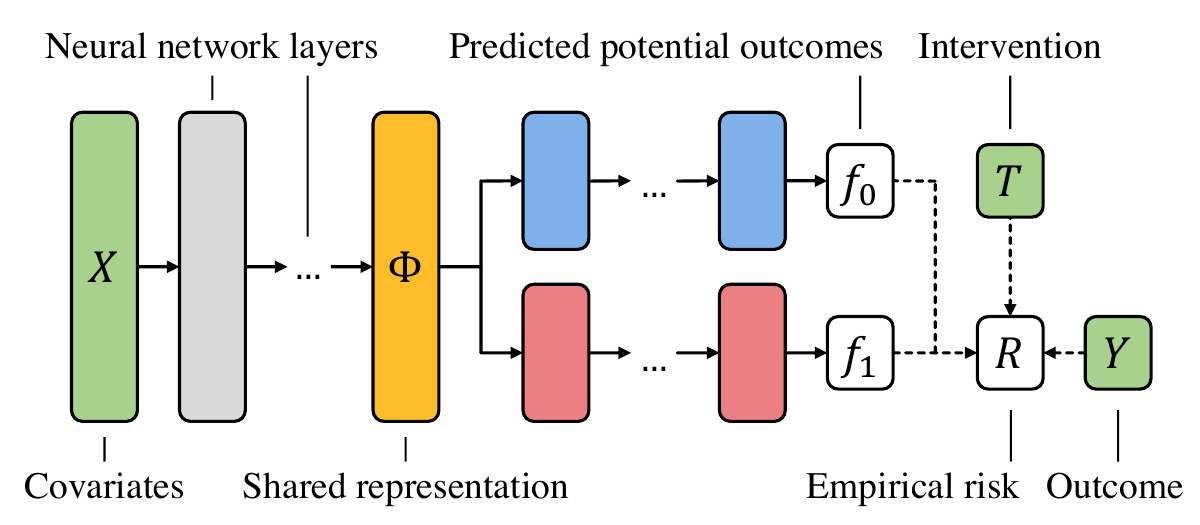}
    \caption{\label{fig:tarnet}TARNet~\citep{shalit2016estimating}}
  \end{subfigure}
  \caption{\label{fig:tlearnertarnet}Estimator architectures for potential outcomes and conditional average treatment effects. Green boxes indicate inputs, white boxes outputs and loss terms, yellow boxes shared representations and blue/red boxes estimators of potential outcomes. Solid lines indicate transformation part of the prediction function and dashed lines indicate computations part of the learning procedure.}
\end{figure}

{\blockedit
Parameterizing $\Phi$ with neural networks leaves a lot of freedom in the design of estimators. A straight-forward application minimizes the factual risk or objective~\eqref{eq:emp_loss_w} independently for each potential outcome, with separate neural network architectures. This approach is that of  T-learners, which fit two models, one for each treatment group, as described Section~\ref{sec:related}. A potential drawback of this approach is that in many cases, much of the variance in \emph{both} potential outcomes may be explained by the same patterns in the input~\citep{kunzel2017meta}. Estimating each outcome independently fails to take advantage of this fact.

We proposed a natural extension in the Treatment-Agnostic Representation Network (TARNet) in \citet{shalit2016estimating}. In TARNet, a T-learner architecture is appended to a representation $\Phi$ shared between treatment groups (see Figure~\ref{fig:tlearnertarnet} for a comparison with T-learners). TARNet has the advantage of sharing samples between treatment groups in learning the representation which may be useful when $\tau$ is a simpler function of $X$ than $Y(0), Y(1)$. It improves on classical T-learning estimators by allowing estimators of different potential outcomes to share information through representation functions learned from both treatment groups. TARNet minimizes the following objective,
\begin{align}\label{eq:obj_tarnet}
\cO_{\mathrm{TARNet}}(h, \Phi) =  \frac{1}{n} \sum_{i=1}^n L(h(\Phi(x_i), t_i), y_i) + \frac{\lambda_h}{\sqrt{n}} \cR(h)
\end{align}
By also regularizing treatment group distance in learned representations, we may enable better counterfactual generalization, in line with Theorem~\ref{thm:main_rep}. The CounterFactual Regression (CFR) estimator, introduced in \citet{shalit2016estimating} and illustrated in Figure~\ref{fig:cfr}, applies this idea to TARNet, minimizing the objective
\begin{align}\label{eq:obj_cfr}
\cO_{\mathrm{CFR}}(h, \Phi) = \frac{1}{n} \sum_{i=1}^n  L(h(\Phi(x_i), t_i), y_i) + \alpha \cdot \ipm_{\cL}(\hat{p}_{\Phi, 0}, \hat{p}_{\Phi, 1}) + \frac{\lambda_h}{\sqrt{n}} \cR(h) ~.%
\end{align}%
\begin{figure}%
\centering%
\includegraphics[height=.3\textwidth]{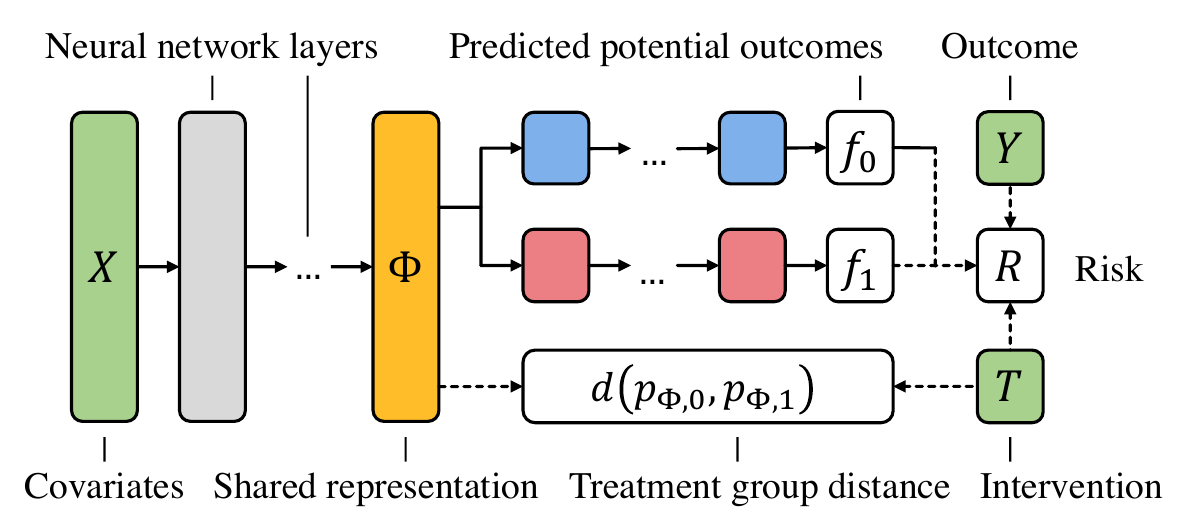}
\caption{\label{fig:cfr}Illustration of the Counterfactual Regression (CFR) estimator. Here, $d$ represents a distributional distance such as an IPM. The visual elements are described in Figure~\ref{fig:tlearnertarnet}.}
\end{figure}%
Finally, the re-weighted CFR (RCFR) minimizes a weighted objective analogous to objective~\eqref{eq:emp_loss_w} but with a shared representation~\citep{johansson2018learning}. 
\begin{align}\label{eq:obj_rcfr}
\cO_{\mathrm{RCFR}}(h, \Phi, w) = \sum_{i=1}^n \frac{w_i}{n} L(h(\Phi(x_i), t_i), y_i) + \alpha \cdot \ipm_{\cL}(\hat{p}^{w}_{\Phi, 0}, \hat{p}^{w}_{\Phi, 1}) + \frac{ \cR(h)}{\lambda_h^{-1} \sqrt{n}}  + \frac{\|w\|_2}{\lambda_{w}^{-1} n}
\end{align}
subject to the constraint $\sum_{i:t_i=t}w_i = n_t$ for $t\in \{0,1\}$.
The weights are given by a learned function, parameterized by a neural network, to avoid over-penalizing distributional distance and to trade off the effects of selection bias and variance. See Appendix~\ref{app:rcfr} for an illustration of the neural network architecture used in RCFR.
}
%
% DISCRIMINATORS?
%
\subsection{Implementing regularization of distributional distance}
%In Section~\ref{sec:theory}, we bound the generalization error in CATE using integral probability metrics (IPM), a family of distances between distributions $p,q$ based on the density difference $p - q$.
The idea of regularizing models to be invariant to changes in a variable, in our case the treatment indicator, is prevalent through-out machine learning~\citep{ganin2016domain,goodfellow2014generative,long2015learning}. As a result, several families of distance metrics between distributions have been used to impose such constraints. The most common of these are $f$-divergences (e.g. the KL-divergence)~\citep{nowozin2016f}, integral probability metrics (e.g., the maximum-mean discrepancy) and adversarial discriminators~\citep{ganin2016domain}. $f$-divergences are often ill-suited for comparing two empirical densities as they are based on the density ratio which is undefined in any point outside of the support of either density. In contrast, IPMs are based on the density difference which is defined everywhere. Adversarial methods are based on the metric implied by a learned discriminator function which is trained to distinguish samples from the two densities. The flexibility of this approach---that it tailors the metric to observed data---is also its weakness since optimization of adversarial discriminators is fraught with difficulty.
In implementations of CFR and RCFR, we use the empirical kernel MMD~\citep{gretton2012kernel} and the Wasserstein distance~\citep{villani2008optimal}, \edit{both examples of integral probability metrics}. 

\paragraph{Minimizing the empirical maximum mean discrepancy.} The maximum mean discrepancy (MMD) was popularized in machine learning through its kernel-based incarnation in which the associated function family is a reproducing kernel Hilberg space (RKHS)~\citep{gretton2012kernel}. We restrict our attention to this family here. An unbiased estimator of the MMD distance between densities $p, q$ on $\cX$, with respect to a kernel $k$, may be obtained from samples $x_1, ..., x_m \sim p$, $x_1', ..., x_n' \sim q$ as follows.
$$
\widehat{\text{MMD}}^2_k(p,q) := \frac{1}{m(m-1)}\sum_{i=1}^m\sum_{j\neq i}^m k(x_i, x_j) - \frac{2}{mn}\sum_{i=1}^m\sum_{j=1}^n k(x_i, x_j') + \frac{1}{n(n-1)}\sum_{i=1}^n\sum_{j\neq i}^n k(x_i', x_j')
$$
By choosing a differentiable kernel $k$, such as the Gaussian RBF-kernel, we can ensure that the MMD is amenable to gradient-based learning. In applications where the quadratic time complexity w.r.t. sample size is prohibitively large, another unbiased estimator (but with larger variance) may be obtained by sampling pairs of points $(x_1, x_1'), ..., (x_{2n}, x_{2n}') \sim p\times q$ and comparing only elements within pairs~\citep{long2015learning},
$$
\widehat{\text{MMD}}^2_k(p,q) := \frac{1}{n}\sum_{i=1}^n \left[ k(x_{2i-1}, x_{2i}) + k(x_{2i-1}', x_{2i}') - k(x_{2i-1}, x_{2i}') - k(x_{2i}, x_{2i-1}')\right]~.
$$

\paragraph{Minimizing the Wasserstein distance.} The Wasserstein distance is typically computed as the solution to a linear program (LP). The gradient of this solution with respect to the learned representation may be obtained through the KKT conditions of the problem and the solution for the current representation~\citep{amos2017optnet}. However, solving the LP at each gradient update is prohibitively expensive for many applications. Instead, we minimize an approximation of the distance known as Sinkhorn distances~\citep{cuturi2013sinkhorn}, computed using fixed-point iteration. In previous work~\citep{shalit2016estimating}, we computed the distance and its gradient by forward and backpropagation through a recurrent neural network with transition matrix corresponding to the fixed-point update. For a full description, see Appendix~\ref{app:wasserstein}. Alternative methods for minimizing Wasserstein distances have been developed in the context of generative adversarial networks (GANs)~\citep{arjovsky2017wasserstein}.

%%%%%%%%%%%%%%%%%%%%%%%%%%%%%%%%%%%%%%%%%%%%%%%%%%%%%%%%%%%%%%%%%%%%%%%%%%%%%%%%%%%%
%
% EXPERIMENTS
%
%%%%%%%%%%%%%%%%%%%%%%%%%%%%%%%%%%%%%%%%%%%%%%%%%%%%%%%%%%%%%%%%%%%%%%%%%%%%%%%%%%%%

\section{Experiments}
\label{sec:experiments}

Evaluating estimates of potential outcomes and causal effects from observational data is notoriously difficult as ground-truth labels are hard or impossible to come by. Cross-validation and other sample splitting schemes frequently used to evaluate supervised learning are not immediately applicable to our setting for this reason. Moreover, the task of producing the labels themselves is exactly the task we are attempting to solve. As a result, estimation methods are often evaluated on synthetic or semi-synthetic data, where consistent estimation or computation of the labels are guaranteed. Another alternative is using real-world data where the treatment-assignment randomization is known, e.g. data from an RCT. In this section, we give a suite of experimental results on synthetic, semi-synthetic and real-world data. Our experiments are developed to separately highlight the impact of architecture choice and the balancing regularization scheme.

%
% SETUP
%
\subsection{Experimental setup \& baselines}
{\blockedit
We evaluate learning to predict potential outcomes using the TARNet and CFR objectives, \eqref{eq:obj_tarnet} and \eqref{eq:obj_cfr}.  Recall that TARNet is equivalent to CFR with the parameter $\alpha=0$. 
}
We specify the function family used in the IPM by a subscript, e.g., CFR$_{\textnormal{MMD}}$, and point out for which experiments the weighting function is learned and for which it is set to the uniform weighting. All variants of CFR were implemented as feed-forward neural networks with exponential-linear units and architectures as described in Section~\ref{sec:estimation}. \edit{These implementations do not guarantee the invertibility condition of our theoretical results but benefit from the regularization scheme derived from them. Further, in~\citet{johansson2019support,gilbert2017towards}, it was found that similar networks satsify invertibility approximately without explicitly being made to.} %
Ranges for hyperparameters, such as layer sizes, learning rates et cetera, are described in Appendix~\ref{app:exp} and specific values were selected according to a procedure below. An implementation of CFR with uniform sample weights may be found at \url{https://github.com/clinicalml/cfrnet}.

As our primary baseline, we use two variants of Ordinary Least Squares (linear regression). The first ({\sc OLS-S}) adopts the S-learner paradigm and includes the treatment variable $T$ as a feature in the regression. The second ({\sc OLS-T}) is a T-learner where the outcome in each treatment arm is modeled using a separate linear regression. Our other simple baseline is a $k$-nearest neigbor regression which imputes counterfactual outcomes of a unit by the average of its $k$-nearest neighbors with the opposite treatment assignment.

For a more challenging comparison, we use Targeted Maximum Likelihood, which is a doubly robust method ({\sc TMLE})~\citep{gruber2012tmle} which uses an ensemble of machine-learning methods. We also compare with a suite of tree-based estimators: First, we use a Random Forest ({\sc Rand. For.})~\citep{breiman2001random} in the S-learner paradigm by including $T$ as a feature. Second, we include tree-based methods specifically designed or adapted for causal effect estimation: Bayesian Additive Regression Trees ({\sc BART})~\citep{chipman2010bart,bayestree} and Causal Forests
({\sc Caus. For.})~\citep{wager2015estimation,causalforests}.
Finally, we also compare with our earlier work on
Balancing Linear Regression ({\sc BLR}) and Balancing Neural Network
({\sc BNN})~\citep{johansson2016learning}.

\subsubsection*{Evalutation criteria \& hyperparameter selection}
To assess the quality of CATE estimates, either knowledge of the propensity score or the outcome function is required. Where labels are available, our primary criterion for evaluation is the mean squared error in the imputed CATE as defined in \eqref{eq:taumse}. When only the propensity score is available, such as in a randomized controlled trial or other experiments, we instead estimate the \emph{policy risk} as defined below.

A policy $\pi$ is any (possibly stochastic) function that maps from covariates $x$ to treatment decision $t\in \{0,1\}$; we will only consider deterministic policies. The risk of a policy $\pi$ for outcomes $Y \in [0,1]$, where large $Y$ is considered beneficial, is
$$
R_{\mbox{\tiny Pol}}(\pi) := 1 - \E_X[E_{(Y(0),Y(1)}[ Y(\pi(x)) \mid X = x]]~.
$$
A good policy is one that for a given $x$ will choose the potential outcome with the higher conditional expectation given $x$.
If we know the true propensity scores $p^*(t_i=1|x_i)$ used in generating the dataset,
then the risk of a deterministic policy $R_{pol}(\pi)$ may be estimated using rejection sampling based on a sample $(x_1, t_1, y_1), \ldots,$ $(x_m, t_m, y_m)$ and propensity scores $p^*(t_1=1|x_1), \ldots, p^*(t_m=1|x_m)$ by considering only the \emph{propensity re-weighted effective sample} on which the proposed policy agrees with the observed one:
\begin{equation}
\hat{R}_{\mbox{\tiny Pol}}(\pi) := 1 - \frac{\sum_{i=1}^m y_i \mathds{1}[\pi(x_i) = t_i]p^*(t_i=\pi(x_i)|x_i)}{\sum_{i=1}^m \mathds{1}[\pi(x_i) = t_i]} ~.
\label{eq:policy_curve}
\end{equation}
A downside of this estimator is that it has very high variance for policies that are very different from the observed policy. Note that in the case where the data was generated by an RCT with equal probability of treatment and control, the propensity scores have a particularly simple form: $p(t=1|x) = 0.5$ for all $x$.

In our experiments, we evaluate the policy $\pi_f : \cX \rightarrow \cT$ induced by an estimator $f(x, t)$ of potential outcomes and a threshold $\lambda$ such that
\begin{equation}\label{eq:lambda_pol_risk}
\pi_f(x) :=
\left\{
\begin{array}{ll}
  1, & \mbox{if } f(x, 1) - f(x, 0) > \lambda \\
  0, & \mbox{otherwise }
\end{array}
\right. ~.
\end{equation}
By varying $\lambda$ from low to high we obtain a curve that interpolates between liberal and conservative allocation of treatment.

In all experiments we fit a model on a training set and then evaluate on a held-out set. We always report results both within-sample and out-of-sample. We wish to emphasize that the within-sample results should \emph{not} be thought of as training-loss in standard ML problems. Even within-sample results include the challenging task of inferring unobserved counterfactuals for the training samples.

\paragraph{Hyperparameter selection.}
We choose hyperparameters for all estimators in the same way. As the ground truth potential outcomes are unavailable to us, we use pseudo-labels for the true CATE imputed using a nearest-neighbor estimator. With $j(i)$ the nearest ``counterfactual'' neighbor of sample $i$ in Euclidean distance, such that $t_{j(i)} \neq t_i$, we define
$$
\widehat{\textnormal{MSE}}_{\textnormal{nn}}(f) := \frac{1}{n}\sum_{i=1}^n\left((1 - 2t_i)(y_{j(i)} - y_i) - (f(x_i, 1) - f(x_i, 0))\right)^2 %
$$
and use its value on a held-out validation set as a surrogate for the true MSE in $\htau$ in hyperparameter section.
This choice may bias selection of hyperparameters towards preferring models close to a nearest-neighbor estimator, but we anticipate this effect to be mild as  $\widehat{\textnormal{MSE}}_{\textnormal{nn}}$ is not used as a training objective. For neural network estimators, we perform early stopping based on the training objective evaluated on a held-out validation set in the IHDP study, and based on held-out policy risk in the Jobs study (both described below). Ranges for hyperparameters for CFR are presented in Appendix~\ref{app:exp}.

%
% SEMI-SYNTHETIC
%
\subsection{Synthesized outcome: IHDP}
The Infant Health and Development Program (IHDP) dataset has been frequently used to evaluate machine learning approaches to causal effect estimation in recent years~\citep{hill2011bayesian}. The orginal data comes from a randomized study of the impact on educational and follow-up interventions on child cognitive development~\citep{brooks1992effects}. Each observation represents a single child in terms of 25 features of their birth and their mothers. To introduce confounding, \citet{hill2011bayesian} removed a biased subset of the treatment group---all treated children with nonwhite mothers---leaving 747 subjects in total. This induces not only confounding, but also lack of overlap in variables strongly correlated with race (race itself was removed from the feature set following the biased selection). To enable consistent evaluation, the outcome of the IHDP dataset was synthesized according to several different stochastic models on the \emph{observed} feature set. In this way, ignorability is guaranteed. Depending on the specific sample of the outcome model, i.e., whether variables correlated with race have strong influence or not, the lack of overlap varies in its impact on the results.

In our experiments, we use observations generated using setting ``A'' in the NPCI package~\citep{npci}, corresponding to response surface (outcome function) ``B'' in~\citet{hill2011bayesian}. This model follows an exponential-linear form for the outcome under treatment and a linear form for the controls, ensuring that their difference, CATE, is a nonlinear function. Sparsity in the coefficients is introduced through random sampling with a probability 0.6 that a coefficient is exactly equal to 0. The full description of the model may be found in \citet{hill2011bayesian}, Section~4.1. The specific realizations (draws) used in our evaluation can be accessed at~\url{http://www.mit.edu/~fredrikj/}.

\begin{table}[t!]
  \begin{center}
  \caption{\label{tbl:ihdp_results}Mean squared error, and standard error over 1000 random draws of the outcome model, in estimates of CATE and ATE on IHDP within-sample (left) and out-of-sample (right). Lower is better. $^\dagger$Not applicable. \vspace{1.0em}}
    \begin{tabular}{l|cc|cc}
  \toprule
  \multicolumn{1}{l}{} & \multicolumn{2}{c}{\bf{Within sample}} &
    \multicolumn{2}{c}{\bf{Out of sample}} \\
  \midrule
  & {\sc mse cate} & {\sc mse ate} &  {\sc mse cate} & {\sc mse ate} \\
  \midrule
  OLS-S & $5.8 \pm 0.3$ & $0.73 \pm 0.04$ & $5.8 \pm 0.3$ & $0.94 \pm 0.06$ \\
  OLS-T  & $2.4 \pm 0.1$ & $0.14 \pm 0.01$ & $2.5 \pm 0.1$ & $0.31 \pm 0.02$ \\

  BLR  & $5.8 \pm 0.3$ & $0.72 \pm 0.04$ & $5.8 \pm 0.3$ & $0.93 \pm 0.05$ \\
  $k$-NN & $2.1 \pm 0.1$ & $0.14 \pm 0.01$ & $4.1 \pm 0.2$ & $0.79 \pm 0.05$ \\
  TMLE  & $5.0 \pm 0.2$ & $0.30 \pm 0.01$ & $\dagger$ & $\dagger$ \\
  BART  & $2.1 \pm 0.1$ & $0.23 \pm 0.01$ & $2.3 \pm 0.1$ & $0.34 \pm 0.02$ \\
  R.For.  & $4.2 \pm 0.2$ & $0.73 \pm 0.05$ & $6.6 \pm 0.3$ & $0.96 \pm 0.06$ \\
  C.For.  & $3.8 \pm 0.2$ & $0.18 \pm 0.01$ & $3.8 \pm 0.2$ & $0.40 \pm 0.03$ \\
  BNN  & $2.2 \pm 0.1$ & $0.37 \pm 0.03$ & $2.1 \pm 0.1$ & $0.42 \pm 0.03$  \\
  \midrule
  TARNet  & $0.88 \pm 0.02$ & $0.26 \pm 0.01$ & $0.95 \pm 0.02$ & $0.28 \pm 0.01$ \\
  CFR$_{\mbox{MMD}}$ & $0.73 \pm 0.01$ & $0.30 \pm 0.01$ & $0.78 \pm 0.02$ & $0.31 \pm 0.01$ \\
  CFR$_{\mbox{Wass}}$ & $0.71 \pm 0.02$ & $0.25 \pm 0.01$ & $0.76 \pm 0.02$ & $0.27 \pm 0.01$ \\
  %RCFR$_{\mbox{mmd}}$, Oracle $\alpha$ & -- & -- & $0.65 \pm 0.04$ & --  \\
  %RCFR$_{\mbox{mmd}}$, Adapt. $\alpha$ & -- & -- & $0.67 \pm 0.05$ & --  \\
  \bottomrule
\end{tabular}

  \end{center}
\end{table}

\begin{figure}[t!]
  \centering
  \begin{subfigure}[t]{.5\textwidth}
    \centering
    \includegraphics[height=2.2in]{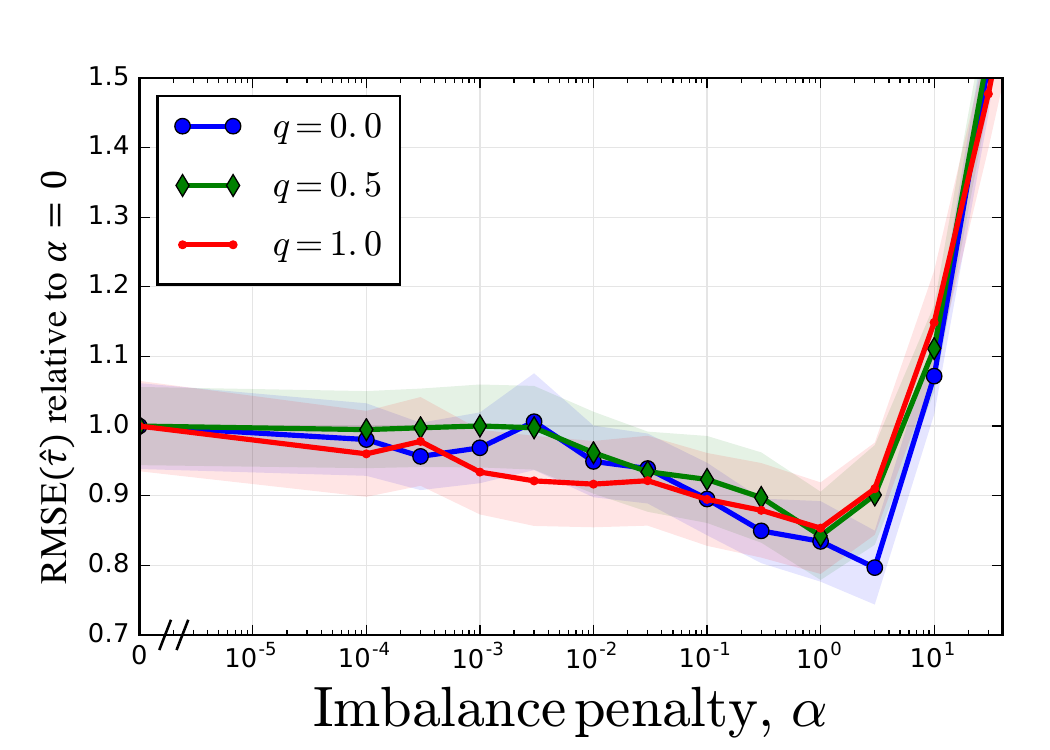}
    \caption{\label{fig:ihdp_vs_alpha}Ratio of mean squared squared error in estimated CATE relative to $\alpha=0$, as a function of the imbalance regularization $\alpha$ with uniform sample weights $w$, for different levels of introduced additional treatment group imbalance $q$. Uncertainty bands show standard errors over 500 realizations.}
  \end{subfigure}%
  \quad%
  \begin{subfigure}[t]{.46\textwidth}%
    \centering%
    \includegraphics[height=2.25in]{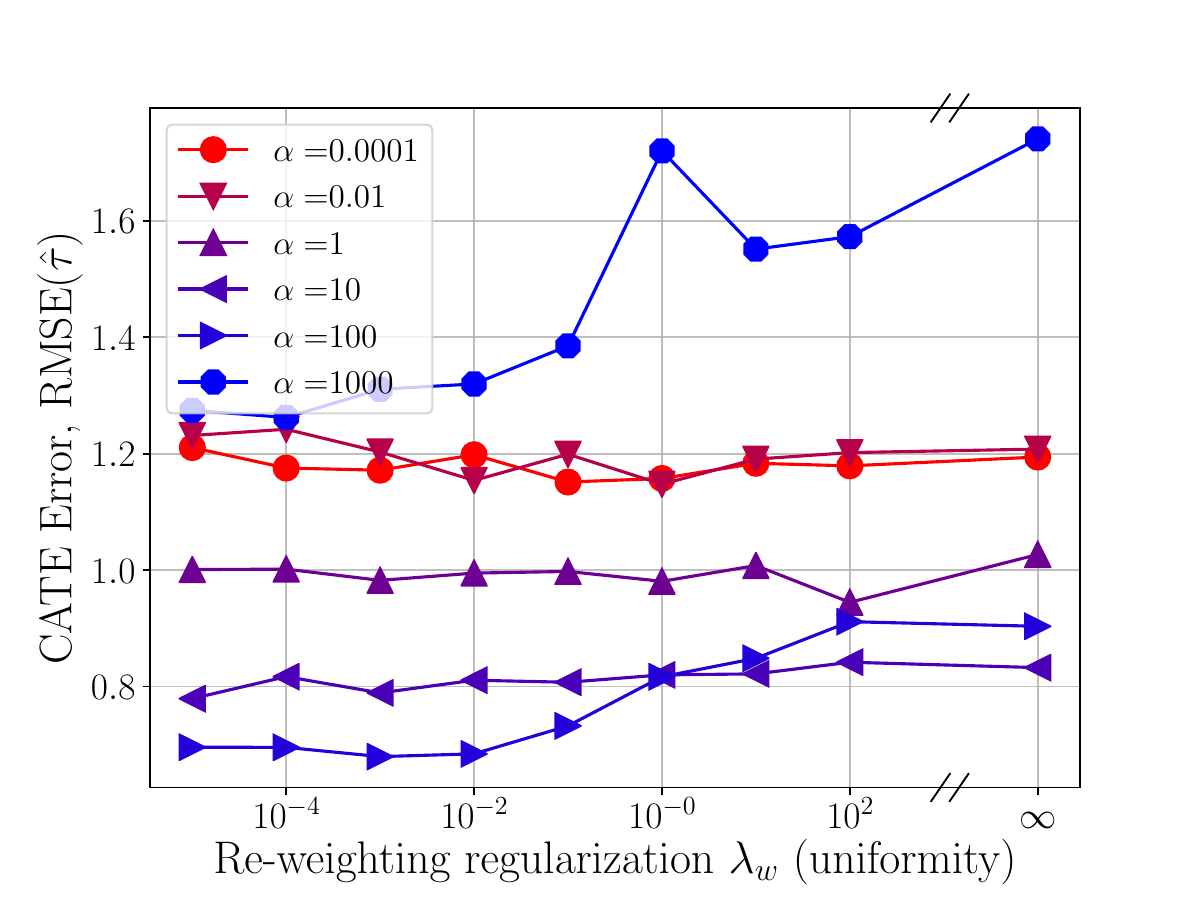}
    \caption{\label{fig:ihdp_weighting}RMSE for CFR estimates combining representation learning and re-weighting, minimizing \eqref{eq:emp_loss_w}, for varying weight regularization $\lambda_w$ and imbalance penalty $\alpha$. Higher $\lambda_w$ leads to more uniform weights.}
  \end{subfigure}
  \caption{\label{fig:ihdp}Results for estimating CATE on IHDP with different variants of the CFR model. In (a), we show results for the best performing architecture with uniform sample weights, varying the imbalance regularization $\alpha$. In (b), we show the results for a smaller architecture and their dependence on the uniformity of learned weights.}
\end{figure}

\paragraph{Results.}
The error in estimates of CATE on IHDP can be seen in Table~\ref{tbl:ihdp_results}. Here, we present only the variants of CFRwith uniform weighting, and refer to Figure~\ref{fig:ihdp_weighting} for a comparison between learned and uniform weights. First, we note that all of the proposed neural network estimators (TARNet and CFR variants) outperform the selected baselines. We attribute this, to a large extent, to multi-layer neural networks being a suitable function class for this dataset. CFR improves marginally over TARNet, indicating that regularizing distributional invariance is beneficial for prediction of CATE. We note also that, in general, the S-learner estimators (OLS-S and BNN) perform worse than separate or partially separate estimators (OLS-T, TARNet). The biggest differences between in-sample and out-of-sample performance are attained by the $k$-NN and random forest estimators.

\paragraph{Increasing imbalance.} In Figure~\ref{fig:ihdp_vs_alpha}, we study the effect of increasing the imbalance between treatment groups through biased subsampling. To do this, we fit a logistic regression propensity score model $\hat{p}(T=1\mid X=x)$ and for a parameter $q\geq 0$, we repeatedly remove the \emph{control} sample with largest estimated propensity with probability, $q$ and a random control observation with probability $1-q$, until 400 samples remain. For three values of $q$, we estimate CATE using CFR with uniform sample weights for different values of the penalty $\alpha$ of treatment group distance in the learned representation $\Phi$. We see that for small $\alpha$, as expected, the relative error is comparable to TARNet ($\alpha=0$), but that it decreases until $\alpha\approx 1$. For $\alpha > 2$, the performance deteriorates as the  influence of the input on the representation is constrained too heavily. As we'll see below, this may be partially remedied by sample weighting.

\paragraph{Learning the sample weights.} In Figure~\ref{fig:ihdp_weighting}, we study the quality of CFR estimates when sample weights are learned by minimizing objective \eqref{eq:emp_loss_w}. In this setting, the chosen model is intentionally restricted to have representations $\Phi$ of two layers with 32 and 16 hidden units each and hypotheses $h(\Phi)$ of a single layer with 16 units. This choice was made to emphasize the value of reweighting under model misspecification. The weighting function was modeled using two layers of 32 units each. We see in Figure~\ref{fig:ihdp_weighting} that a model using non-uniform sample weights ($\lambda_w$ small) is less sensitive to excessively large penalties $\alpha$. This is because the IPM term may now be minimized also by learning the weights, rather than only by constraining the capacity of $\Phi$. In the small-$\alpha$ regime, the non-uniformity of weights has almost no impact, as the incentive to reduce the IPM using the weights is too small. In this experiment, the best results are attained for combination of a considerably larger value of $\alpha$ and small penalty on the non-uniformity of weights. In general, we do not observe any adverse effects of having a small value of $\lambda_w$. This is likely due to the choice of architecture for the learned weighting function already constraining the weights.

%
% REAL-WORLD
%
\subsection{Partially randomized study: National Supported Work program.}
\citet{lalonde1986evaluating} carried out a widely known experimental study of the effect of job training on future income and employment status based on the National Supported Work (NSW) program. Later, \citet{smith2005does} combined the LaLonde study with observational data to form a larger dataset which has been used frequently as a benchmark in the causal inference community. The presence of the randomized subgroup allows for straightforward estimation of average treatment effects and policy value.

The original study by \citet{smith2005does} includes 8 covariates such as age and education, as well as previous earnings. The treatment indicates participation in the NSW job training program. By construction, all treated subjects belong to the LaLonde experimental cohort; the observational cohort includes only controls. Additionally, the nature of the observational cohort is such that overlap is minimal at best---the experimental cohort may be separated from the observational using a linear classifier with 96\% accuracy. This means that global estimators of the control outcome applied to the treated, such as linear models or difference-in-means estimators of causal effects are likely to suffer severe bias if not re-weighted.

Based on the original outcome measuring yearly earnings at the end of the study, we construct a binary classification task called Jobs, in which the goal is to predict unemployment. Following \citet{dehejia2002propensity}, we use an expanded feature set that introduces interaction terms between some of the covariates. The task is based on the cohort used by \citet{smith2005does} which combines the LaLonde experimental sample (297 treated, 425 control) and the ``PSID'' comparison group (2490 control). There were 482 (15\%) subjects unemployed by the end of the study. In our experiments, we average results over 10 train/validation/test splits of the full cohort with ratios 56/24/20. We train CFR methods with uniform weighting, according to \eqref{eq:emp_loss}, selecting the imbalance parameter $\alpha$ according to held-out policy risk.

\paragraph{Results.}
\begin{table}[t!]
  \begin{center}
  \caption{\label{tbl:jobs_results}Policy risk and mean squared error in estimates of ATT on Jobs within-sample (left) and out-of-sample (right). Lower is better. $^\dagger$Not applicable. \vspace{1.0em}}
    \begin{tabular}{l|cc|cc}
  \toprule
  \multicolumn{1}{l}{} & \multicolumn{2}{c}{\bf{Within sample}} &
    \multicolumn{2}{c}{\bf{Out of sample}} \\
  \midrule
  & $\hat{R}_{\mbox{\tiny Pol}}$ & {\sc mse att} &  $\hat{R}_{\mbox{\tiny Pol}}$ & {\sc mse att} \\
  \midrule
  LR-S & $0.22 \pm 0.00$ & $0.01 \pm 0.00$ & $0.23 \pm 0.02$ & $0.08 \pm 0.04$\\
  LR-T  & $0.21 \pm 0.00$ & $0.01 \pm 0.01$ & $0.24 \pm 0.01$ & $0.08 \pm 0.03$\\
  BLR  & $0.22 \pm 0.01$ & $0.01 \pm 0.01$ & $0.25 \pm  .02$ & $0.08 \pm 0.03$\\
  $k$-NN & $0.02 \pm 0.00$ & $0.21 \pm 0.01$ & $0.26 \pm 0.02$ & $0.13 \pm 0.05$ \\
  TMLE  & $0.22 \pm 0.00$ & $0.02 \pm 0.01$ & $\dagger$ & $\dagger$ \\
  BART  & $0.23 \pm 0.00$ & $0.02 \pm 0.00$ & $0.25 \pm 0.02$ & $0.08 \pm 0.03$\\
  R.For.  & $0.23 \pm 0.01$ & $0.03 \pm 0.01$ & $0.28 \pm 0.02$ & $0.09 \pm 0.04$ \\
  C.For.  & $0.19 \pm 0.00$ & $0.03 \pm 0.01$ & $0.20 \pm 0.02$ & $0.07 \pm 0.03$ \\
  BNN  & $0.20 \pm 0.01$ & $0.04 \pm 0.01$ & $0.24 \pm 0.02$ & $0.09 \pm 0.04$\\
  TARNet & $0.17 \pm 0.01$ & $0.05 \pm 0.02$ & $0.21 \pm 0.01$ & $0.11 \pm 0.04$\\
  CFR$_{\mbox{MMD}}$ & $0.18 \pm 0.00$ & $0.04 \pm 0.01$ & $0.21 \pm 0.01$ & $0.08 \pm 0.03$  \\
  CFR$_{\mbox{Wass}}$ & $0.17 \pm 0.01$ & $0.04 \pm 0.01$ & $0.21 \pm 0.01$ &  $0.09 \pm 0.03$ \\
  \bottomrule
\end{tabular}

  \end{center}
\end{table}
\begin{figure}[t!]
  \centering
  \includegraphics[width=.5\textwidth]{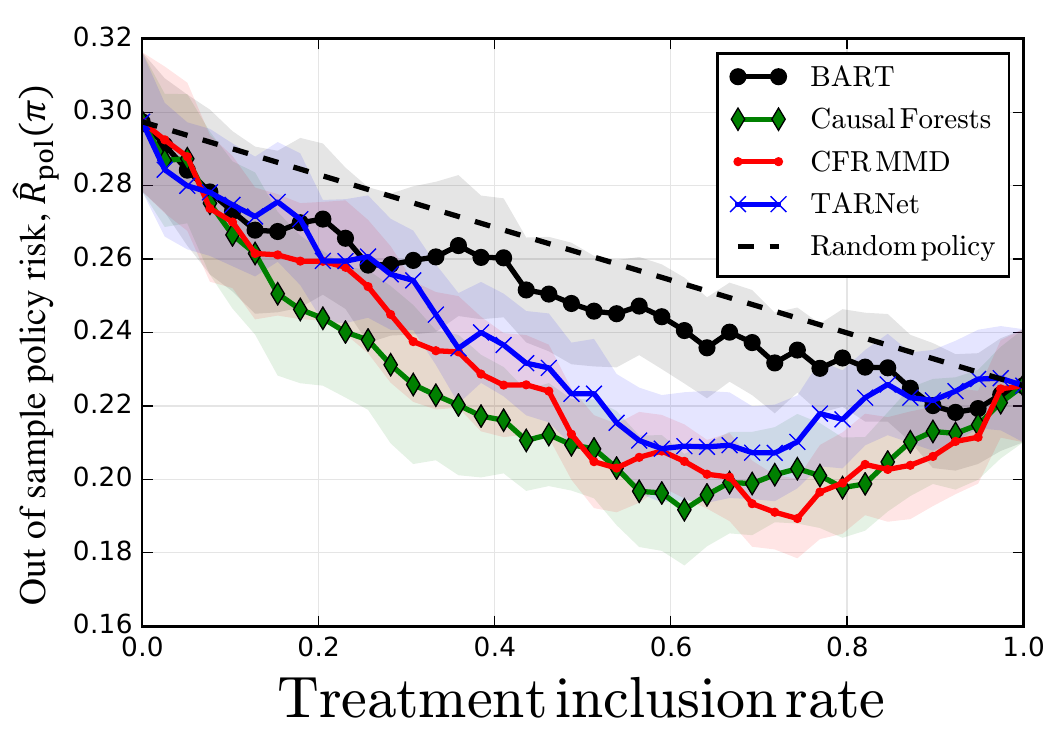}
  \caption{\label{fig:jobs_policy_curve}Policy risk as a function of treatment inclusion rate on Jobs. Lower is better. Subjects are included in treatment in order of their estimated treatment effect given by the various methods. CFR Wass is similar to CFR MMD and is omitted to avoid clutter}
\end{figure}
In Table~\ref{tbl:jobs_results}, we give the policy risk $\hat{R}_{\mbox{\tiny Pol}}$ evaluated over the randomized component of Jobs, as defined in \eqref{eq:policy_curve}, and mean squared error in the estimated average treatment effect on the treated ({\sc mse att}). The policy we consider in the table assigns treatment to the top subject for which the CATE is estimated to be positive ($\lambda=0$). We can see from the results that, despite the significant lack of overlap, the difference between linear and non-linear estimators is much less pronounced than for IHDP. This is likely partly due to the features used in the jobs dataset which have been handcrafted to predict the outcome of interest well. In contrast, the IHDP outcome is non-linear by construction.

We also see that straightforward logistic regression does remarkably well in estimating the ATT. However, being a linear model, logistic regression can only ascribe a uniform policy -- in this case, ``treat everyone''. The more nuanced policies offered by non-linear methods achieve lower policy risk, though this difference is less pronounced in the out of sample case, indicating that part of the difference may be due to overfitting. The nearest-neighbor estimator $k$-NN appears to perform incredibly well within-sample, but generalizes poorly to the hold-out. Additionally, its estimate of the ATT is the worst among the baselines.

In Figure~\ref{fig:jobs_policy_curve}, we plot policy risk as a function of treatment threshold $\lambda$, as defined in $\eqref{eq:lambda_pol_risk}$. This is described in the figure as varying the fraction of subjects treated in a policy that treats only the subjects with the largest estimated CATE. Overall, the benefits of imbalance regularization of the CFR models offer less advantage than on IHDP. This may be due to the smaller covariate set of Jobs containing less redundant features than those in IHDP. Recall that the IHDP outcome coefficients have 60\% sparsity in the feature set, by design. In contrast, the Jobs covariate set has been hand-picked to account for confounding. This means that one of the benefits of imbalance-regularizations of representations---to exclude variables only predictive of treatment---is likely to have a smaller effect in comparison.

%%%%%%%%%%%%%%%%%%%%%%%%%%%%%%%%%%%%%%%%%%%%%%%%%%%%%%%%%%%%%%%%%%%%%%%%%%%%%%%%%%%%
%
% DISCUSSION
%
%%%%%%%%%%%%%%%%%%%%%%%%%%%%%%%%%%%%%%%%%%%%%%%%%%%%%%%%%%%%%%%%%%%%%%%%%%%%%%%%%%%%

\section{Discussion}
\label{sec:discussion}
We have presented generalization bounds for estimation of potential outcomes and causal effects from observational data. These bounds were used to derive learning objectives for estimation algorithms that proved successful in empirical evaluation. The bounds do not rely on the so-called treatment group overlap (or positivity) assumption, common to most studies of causal effects from observational data. This assumption states that for any one observed subject, there is some probability that they were prescribed either treatment option. Removing this assumption means that we cannot identify the causal effect non-parametrically but, as we show in this work, we can still bound the expected error (risk) of any hypothesis in a given class.

When can we expect overlap to not hold yet identification to be possible? One example is when many of the covariates in the conditioning set $\cX$ have a strong effect on treatment but only a weak or non-existent effect on the outcome. For example, if some of the covariates in $\cX$ are actually instrumental variables, conditioning on them might substantially increase variance and reduce overlap, with no gain in estimating the CATE function \citep{brookhart2010confounding,shortreed2017outcome}. We conjecture that this might often be the case in high-dimensional cases: in aggregate, there might not be nominal overlap with respect to the measured covariates, while at the same time many of them are actually only weak confounders, or even not confounders at all; see also \citet{d2017overlap}.

Our results offer several new perspectives on causal effect estimation. In particular, they bring together two hitherto separate approaches to dealing with treatment group shift---representation learning and sample re-weighting---and give insight as to when either approach is likely to be more successful, and when should they be used together. It is well known that under the overlap and ignorability assumptions, ordinary risk minimization leads to consistent estimation of causal effects~\citep{pearl2009causality,ben2012hardness,alaa2018limits} in the limit of infinite samples, but the hardness of the problem is less well understood in the finite sample case, or when overlap is violated. Our results provide some insight in this setting.

{\blockedit
The bounds in this work are limited to binary treatment choices. A natural question to ask is how to generalize our results to settings with multiple discrete treatments or continuous ones. In the former case, our bounds on the risk in predicting potential outcomes can be trivially extended using a one-vs-all approach, evaluating the factual risk for subjects on a particular treatment and the counterfactual risk of subjects on any other treatment. However, the bounds for causal effects do not extend in the same way to this approach since they are aimed at evaluating the difference between two potential outcomes. An alternative for this purpose is to consider measuring the all-pairs distributional difference between treatment groups, but this is unlikely to be efficient in practice for large numbers of treatments. The right approach will depend on the causal estimand of interest. In real-world applications, comparing the value of treatment policies over a large number of treatments may be more important than comparing the effects of each pair of treatments. To this end, \citet{liu2018representation} applied ideas put forth in this work to off-policy policy evaluation. 
}

{\blockedit It is customary in machine learning to evaluate methodological progress based on performance on a small number of well-established benchmarks, such as MNIST~\citep{lecun1998gradient} or ImageNet~\citep{deng2009imagenet}. Similarly, IHDP has become a de facto benchmark for causal effect estimation~\citep{hill2011bayesian,shalit2016estimating,alaa2018limits,shi2019adapting,curth2021doing}. However, IHDP is smaller than most machine learning benchmarks and susceptible to ``test set overfitting''. Even disregarding the size issue, it may be argued that benchmarks for causal effect estimation are more prone to going stale as the strong assumptions we make (or synthesize) need not hold in the tasks we wish to apply our models to \citep{curth2021doing}. Moreover, the relatively simple form of the outcome model, the small dimensionality, and the structured fully observed nature of the data makes IHDP a much easier challenge than what we may face in for example analysis of electronic healthcare records. The Jobs dataset, on the other hand, offers realistic data but suffers from small sample size and a small number of measured covariates. Towards understanding the behavior of different estimators, datasets like IHDP and Jobs are not necessarily representative samples of the problems we may encounter in applications. However, we believe that the least a proposed method should show is that under controlled, sometimes idealized conditions, it performs on par with similar methods \citep{dorie2019rejoinder}. Indeed, as \citet{curth2021doing} advocate, we can see that among a class of similar neural network models our proposed methods perform well, and the addition of balancing terms improved performance; we might however set only limited stock in comparisons with methods which makes completely different assumptions about the data, such as Causal Forest \citep{wager2018estimation}.
Finally, we believe it is of utmost importance for the field as a whole to produce a larger set of benchmarks that reflect the diversity of real-world observational studies, and that the recent ACIC challenge~\citep{shimoni2018benchmarking} is a good step in this direction.  }

%%%%%%%%%%%%%%%%%%%%%%%%%%%%%%%%%%%%%%%%%%%%%%%%%%%%%%%%%%%%%%%%%%%%%%%%%%%%%%%%%%%%
%
% ACKNOWLEDGMENTS
%
%%%%%%%%%%%%%%%%%%%%%%%%%%%%%%%%%%%%%%%%%%%%%%%%%%%%%%%%%%%%%%%%%%%%%%%%%%%%%%%%%%%%

\subsection*{Acknowledgments}
We thank Ahmed Alaa, Shira Mitchell, Rajesh Ranganath, Alexander D’Amour, Zach Lipton, Jennifer Hill, Rahul Krishnan, Michael Oberst, Hunter Lang and  Christina X Ji for insightful feedback and discussions. The preparation of this manuscript was supported in part by Office of Naval Research Award Nos. N00014-17-1-2791 and N00014-21-1-2807, the MIT-IBM Watson AI Lab and the Wallenberg AI, Autonomous Systems and Software Program (WASP) funded by the Knut and Alice Wallenberg Foundation.

\vskip 0.2in
\bibliography{cate_jmlr}

\clearpage

%%%%%%%%%%%%%%%%%%%%%%%%%%%%%%%%%%%%%%%%%%%%%%%%%%%%%%%%%%%%%%%%%%%%%%%%%%%%%%%%%%%%
%
% APPENDIX
%
%%%%%%%%%%%%%%%%%%%%%%%%%%%%%%%%%%%%%%%%%%%%%%%%%%%%%%%%%%%%%%%%%%%%%%%%%%%%%%%%%%%%

\appendix

%
% PROOFS
%
\allowdisplaybreaks
\section{Proof of Theorem~\ref{thm:asymptotics}}
\label{app:asymptotics}
{\blockedit
We prove Theorem~\ref{thm:asymptotics}, first repeating the original statement below. Recall that $\cO_t(h_t, \Phi, w^t; \beta)$ is the weighted learning objective \eqref{eq:emp_loss_w} and that $R(f_t)$ is the marginal risk for predicting a single potential outcome for the whole population, as in \eqref{eq:marg_risk}, 
$$
R(f_t) = \E_{X,Y}[L(Y(t), f_t(x))]~.
$$

%
% THEOREM
%
\begin{mdframed}[innerbottommargin=.8em,innertopmargin=0em]%
\begin{reptheorem}{thm:asymptotics}
Suppose $\cH$ is a reproducing kernel Hilbert space (RKHS) given
by a bounded kernel $k$, such that for all $h_t\in \cG, \Phi\in \cE$, $h_t \circ \Phi \in \cH$.
Suppose weak overlap holds in that 
$$\forall t \in \{0,1\} : \E_X[(p_t(X)/p_{1-t}(X))^2] < \infty~.
$$
Then, with $\cO_t$ the objective defined in \eqref{eq:emp_loss_w}, \edit{with the IPM-space $\cL$ also an RKHS with bounded kernel $k_\cL$}, and $n_t = \sum_{i=1}^n\mathds{1}[t_i = t]$ for $t\in \{0,1\}$,
$$
\min_{h_t,\Phi,w^t}\cO_t(h_t, \Phi, w^t; \beta) \leq \min_{f_t \in \cH}R(f_t) + O_p(1/\sqrt{n_0}+1/\sqrt{n_1}) ~,
$$
\edit{where $O_p$ denotes stochastic boundedness.}
Thus, under the assumptions of Thm.~\ref{thm:main_rep}, for sufficiently large $\alpha$ and $\lambda_w$, with $\hat{f}^n$ the minimizer of \eqref{eq:emp_loss_w} for $n$ samples,
$$
R(\hat f_t^n)\leq \min_{f_t \in \cH}R(f_t) + O_p(1/n_0^{3/8}+1/n_1^{3/8}).
$$
In words, the minimizers of $\eqref{eq:emp_loss_w}$ converge to the representation and hypothesis that minimize the counterfactual risk, in the limit of infinite samples.
\end{reptheorem}
\end{mdframed}
}

{\blockedit

\begin{proof}
Let $f_t^*=\Phi^*\circ h_t^*\in\argmin_{f_t\in\cH}R(f_t)$
and let $w^{t,*}(x) \coloneqq p_{\Phi, 1-t}(\Phi^*(x))/p_{\Phi, t}(\Phi^*(x)) = p_{1-t}(x)/p_{t}(x)$ be the importance ratio between densities of treatment group $1-t$ and $t$. The second equality follows since $\Phi$ is assumed invertible.
Since $\min_{h_t,\Phi,w^t}\cO_t(h_t, \Phi, w^t; \beta)\leq \cO_t(h_t^*, \Phi^*, w^{t,*}; \beta)$,
it suffices to show that\footnote{Note the distinction between the objective $\cO$ and the $O$ of ``Big $O$'' notation. }
$$
\cO_t(h_t^*, \Phi^*, w^{t,*}; \beta) =  R(f_t^*) + O_p(1/\sqrt{n_0}+1/\sqrt{n_1})~,
$$ 
where $O_p$ denotes stochastic boundedness, defined as follows. $X_n = O_p(a_n)$ holds, i.e., $X_n/a_n$ is stochastically bounded, if there exist finite constants $M>0, N>0$, such that for every $\epsilon > 0$
$$
p(|X_n / a_n| > M) < \epsilon, \forall n > N~.
$$

We will work term by term with the learning objective \eqref{eq:emp_loss_w}, with $\beta=(\alpha, \lambda_w, \lambda_h)$, $\pi_t = p(T=t)$, and $\tilde{w}^t_i = \pi_t + \pi_{1-t} w^t_i$, 
\begin{align*}
& \cO_t(h_t, \Phi, w^t; \beta) = \underbrace{\sum_{i:t_i=t} \frac{\tilde{w}^t_i}{n_t} L(h_t(\Phi(x_i)), y_i)}_{\encircle{A}}  
+ \ \underbrace{\frac{\cR(h_t)}{\lambda_h^{-1} \sqrt{n_t}}}_{\encircle{B}} + \alpha\pi_{1-t} \underbrace{\ipm_{\cL}(\hat{p}_{\Phi, t}^{w^t}, \hat{p}_{\Phi, 1-t}^{\vphantom{w^t}})}_{\encircle{C}} + \underbrace{\frac{\|w^t\|_2}{\lambda^{-1}_{w} n_t}}_{\encircle{D}}.
\end{align*}
for the case $h_t \circ \Phi = f^*, w^t = w^{t,*}$.
For term $\encircle{D}$,
letting $w_i^{t,*}=w^{t,*}(x_i)$,
we have that by weak overlap
$$
\encircle{D}^2=\frac{1}{n_t}\times\frac{1}{n_t}\sum_{i:t_i=t}(w_i^{t,*})^2=O_p(1/n_t),
$$
so that $\encircle{D}=O_p(1/\sqrt{n})$. 

For term $\encircle{A}$, under ignorability, each term in the sum in the first term has expectation equal to the population marginal risk $R(f_t^*)$ over $p(X)$, since $p_t(x_i) \tilde{w}^{t,*}_i = p_t(x_i)(\pi_t + \pi_{1-t} w^{t,*}_i) = p(X=x_i)$. As a result, by weak overlap and bounded second moments of loss, we have $\encircle{A}=R(f_t^*)+O_p(1/\sqrt{n})$. For term $\encircle{B}$, since $h_t^*$ is fixed, we have deterministically that $\encircle{B}=O(1/\sqrt{n_t})$.

Finally, we address term $\encircle{C}$, which when expanded can be written as
$$\sup_{\|\ell\|_\cL \leq1} \left|\frac{1}{n_{1-t}} \sum_{i:t_i \neq t}\ell(\Phi^*(x_i))-\frac{1}{n_t} \sum_{i:t_i = t} w_i^{t,*} \ell(\Phi^*(x_i)) \right|.$$
Let $x'_i$ for $i=1,\dots, n_{1-t}$ and $x''_i$ for $i=1, \dots, n_t$ replicates of treatment group $1-t$, i.e., new ghost samples, drawn  from the treatment group opposite to $t$. Recall that $n=n_0+n_1$. Without loss of generality, assume that $t=1$ and that samples are ordered such that $t_i=0$ for $i\in \{1, ..., n_0\}$ and $t_i=1$ for $i\in \{n_0+1, ..., n\}$. Ghost samples are therefore drawn from $p_0(X)$. The case for $t=0$ follows immediately.  By Jensen's inequality,

\begin{align*}
\bbE[\encircle{C}^2] & =
\bbE\left[\sup_{\|\ell\|_\cL \leq 1} \left(\frac{1}{n_0} \sum_{i=1}^{n_0} \ell(\Phi^*(x_i)) - \frac{1}{n_1}\sum_{i=n_0+1}^{n} w_i^{1,*}\ell(\Phi^*(x_i) \right)^2 \right] \\
& =
\bbE\bigg[ \sup_{\|\ell\|_\cL  \leq 1} \bigg(\frac{1}{n_0} \sum_{i=1}^{n_0} (\ell(\Phi^*(x'_i)) - \bbE[\ell(\Phi^*(x'_i))]) \\
& \phantom{=} - \frac{1}{n_1} \sum_{i=n_0+1}^{n} (w_i^*\ell(\Phi^*(x_i)) - \bbE[\ell(\Phi^*(x''_i))]) \bigg)^2 \bigg] \\
& \leq
\bbE\bigg[ \sup_{\|\ell\|_\cL \leq 1}\bigg( \frac{1}{n_0} \sum_{i=1}^{n_0} (\ell(\Phi^*(x_i))-\ell(\Phi^*(x'_i))) \\
& \phantom{=} -\frac{1}{n_1} \sum_{i=n_0+1}^{n} (w_i^{1,*}\ell(\Phi^*(x_i))-\ell(\Phi^*(x''_i))) \bigg)^2 \bigg] \\
& \leq
2\bbE\bigg[ \sup_{\|\ell\|_\cL \leq1}\bigg( \frac{1}{n_0} \sum_{i=1}^{n_0} (\ell(\Phi^*(x_i)) - \ell(\Phi^*(x'_i))) \bigg)^2 \bigg] \\
& \phantom{=} + 2\bbE[\sup_{\|\ell\|_\cL \leq1}(\frac{1}{n_1} \sum_{i=n_0+1}^{n} (w_i^{1,*}\ell(\Phi^*(x_i))-\ell(\Phi^*(x''_i))) \bigg)^2 \bigg]
\end{align*}

Let $\xi_i(\ell)=\ell(\Phi^*(x_i))-\ell(\Phi^*(x'_i))$ and let $\zeta_i(h)=w_i^{1,*}\ell(\Phi^*(x_i))-\ell(\Phi^*(x''_i))$. Note that for every $\ell$, $\bbE[\zeta_i(\ell)]=\bbE[\xi_i(\ell)]=0.$ Moreover, \edit{since the RKHS $\cL$ has bounded kernel by assumption}, 
$$
\bbE[\|\zeta_i\|^2] \leq 4\E[k_\cL(\Phi^*(x'_i),\Phi^*(x'_i))]\leq M
$$
for some $M$. Similarly, $\bbE[\|\xi_i\|^2]\leq 2\E[(w_i^{1,*})^2]M+2M\leq M'<\infty$ because of weak overlap. Let $\zeta_i'$ for $i=1,\dots,n$ be i.i.d. replicates of $\zeta_i$ (ghost sample) and let $\epsilon_i$ be i.i.d. Rademacher random variables. Because $\cL$ is a Hilbert space, we have that $\sup_{\|\ell\|_\cL \leq 1}(A(\ell))^2=\|A\|^2=\left<A,A\right>$. Therefore, by Jensen's inequality,
\begin{align*}
& \bbE \bigg[ \sup_{\|\ell\|_\cL \leq1}\bigg( \frac{1}{n_1} \sum_{i=n_0+1}^n(w_i^{1,*}\ell(\Phi^*(x_i))-\ell(\Phi^*(x''_i))) \bigg)^2 \bigg] \\
& = \bbE \bigg[ \sup_{\|\ell\|_\cL \leq1}\bigg( \frac{1}{n_1} \sum_{i=n_0+1}^n\zeta_i(\ell) \bigg)^2 \bigg] \\
& = \bbE \bigg[ \sup_{\|\ell\|_\cL \leq1}\bigg( \frac{1}{n_1} \sum_{i=n_0+1}^n(\zeta_i(\ell)-\bbE[\zeta'_i(\ell)]) \bigg)^2 \bigg] \\
& \leq \bbE \bigg[ \sup_{\|\ell\|_\cL \leq1}\bigg( \frac{1}{n_1} \sum_{i=n_0+1}^n(\zeta_i(\ell)-\zeta'_i(\ell)) \bigg)^2 \bigg] \\
& = \bbE \bigg[ \sup_{\|\ell\|_\cL \leq1}\bigg( \frac{1}{n_1} \sum_{i=n_0+1}^n\epsilon_i(\zeta_i(\ell)-\zeta'_i(\ell)) \bigg)^2 \bigg] \\
& \leq \frac4{n_1^2}\bbE \bigg[ \sup_{\|\ell\|_\cL \leq1}\bigg( \sum_{i=n_0+1}^n\epsilon_i\zeta_i(\ell) \bigg)^2 \bigg] \\
& = \frac4{n_1^2}\bbE \bigg[ \|\sum_{i=n_0}^n\epsilon_i\zeta_i\|^2 \bigg] \\
% & =
%\mbox{because we're in a Hilbert space so norm is given by inner prod}
% \\
& = \frac4{n_1^2}\bbE\bigg[ \sum_{i,j=n_0+1}^n\epsilon_i\epsilon_j\left<\zeta_i,\zeta_j\right> \bigg] \\
% & =
%\mbox{because the cross terms cancel, aka parallelogram law}
% \\
& = \frac4{n_1^2}\bbE[\sum_{i=n_0+1}^n\|\zeta_i\|^2] \\
& = \frac4{n_1^2}\sum_{i=n_0+1}^n \bbE[\|\zeta_i\|^2] \\
& \leq \frac{4M'}{n_1}
\end{align*}
An analogous argument can be made of $\xi_i$'s, showing that $\bbE[\encircle{C}^2]=O(1/n)$ and hence $\encircle{C}=O(1/\sqrt{n})$ by Markov's inequality. The final result follows from Theorem~\ref{thm:main_rep}.
\end{proof}

}% end edit

{\blockedit
We may generalize Theorem~\ref{thm:asymptotics} to bound the risk in predicting the outcome \edit{$Y = Y(T)$} in expectation over a treatment policy $p_\pi(T\mid X)$ based on observations from a policy $p_\mu(T\mid X)$. The risk in predicting a single fixed potential outcome $t$ follows as a special \edit{(deterministic)} case of $p_\pi(T=t \mid X) = 1$. With this in mind, let

$$
R_\pi(f) = \E_X[\E_{T\mid X \sim p_\pi}[\ell_{f}(X, T)]]~,
$$

where $\ell_{f}(x, t) = \E[L(f(x, t), Y(t)) \mid X=x, T=t]$.
As previously, we consider hypotheses $f(x, t) = h(\Phi(x), t)$ for functions $h\in \cF$ and embeddings $\Phi \in \cE \subset \{ \cX \rightarrow \cZ\}$ for some embedding space $\cZ \subset \bbR^v$.

\begin{thmcol}\label{col:asymptotics_pol}
Suppose $\mathcal H$ is a reproducing kernel Hilbert space given by a bounded kernel. Further suppose that weak overlap holds in that
$\mathbb E[(p_\pi(x,t)/p_\mu(x,t))^2] < \infty$. Assume that $n$ labeled samples $\{(x_i, t_i, y_i)\}_{i=1}^n \sim p_\mu$ and $m$ unlabeled samples $\{(x_i, t_i)\}_{i=n+1}^{m+n} \sim p_\pi$ are available.
Then,
$$
\min_{h,\Phi,w}\cO_\pi(h, \Phi, w; \beta) \leq \min_{f\in\cF}R_\pi(f) + O(1/\sqrt{n}+1/\sqrt{m}) ~.
$$
\end{thmcol}
Corollary~\ref{col:asymptotics_pol} may be proven using an anologous argument to that of Theorem~\ref{thm:asymptotics}.
}

\section{Experiment details}
\label{app:exp}
See Table~\ref{tbl:hypparams} for a description of hyperparameters and search ranges for TARNet, CFR Wass and CFR MMD.

\begin{table}[t!]
  \caption{\label{tbl:hypparams}Hyperparameters and ranges.}
  \begin{center}
      \begin{tabular}{ll}
        Parameter & Range \\ \hline
        Imbalance parameter, $\alpha$ & $\{10^{k/2}\}_{k=-10}^6$ \\
        Num. of representation layers & $\{1,2,3\}$ \\
        Num. of hypothesis layers & $\{1,2,3\}$ \\
        Dim. of representation layers & $\{20, 50, 100, 200\}$ \\
        Dim. of hypothesis layers & $\{20, 50, 100, 200\}$ \\
        Batch size & $\{100, 200, 500, 700\}$ \\
        \hline
      \end{tabular}
    \end{center}
\end{table}

\section{Architecture for joint learning of sample weights}
\label{app:rcfr}
For an illustration of the re-weighed CFR estimator, see Figure~\ref{fig:rcfr}.
\begin{figure}
  \centering
  \includegraphics[height=.3\textwidth]{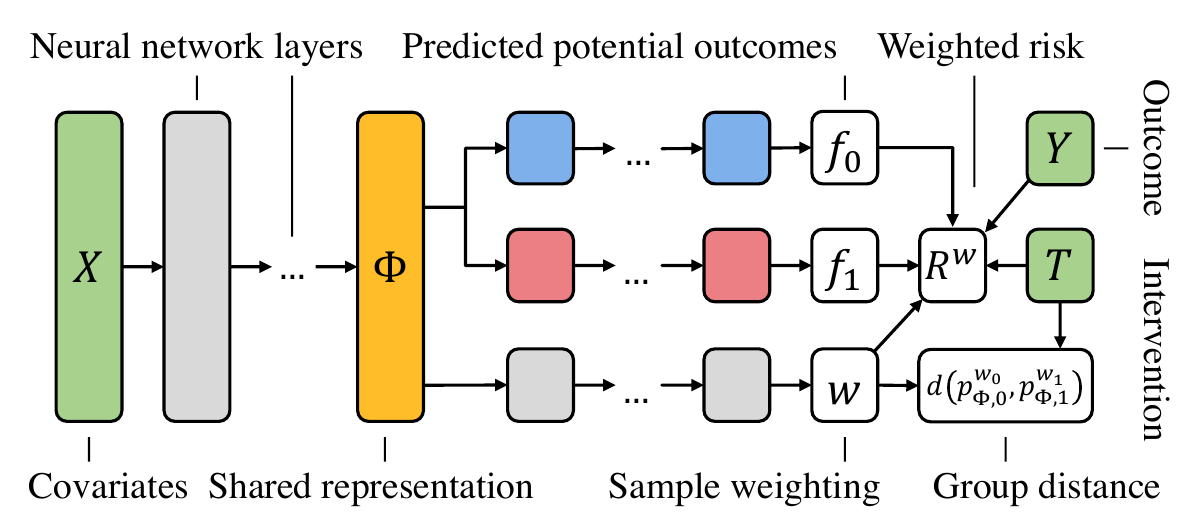}
  \caption{\label{fig:rcfr}Illustration of the Re-weighted Counterfactual Regression (RCFR) estimator. Green boxes indicate inputs, white boxes outputs and loss terms, yellow boxes shared representations and blue/red boxes estimators of potential outcomes. Solid lines indicate transformation part of the pr
  ediction function and dashed lines indicate computations part of the learning procedure.}
\end{figure}

%
% WASSERSTEIN
%
\section{Minimization of approximate Wasserstein distances}
\label{app:wasserstein}
Computing (and minimizing) the Wasserstein distance traditionally involves solving a linear program, which may be prohibitively expensive for many practical applications. \citet{cuturi2013sinkhorn} showed that introducing entropic regularization in the optimization problem results in an approximation computable through the Sinkhorn-Knopp matrix scaling algorithm, at orders of magnitude faster speed. The approximation, called Sinkhorn distances, is computed using a fixed-point iteration involving repeated multiplication with a kernel matrix $K$. We use the algorithm of \citet{cuturi2013sinkhorn} in our framework by differentiating through the iterations. See Algorithm~\ref{alg:wassgrad} for an overview of how to compute the gradient $g_1$ in Algorithm~\ref{eq:emp_loss}. When computing $g_1$, disregarding the gradient $\nabla_{\bf W} T^*$ amounts to minimizing an upper bound on the Sinkhorn transport. More advanced ideas for stochastic optimization of this distance have recently proposed by \citet{aude2016stochastic}, and might be used in future work.

\begin{algorithm}[tbp]
\caption{Computing the stochastic gradient of the Wasserstein distance}
\label{alg:wassgrad}
\begin{algorithmic}[1]
  \STATE \textbf{Input:} Factual  $(x_1,t_1,y_1), \ldots , (x_n,t_n,y_n)$, representation network $\Phi_{\bf{W}}$ with current weights by $\bf{W}$
  \STATE Randomly sample a mini-batch with $m$ treated and $m'$ control units $(x_{i_1},0,y_{i_1}), \ldots , $\\
  $(x_{i_m},0,y_{i_m}),  (x_{i_{m+1}},1,y_{i_{m+1}}), \ldots , (x_{i_{2m}},1,y_{i_{2m}}) $
   \STATE Calculate the $m \times m$ pairwise distance matrix between all treatment and control pairs $M(\Phi_{\bf{W}})$: \\
   $M_{kl}(\Phi) = \|\Phi_{\bf{W}}(x_{i_k}) - \Phi_{\bf{W}}(x_{i_{m+l}})\|$
    \STATE Calculate the approximate optimal transport matrix $T^*$ using Algorithm 3 of \citet{cuturi2014fast}, with input $M(\Phi_{\bf{W}})$
  \STATE Calculate the gradient:\\ $g_1 = \nabla_{\bf{W}} \left< T^*,M(\Phi_{\bf{W}})\right>$
  \vspace{0.2em}
\end{algorithmic}
\end{algorithm}

While our framework is agnostic to the parameterization of $\Phi$, our experiments focus on the case where $\Phi$ is a neural network. For convenience of implementation, we may represent the fixed-point iterations of the Sinkhorn algorithm as a recurrent neural network, where the states $u_t$ evolve according to
$$
u_{t+1} = n_t ./ (n_c K (1./(u_t^\top K)^\top))~.
$$
Here, $K$ is a kernel matrix corresponding to a metric such as the euclidean distance, $K_{ij} = e^{-\lambda\|\Phi(x_i) - \Phi(x_j)\|_2}$, and $n_c, n_t$ are the sizes of the control and treatment groups. In this way, we can minimize our entire objective with most of the frameworks commonly used for training neural networks, out of the box.

\end{document}